\documentclass[10pt]{article}
\usepackage[utf8]{inputenc}
\usepackage[T1]{fontenc}
\usepackage{amsmath,amssymb,amsthm}
\usepackage[tt=false,type1=true]{libertine}
\usepackage[libertine]{newtxmath}
\usepackage{foserl_preprint}

\usepackage{amsmath,amsfonts,bm}

\def\eqref#1{equation~\ref{#1}}

\def\1{\bm{1}}

\DeclareMathAlphabet{\mathsfit}{\encodingdefault}{\sfdefault}{m}{sl}
\SetMathAlphabet{\mathsfit}{bold}{\encodingdefault}{\sfdefault}{bx}{n}

\newcommand{\E}{\mathbb{E}}

\newcommand{\R}{\mathbb{R}}

\usepackage{url}
\usepackage{booktabs}

\usepackage{graphicx}
\usepackage[dvipsnames]{xcolor}
\usepackage{subcaption}
\usepackage{enumitem}
\usepackage[most]{tcolorbox}
\usepackage{longtable}
\usepackage[textsize=tiny]{todonotes}
\usepackage{hyperref}
\usepackage{tabularx}
\newcolumntype{L}[1]{>{\raggedright\arraybackslash}p{#1}}
\usepackage{array}
\usepackage{pifont}
\usepackage{makecell}
\usepackage{multirow}
\usepackage{tikz}
\usepackage{comment}
\usepackage{colortbl}
\usepackage{placeins}   
\usepackage{wrapfig}
\newcounter{boxedalgorithm}
\renewcommand{\theboxedalgorithm}{\arabic{boxedalgorithm}}
\usetikzlibrary{arrows.meta}

\usetikzlibrary{
    arrows.meta,
    positioning,
    calc,
    decorations.pathreplacing,
    fit,
    backgrounds
}

\graphicspath{{figures/}}

\newtcolorbox{sarabox}[1]{
  colback=magenta!4,
  colframe=magenta!60!black,
  title={\textbf{Placeholder for Sara --- #1}},
  fonttitle=\small\bfseries
}

\newtcolorbox{todobox}[1]{
  colback=orange!6,
  colframe=orange!70!black,
  title={\textbf{#1}},
  fonttitle=\small\bfseries
}
\newcommand{\cS}{\mathcal{S}}
\newcommand{\cA}{\mathcal{A}}
\newcommand{\cX}{\mathcal{X}}
\newcommand{\cW}{\mathcal{W}}
\newcommand{\cG}{\mathcal{G}}

\newcommand{\Tatk}{\mathcal{T}_{\mathrm{atk}}}
\newcommand{\Aadv}{\mathcal{A}_{\mathrm{adv}}}

\newcommand{\Dtraj}{D_{\mathrm{traj}}}

\newcommand{\xhat}{\hat{x}}
\newcommand{\ubar}{\bar{u}}
\newcommand{\N}{\mathbb{N}}
\newcommand{\T}{\mathbb{T}}

\newcommand{\Prob}{\mathbb{P}}
\newcommand{\QQ}{\mathbb{Q}}
\newcommand{\UU}{\mathcal{U}}
\newcommand{\Bmax}{B_{\max}}

\newcommand{\hd}[1]{\rotatebox{90}{\textbf{#1}\,}}
\newcommand{\Y}{\ding{51}}
\newcommand{\No}{\ding{55}}
\newcommand{\Part}{\textcolor{gray}{$\circ$}}

\theoremstyle{definition}
\newtheorem{definition}{Definition}
\newtheorem{problem}{Problem}
\newtheorem{assumption}{Assumption}
\theoremstyle{plain}
\newtheorem{theorem}{Theorem}
\newtheorem{lemma}{Lemma}

\theoremstyle{remark}

\definecolor{NewBlue}{RGB}{20, 150, 200}
\definecolor{winered}{RGB}{150, 0, 50}
\definecolor{FoSeRLRosFill}{RGB}{247,237,241}
\definecolor{FoSeRLGray}{RGB}{120,120,120}
\definecolor{FoSeRLTableBlue}{RGB}{228,240,248}
\definecolor{PSZeroFill}{RGB}{252,232,240}
\definecolor{FoSeRLGoodFill}{RGB}{224,243,241}
\hypersetup{
    colorlinks=true,
    linkcolor=RoyalBlue,
citecolor=FoSeRLGray,
    urlcolor=lightgray,
}

\newtcolorbox{algobox}[1]{
  enhanced,
  sharp corners,
  boxrule=0.45pt,
  colframe=black!75,
  colback=FoSeRLGray!5,
  colbacktitle=FoSeRLRosFill,
  coltitle=black,
  fonttitle=\scriptsize\bfseries,
  fontupper=\scriptsize,
  title={#1},
  left=4pt,
  right=4pt,
  top=4pt,
  bottom=4pt,
  toptitle=3pt,
  bottomtitle=3pt,
  before skip=0pt,
  after skip=0pt
}
\title{FoSeRL: Formal Sequential Robustness Certification for Reinforcement Learning Policies}

\author{%
Sara Taheri$^{1}$ \quad
Deep Kumar Ganguly$^{2}$ \quad
Jan Křetínský$^{2,3}$ \quad
Majid Zamani$^{1,4}$\\[0.35em]
\small
$^{1}$LMU Munich, Germany \quad
$^{2}$TUM, Munich, Germany \quad
$^{3}$Masaryk University, Brno, Czech Republic \quad
$^{4}$CU Boulder, USA
}
\date{}

\begin{document}
\maketitle

%
\begin{abstract}
Even a few action perturbations can substantially degrade the performance of a deployed decision policy. Certifying the resulting return loss is challenging in stochastic environments, where returns vary even without an attack. We introduce FoSeRL, a framework for certifying deployed RL policies against precommitted, temporally sparse action attacks. The deployed policy is unchanged, with no smoothing or retraining. Certification requires a resettable simulator supporting shared randomness and independent one-step successor queries, but no analytical dynamics model. FoSeRL certifies that an attacked episode loses no more than a prescribed amount of return relative to the same episode unattacked, with at least a target probability and at a user-specified confidence level. Both runs share the initial state and randomness, so the measured loss reflects the attack, not the episode; carrying the running return gap as a state coordinate makes it the terminal value, reducing trajectory-level certification to terminal safety. Time-dependent barrier conditions on the augmented state bound the terminal failure probability: satisfied exactly, they certify every admissible precommitted attack; learned from sampled trajectories and verified on held-out data, they certify the same guarantee under a specified attack-episode setting. Across six stochastic continuous-control environments and three RL policy families (TD3, SAC, and PPO), FoSeRL certifies non-trivial cardinality--magnitude robustness frontiers, achieves substantially larger certified budgets than policy smoothing, and reveals marked robustness differences among policies with comparable nominal performance. 
\end{abstract}

\section{Introduction}
\label{sec:intro}
Reinforcement learning (RL) policies have demonstrated control of
physical systems, including tokamak plasmas
\citep{degrave2022magnetic}, stratospheric balloons
\citep{bellemare2020autonomous}, and racing drones
\citep{kaufmann2023champion}. Once such a policy is deployed, its decisions do not reach the plant directly:
each command travels through communication links and actuation software before
an actuator executes it. Every stage of that path is a place where an adversary
with modest resources can intervene, as end-to-end attacks on production
automotive control systems have shown
\citep{koscher2010experimental,checkoway2011comprehensive}. 
The executed action can therefore differ from the command computed by the
policy, even though the deployed policy itself remains unchanged. Thus, we study deployment-time perturbations of executed actions, with the decision
policy held fixed.

The effect of such perturbations depends on their magnitude and, just as much, on
their timing. On five Atari games, \citet{lin2017tactics} found that perturbing
roughly a quarter of the time steps reduced return about as much as perturbing
every step. The reason is that the closed loop is a dynamical system: a perturbed
action changes the state from which all later decisions are made, so its effect
persists after the perturbation itself has ended. An adversary with a limited
budget can therefore concentrate it where the loop is most sensitive instead of
spreading it uniformly, and the same holds when the perturbation is injected into
the executed action rather than the observation \citep{lee2020spatiotemporal}.
Two quantities govern what such an adversary can do: the number of perturbed time
steps, $K$, and a bound $b$ on the size of each perturbation.
Certifying a deployed policy therefore means controlling their cumulative effect
on return.
In a stochastic environment, the nominal return already varies from episode to
episode, so the baseline against which loss is measured becomes part of the
certification problem. Comparing an attacked run to the policy's average return
would charge that variation to the adversary. We therefore compare each attacked
execution with the run the same episode would have produced unattacked, using the
same initial state and the same environment randomness, and measure the loss
between the two. The attack is then the only difference between them, while the
randomness that shapes how an attack unfolds is preserved rather than averaged
away. Our objective is to identify the attack budgets under which this loss stays
within a prescribed tolerance with a target probability.


\begin{wrapfigure}{r}{0.34\textwidth}
\centering

\begin{tcolorbox}[
    enhanced,
    sharp corners,
    boxrule=0.45pt,
    colframe=black!75,
    colback=FoSeRLGray!5,
    boxsep=0pt,
    left=4pt,
    right=4pt,
    top=4pt,
    bottom=4pt,
    before skip=0pt,
    after skip=0pt
]

\resizebox{\linewidth}{!}{%
\begin{tikzpicture}[
    font=\scriptsize,
    arr/.style={-{Latex[length=1.5mm]}, line width=0.5pt}
]

\draw[arr] (0,0) -- (5.8,0) node[right] {$t$};
\draw[arr] (0,-0.15) -- (0,1.55) node[above] {state};

\draw[thick,RoyalBlue]
plot[smooth] coordinates {
(0,0.45) (0.4,0.70) (0.8,0.55) (1.2,0.78) (1.6,0.62)
(2.0,0.82) (2.4,0.72) (2.8,0.90) (3.2,0.78) (3.6,0.96)
(4.0,0.86) (4.4,1.00) (4.8,0.92) (5.2,1.05)
};

\draw[thick,WildStrawberry]
plot[smooth] coordinates {
(1.6,0.62) (2.0,0.60) (2.4,0.38) (2.8,0.48) (3.2,0.30)
(3.6,0.40) (4.0,0.22) (4.4,0.32) (4.8,0.18) (5.2,0.10)
};

\foreach \x in {1.6,3.0} {
    \draw[WildStrawberry,dashed] (\x,-0.10) -- (\x,1.22);
}

\node[WildStrawberry] at (1.6,1.33) {\textbf{attack}};
\node[WildStrawberry] at (3.0,1.33) {\textbf{attack}};

\node[RoyalBlue,anchor=west] at (5.2,0.86) {\textbf{Nominal}};
\node[WildStrawberry,anchor=west] at (5.2,0.35) {\textbf{Attacked}};

\node[FoSeRLGray,align=center] at (2.9,-0.30)
{shared initial condition and episode randomness};

\begin{scope}[yshift=-2.75cm]

\fill[FoSeRLRosFill] (0,0.95) rectangle (5.5,1.60);

\draw[arr] (0,0) -- (5.8,0) node[right] {$t$};
\draw[arr] (0,-0.15) -- (0,1.70) node[above,align=center] {return \\gap};

\draw[dashed, FoSeRLGray] (0,0.95) -- (5.5,0.95);
\node[left] at (0,0.95) {$\alpha$};

\node[WildStrawberry,anchor=west] at (0.18,1.28)
{\textbf{Beyond tolerance}};

\node[teal,anchor=west] at (0.18,0.38)
{\textbf{Within tolerance}};

\draw[thick,black]
plot[smooth] coordinates {
(0,0) (0.8,0) (1.6,0) (2.0,0.10) (2.4,0.28)
(2.8,0.36) (3.2,0.55) (3.6,0.70) (4.0,0.80)
(4.4,0.88) (4.8,0.92) (5.2,0.94)
};

\fill (5.2,0.94) circle (1.6pt);

\node[anchor=west] at (4.28,0.56)
{\textbf{final degradation}};

\foreach \x in {1.6,3.0} {
    \draw[WildStrawberry,dashed] (\x,-0.10) -- (\x,1.58);
}

\node[FoSeRLGray,align=center] at (2.75,-0.33)
{accumulated discounted reward difference};

\end{scope}
\end{tikzpicture}%
}

\end{tcolorbox}

\vspace{0.1cm}

\caption{\footnotesize
Matched nominal and attacked executions share episode randomness;
their accumulated return gap becomes the terminal degradation
certified by FoSeRL.
}
\label{fig:foserl-overview}

\vspace{-2em}
\end{wrapfigure}
\vspace{-0.3cm}
\paragraph{Contributions.} We introduce \textsc{FoSeRL}, a post-hoc framework for certifying fixed deployed policies against temporally sparse action perturbations. Our contributions are threefold. \textbf{\textit{(i)}} We formulate sequential robustness under joint cardinality--magnitude budgets and define a paired return-degradation criterion using synchronously coupled nominal and attacked executions, isolating attack-induced loss from episode variability. \textbf{\textit{(ii)}} We lift the cumulative return gap to a terminal quantity of an augmented stochastic system and derive time-dependent barrier conditions for its tail reliability (Fig.~\ref{fig:foserl-overview}). \textbf{\textit{(iii)}} We learn neural barrier certificates from sampled rollouts and verify them using finite-sample successor bounds and episode-level conformal calibration, yielding high-confidence operational certificates under a declared attack--episode law. FoSeRL leaves the deployed policy unchanged and requires no analytical dynamics model. Across six stochastic continuous-control environments and TD3, SAC, and PPO policies, it yields non-trivial robustness frontiers and substantially larger certified
budgets than the baseline.

\paragraph{Related work.}
Temporally sparse attacks are well established empirically
\citep{lin2017tactics,sun2020stealthy}, including on the action channel
\citep{lee2020spatiotemporal}, and defences against them harden a policy by
retraining \citep{pinto2017robust,tessler2019action,liang2022wocar} without
emitting a certificate. Certified robustness for RL does emit one, but of a
different shape. Smoothing-based methods certify actions or cumulative reward
under norm-bounded perturbations
\citep{kumar2022policy,wu2022crop,mu2024reward}; certified state-action bounds
support robust action selection \citep{everett2021certifiable}; and CAROL learns
a policy jointly with a model-based robustness certificate
\citep{yang2024carol}. Each bounds perturbations under a norm ball, typically
guarantees an expected return, and certifies a smoothed, retrained, or
robust-argmax surrogate rather than the policy as deployed. Stochastic and neural
barrier certificates supply the machinery we build on
\citep{prajna2007framework,steinhardt2012finite,abate2021fossil,anand2023formally,luo2021crabs},
but bound state-safety events rather than adversarial performance loss; the
framework closest to ours certifies a supervised learner against data poisoning
by modeling training as a stochastic dynamical system
\citep{taheri2026probabilistic}, and has no time axis for an attacker to exploit.
Attacks in the literature are budgeted along two axes: how often the adversary
strikes and how hard, but existing certificates fix one or the other: they either
bound perturbations at every step or count corruptions without bounding their
size. The cardinality--magnitude trade-off therefore appears throughout the
\emph{attack} literature and in no prior \emph{certificate}. FoSeRL is the first
to certify it, and the first to do so for the policy exactly as deployed, against
perturbations of its executed actions, as a tail bound on \emph{paired}
degradation. Table~\ref{tab:related} in Appendix~\ref{app:rel}
places each method on these axes.

\section{Problem Formulation}
\label{sec:setup}

All proofs and a summary of notation are provided in the appendix.
We consider a finite-horizon Markov decision process
$\mathcal M\!=\!(\cS,\cA,P,r,\gamma,\mu_0,T)$, where
$\cS\!\subseteq\!\R^{n_s}$ and $\cA\!\subseteq\!\R^{n_a}$ are the
state and action spaces, $\cA$ is closed and convex,
$P(\cdot\mid x,u)$ is the transition kernel,
$r:\cS\times\cA\!\to\!\R$ is measurable,
$\gamma\!\in\!(0,1]$, and $x(0)\!\sim\!\mu_0$.
We use $T\!\in\!\N$, $\T\!:=\!\{0,\ldots,T\}$, and
$\T'\!:=\!\{0,\ldots,T-1\}$.
Fix a deployed decision policy $\pi$, held unchanged throughout
certification, and let $u(t)\!\in\!\cA$ denote its realized action.
Its execution is specified in Assumption~\ref{ass:noise}; no
restriction on internal policy randomization is imposed.
The state contains all variables required for the closed-loop Markov
property, and conditional on the current state and executed action,
the successor has law $P(\cdot\mid x(t),u(t))$.
For a nominal state--action trajectory $\xi$, define
$J(\pi;\xi)\!:=\!\sum_{t\in\T'}\gamma^t r(x(t),u(t))$ and
$J(\pi)\!:=\!\E[J(\pi;\xi)]$.
All certification statements condition on the deployed policy;
randomness used to train it is not resampled during execution.

\begin{assumption}[Bounded rewards]
\label{ass:bounded}
The reward satisfies $|r(x,u)|\!\le\!r_{\max}$ for all
$(x,u)\!\in\!\cS\times\cA$. Hence every realized discounted return
lies in $[-S_{\max},S_{\max}]$, where
$S_{\max}\!:=\!r_{\max}\sum_{t\in\T'}\gamma^t$.
\end{assumption}

We next formalize the deployment-time adversary acting on the
policy's executed actions.

\begin{definition}[Oblivious sequential attack]
\label{def:attack}
Fix $K\!\in\!\T$, $b\!\ge\!0$, and
$p\!\in\!\{1,2,\infty\}$.
An attack scenario is $\upsilon\!=\!(\Tatk,\Delta)$, where
$\Tatk\!\subseteq\!\T'$ contains the attacked time indices and
$\Delta\!=\!\{\Delta_t\}_{t\in\Tatk}$ the corresponding
perturbations $\Delta_t\!\in\!\R^{n_a}$.
Both are fixed before the initial state and episode randomness are
realized. We set $\Delta_t\!=\!0$ for $t\!\notin\!\Tatk$.
The admissible class $\Aadv(K,b)$ consists of all such scenarios with
$|\Tatk|\!\le\!K$ and $\|\Delta_t\|_p\!\le\!b$ for every
$t\!\in\!\Tatk$.
\end{definition}

To compare an attack with the nominal execution of the \emph{same} episode, we make all fresh closed-loop randomness explicit.

\begin{assumption}[Execution randomness representation]
\label{ass:noise}
There exist measurable execution maps
$\pi:\cS\times\cW\!\to\!\cA$ and
$F:\cS\times\cA\times\cW\!\to\!\cS$, and an i.i.d.\ sequence
$w(t)\!\sim\!P_w$, independent of $x(0)$.
Each $w(t)$ collects all fresh one-step policy and transition
randomness under the declared joint law. The fixed policy generates
$u(t)\!=\!\pi(x(t),w(t))$, and, given the current state and executed
action $\bar u(t)$,
$F(x(t),\bar u(t),w(t))$ has law
$P(\cdot\mid x(t),\bar u(t))$.
A policy without internal randomization ignores the corresponding
inputs; persistent cross-step dependence is represented in the state.
\end{assumption}

The next definition uses this representation to construct matched nominal and attacked executions.

\begin{definition}[Paired degradation via synchronous coupling]
\label{def:coupling}
Fix $\upsilon\!\in\!\Aadv(K,b)$ and let
$\omega\!:=\!(x(0),\{w(t)\}_{t\in\T'})\!\sim\!\Prob_\omega$
denote the episode randomness.
Both branches share $x(0)$ and the complete random-input sequence. With $\Pi_{\cA}$ denoting Euclidean projection onto $\cA$,
set
$u^0(t)\!:=\!\pi(x^0(t),w(t))$ and
$\ubar^a(t;\upsilon)\!:=\!
\Pi_{\cA}\!\bigl(\pi(x^a(t),w(t))+\Delta_t\bigr)$.
At each step, transition and reward use the same realized action. The magnitude budget applies to $\Delta_t$ before
projection; admissibility details are given in
Appendix~\ref{app:attack-admissibility}.
For $t\!\in\!\T'$,
$x^0(t{+}1)\!=\!F(x^0(t),u^0(t),w(t))$ and
$x^a(t{+}1)\!=\!F(x^a(t),\ubar^a(t;\upsilon),w(t))$,
with $x^0(0)\!=\!x^a(0)\!=\!x(0)$.
Let
$R_{\mathrm{nom}}(\omega)\!:=\!
\sum_{t\in\T'}\gamma^t r(x^0(t),u^0(t))$ and
$R_{\mathrm{att}}(\upsilon,\omega)\!:=\!
\sum_{t\in\T'}\gamma^t
r(x^a(t),\ubar^a(t;\upsilon))$.
The paired trajectory-level degradation is
\vspace{-0.1cm}
\begin{equation}
\Dtraj(\upsilon,\omega)
:=
R_{\mathrm{nom}}(\omega)
-
R_{\mathrm{att}}(\upsilon,\omega).
\label{eq:pairedD}
\end{equation}
\end{definition}
\vspace{-0.2cm}
The coupling is part of the specification: although
$\E_\omega[\Dtraj(\upsilon,\omega)]
\!=\!J(\pi)-\E_\omega[R_{\mathrm{att}}(\upsilon,\omega)]$
depends only on the trajectory marginals, the law of $\Dtraj$ generally
does not.
Under the synchronous coupling, $K\!=\!0$ or $b\!=\!0$ gives
$\Dtraj\!\equiv\!0$ pathwise.
The representation $(\pi,F,P_w)$ and its shared-randomness semantics
are fixed before synthesis and verification; Appendix~\ref{app:coupling-semantics}
gives details showing that the marginals alone are
insufficient. We can now state the desired robustness criterion.

\begin{definition}[Certified performance]
\label{def:certperf}
Fix an attack budget $(K,b)$, $\alpha\!\ge\!0$, and
$\rho\!\in\!(0,1]$.
The fixed policy $\pi$ is
\emph{$(\alpha,\rho)$-certified performant} under $\Aadv(K,b)$ if
its paired degradation does not exceed $\alpha$ with probability at
least $\rho$ for every admissible attack:
\begin{equation}
\Prob_\omega\!\left(
\Dtraj(\upsilon,\omega)\le\alpha
\right)
\ge\rho,
\qquad
\forall\,\upsilon\in\Aadv(K,b).
\label{eq:certified-forall}
\end{equation}
The probability is over $\omega$, conditional on the fixed policy and
execution representation; attacks are quantified universally, with no
attack distribution.
\end{definition}

We finally quantify the certified perturbation tolerance at a fixed
cardinality budget.

\begin{problem}[Sequential robustness certification]
\label{prob:radius}
Given the fixed policy $\pi$, prescribed levels
$\alpha\!\ge\!0$ and $\rho\!\in\!(0,1]$, and a cardinality budget
$K$, determine the supremal attack magnitude for which $\pi$ remains
$(\alpha,\rho)$-certified performant under $\Aadv(K,b)$.
We denote this value by $b_{\mathrm{cert}}^*(K)\!\in\![0,+\infty]$.
As $K$ varies, $K\!\mapsto\!b_{\mathrm{cert}}^*(K)$ defines the
\emph{cardinality--magnitude robustness frontier}.
\end{problem}

\vspace{-0.1cm}
\section{Methodology}
\label{sec:method}
\vspace{-0.15cm}

The criterion in Definition~\ref{def:certperf} is trajectory-level and
not directly a state-space safety property. We therefore augment the
synchronously coupled nominal and attacked dynamics with their cumulative
discounted return gap, making the certified degradation a terminal-state
quantity. This recasts sequential robustness certification as a finite-horizon
terminal safety problem amenable to stochastic barrier certificates.

\begin{definition}[Augmented dt-SDS]
\label{def:dtsds}
Fix an attack budget $(K,b)$ and the corresponding class
$\Aadv(K,b)$ of Definition~\ref{def:attack}.
The augmented discrete-time stochastic dynamical system is
$\mathfrak S\!:=\!(\cX,\cX_0,\Aadv(K,b),\cW,P_w,f,T)$, where
$\cX\!:=\!\cS\times\cS\times[-2S_{\max},2S_{\max}]$,
$\cX_0\!:=\!\{(x,x,0):x\!\in\!\operatorname{supp}(\mu_0)\}$, and
$\xhat(t)\!:=\!(x^0(t),x^a(t),z(t))$, initialized at
$\xhat(0)\!=\!(x(0),x(0),0)$ with $x(0)\!\sim\!\mu_0$.
For each fixed $\upsilon\!\in\!\Aadv(K,b)$, let
$u^0(t)\!:=\!\pi(x^0(t),w(t))$,
let $\ubar^a(t;\upsilon)$ be as in Definition~\ref{def:coupling},
and let $\cX_t^\upsilon\!\subseteq\!\cX$ denote the support of the
augmented-state distribution at time $t$ under $\upsilon$. Whenever $f(\xhat,\upsilon,w,t)$ is evaluated, the actions below are
recomputed from the supplied $w$, consistently with
Definition~\ref{def:coupling}.
The augmented dynamics are
\vspace{-0.15cm}
\begin{equation}
\xhat(t{+}1)
\!=\!
f(\xhat(t),\upsilon,w(t),t)
\!:=\!
\begin{bmatrix}
F\!\left(x^0(t),u^0(t),w(t)\right)\\[2pt]
F\!\left(x^a(t),\ubar^a(t;\upsilon),w(t)\right)\\[2pt]
z(t)+\gamma^t\!\left[
r(x^0(t),u^0(t))
-r(x^a(t),\ubar^a(t;\upsilon))
\right]
\end{bmatrix}\!,
\qquad t\!\in\!\T'.
\label{eq:dtsds}
\end{equation}
\end{definition}
\vspace{-0.2cm}
Each evaluation of $f(\xhat,\upsilon,w,t)$ recomputes both actions and
the return-gap increment using its supplied $w$; successor draws therefore
reuse neither cached actions nor cached gap increments from the source episode.
Hence $z(t{+}1)$ may remain random conditional on
$(\xhat(t),\upsilon,t)$.
Since each admissible attack is precommitted, $\upsilon$ parameterizes the
augmented dynamics and need not be included in the state.
The identity $z(T)\!=\!\Dtraj(\upsilon,\omega)$ yields the following
terminal safety specification.

\begin{definition}[Terminal safety specification]
\label{def:safety}
For the augmented dt-SDS $\mathfrak S$ of
Definition~\ref{def:dtsds} and $\alpha\!\ge\!0$, define
$\cX_s^\alpha\!:=\!\{\xhat\!\in\!\cX:z\!\le\!\alpha\}$ and
$\cX_u^\alpha\!:=\!\cX\setminus\cX_s^\alpha
=\{\xhat\!\in\!\cX:z\!>\!\alpha\}$.
For $\rho\!\in\!(0,1]$, let
$\Psi_{\alpha,\rho}\!:=\!(\cX_0,\cX_s^\alpha,\cX_u^\alpha,T,\rho)$.
We say $\mathfrak S$ satisfies $\Psi_{\alpha,\rho}$ if
$\Prob_\omega(\xhat(T)\!\in\!\cX_s^\alpha)\!\ge\!\rho$ for every
$\upsilon\!\in\!\Aadv(K,b)$.
\end{definition}
This specification is equivalent to Definition~\ref{def:certperf} and imposes no intermediate safety constraint. We next derive sufficient time-dependent barrier conditions,
uniformly over $\Aadv(K,b)$.
\begin{definition}[Barrier certificate]
\label{def:barrier}
Consider the augmented dt-SDS $\mathfrak S$ in
Definition~\ref{def:dtsds} with attack class $\Aadv(K,b)$ and terminal
specification $\Psi_{\alpha,\rho}$ of Definition~\ref{def:safety}.
A measurable function
$B:\cX\times\T\!\to\!\R_{\ge0}$ is a
\emph{barrier certificate} for $\mathfrak S$ with respect to
$\Psi_{\alpha,\rho}$ if there exist
$b_1,b_2,b_3\!\in\!\R_{>0}$ and $\kappa\!\in\!(0,1]$
such that the following conditions hold for every
$\upsilon\!\in\!\Aadv(K,b)$, where
$\cG_T(\kappa)\!:=\!\sum_{j=0}^{T-1}\kappa^j$:
\vspace{-0.1cm}
\begin{equation}
\begin{aligned}
&B(\xhat,0)
\le b_1,
&&\forall\,\xhat\!\in\!\cX_0,\\
&B(\xhat,T)
\ge b_2,
&&\forall\,\xhat\!\in\!\cX_u^\alpha,\\
&\E_{w\sim P_w}\!\left[
B\!\left(f(\xhat,\upsilon,w,t),t{+}1\right)
\right]
\le
\kappa B(\xhat,t)+b_3,
&&\forall\,\xhat\!\in\!\cX_t^\upsilon,\;
\forall\,t\!\in\!\T',\\
&b_1\kappa^T+b_3\cG_T(\kappa)
\le
b_2(1-\rho).
\end{aligned}
\label{eq:BCconditions}
\end{equation}
\end{definition}
\vspace{-0.2cm}
The four conditions bound the initial value, separate terminal-unsafe
states, control the expected one-step drift along states reachable under
each admissible attack, and calibrate the resulting bound to $\rho$.
Together they bound terminal failure uniformly over $\Aadv$.
Because the same attack-independent barrier must accommodate all
precommitted attack plans, these sufficient conditions can be conservative
at states shared across different plans.
Their conservatism is discussed in
Appendix~\ref{app:common-barrier-limitation}.

\begin{theorem}[Ideal sequential robustness guarantee]
\label{thm:ideal}
Fix $K\!\in\!\T$, $b,\alpha\!\ge\!0$, and $\rho\!\in\!(0,1)$.
Under Assumptions~\ref{ass:bounded} and~\ref{ass:noise}, consider the
augmented dt-SDS $\mathfrak S$ of Definition~\ref{def:dtsds} with
terminal safety specification $\Psi_{\alpha,\rho}$ of
Definition~\ref{def:safety}.
Suppose $B$ is a barrier certificate in the sense of
Definition~\ref{def:barrier}, with parameters
$b_1,b_2,b_3$, and $\kappa$.
Then $\mathfrak S$ satisfies $\Psi_{\alpha,\rho}$; equivalently, the
fixed policy $\pi$ is $(\alpha,\rho)$-certified performant under
$\Aadv(K,b)$ in the sense of Definition~\ref{def:certperf}.
Consequently, $b$ belongs to the feasible set defining
$b_{\mathrm{cert}}^*(K)$ in Problem~\ref{prob:radius}.
More precisely, for every $\upsilon\!\in\!\Aadv(K,b)$,
\begin{equation}
\Prob_\omega\!\left(
\Dtraj(\upsilon,\omega)\le\alpha
\right)
\ge
1-\frac{b_1\kappa^T+b_3\cG_T(\kappa)}{b_2}
\ge\rho.
\label{eq:idealbound}
\end{equation}
\end{theorem}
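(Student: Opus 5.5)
Fix an arbitrary admissible attack $\upsilon\in\Aadv(K,b)$; since every condition in Definition~\ref{def:barrier} is imposed uniformly in $\upsilon$, it suffices to prove \eqref{eq:idealbound} for this single precommitted $\upsilon$. Because $\upsilon$ is fixed before $\omega$ is drawn, the augmented process $\xhat(t)$ of Definition~\ref{def:dtsds} is a time-inhomogeneous Markov chain driven by the i.i.d.\ sequence $w(t)$, with $w(t)$ independent of $\mathcal F_t:=\sigma(x(0),w(0),\dots,w(t-1))$ and $\xhat(t)$ being $\mathcal F_t$-measurable. The plan is a supermartingale-type iteration of the drift condition in expectation, followed by Markov's inequality at the terminal time, and finally the identity $z(T)=\Dtraj(\upsilon,\omega)$.

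First, I will establish the one-step conditional bound $\E[B(\xhat(t{+}1),t{+}1)\mid\mathcal F_t]\le\kappa B(\xhat(t),t)+b_3$ almost surely. By independence of $w(t)$ from $\mathcal F_t$ and measurability of $f$ and $B$ (nonnegativity avoids integrability issues via Tonelli), the conditional expectation equals $\E_{w\sim P_w}[B(f(\hat y,\upsilon,w,t),t{+}1)]$ evaluated at $\hat y=\xhat(t)$. Since $\xhat(t)\in\cX_t^\upsilon$ almost surely (the support carries full measure), the third condition of \eqref{eq:BCconditions} applies. Taking expectations gives $m_{t+1}\le\kappa m_t+b_3$ with $m_t:=\E[B(\xhat(t),t)]$. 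Since $\xhat(0)\in\cX_0$, the first condition gives $m_0\le b_1$, and induction yields $m_T\le b_1\kappa^T+b_3\sum_{j=0}^{T-1}\kappa^j=b_1\kappa^T+b_3\cG_T(\kappa)$.

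Next, by the second condition, on the event $\{\xhat(T)\in\cX_u^\alpha\}$ we have $B(\xhat(T),T)\ge b_2>0$, and $B\ge0$ everywhere, so Markov's inequality gives $\Prob_\omega(\xhat(T)\in\cX_u^\alpha)\le m_T/b_2\le(b_1\kappa^T+b_3\cG_T(\kappa))/b_2$. This is at most $1-\rho$ by the fourth condition. Passing to the complement, and using that telescoping the $z$-recursion in \eqref{eq:dtsds} from $z(0)=0$ gives $z(T)=R_{\mathrm{nom}}(\omega)-R_{\mathrm{att}}(\upsilon,\omega)=\Dtraj(\upsilon,\omega)$ pathwise under the same coupling as Definition~\ref{def:coupling} (Assumption~\ref{ass:bounded} guaranteeing $z(t)\in[-2S_{\max},2S_{\max}]$, so $\xhat(t)\in\cX$), we obtain \eqref{eq:idealbound}. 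Because $\upsilon$ was arbitrary, $\Psi_{\alpha,\rho}$ holds, which is equivalent to Definition~\ref{def:certperf}. Then $b$ lies in the feasible set of Problem~\ref{prob:radius} by definition of $b_{\mathrm{cert}}^*(K)$.

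The main obstacle is the conditioning step. The drift condition is stated only on the support $\cX_t^\upsilon$ and as an integral over a fresh $w$, so one must argue carefully that the realized $w(t)$ is independent of $\xhat(t)$ and that the realized actions and gap increment are exactly those recomputed by $f$ from $w(t)$. Assumption~\ref{ass:noise} and the recomputation convention of Definition~\ref{def:dtsds} provide both. A secondary subtlety is that the support-based restriction must hold only almost surely; this suffices because the inequality is integrated against the law of $\xhat(t)$.
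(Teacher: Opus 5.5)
Your proposal is correct and follows essentially the same route as the paper's proof. Both fix $\upsilon$, use independence of $w(t)$ from $\mathcal F_t$ together with almost-sure membership $\xhat(t)\in\cX_t^\upsilon$ to iterate the drift bound in expectation up to $b_1\kappa^T+b_3\cG_T(\kappa)$, and then apply Markov's inequality at time $T$ together with the identity $z(T)=\Dtraj(\upsilon,\omega)$; your added care about Tonelli and the range of $z(t)$ is a harmless refinement.
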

\vspace{-0.2cm}
Theorem~\ref{thm:ideal} requires the barrier inequalities to hold
uniformly over the specified reachable states and admissible attacks.
The finite-sample procedure below instead certifies operational
reliability under a declared attack law and does not establish
feasibility for the universal objective in Problem~\ref{prob:radius}.

\vspace{-0.1cm}
\section{Data-Driven Sequential Robustness Certification}
\label{sec:datadriven}
\vspace{-0.2cm}
We assume access to synchronously coupled rollouts and repeatable
one-step queries from reachable augmented state--attack--time tuples.
Each query restores both branches before action generation and draws a
fresh complete $w\!\sim\!P_w$, shared within the pair and independent
of the source episode. Thus, policy and transition randomness are
regenerated, and the reward-gap coordinate is updated from the same
realized actions. No analytical dynamics model is required.
We next specify the attack--episode laws used for candidate construction
and formal verification.

\begin{definition}[Attack--episode laws]
\label{def:jointlaws}
For an attack budget $(K,b)$, let $\UU_{K,b}$ be a prescribed
probability law supported on $\Aadv(K,b)$. We model the sampled attack
and episode randomness as independent, consistently with the oblivious
threat model, and define the operational law
$\QQ_{K,b}\!:=\!\UU_{K,b}\otimes\Prob_\omega$.
For synthesis at a design magnitude
$b_{\mathrm{cnd}}\!\in\!\R_{\ge0}$, we allow a separate law
$\UU^{\mathrm{syn}}_{K,b_{\mathrm{cnd}}}$ supported on
$\Aadv(K,b_{\mathrm{cnd}})$ and define
$\QQ^{\mathrm{syn}}_{K,b_{\mathrm{cnd}}}
\!:=\!\UU^{\mathrm{syn}}_{K,b_{\mathrm{cnd}}}\otimes\Prob_\omega$.
\end{definition}
The synthesis law is used only to construct the candidate barrier;
the formal data-driven guarantee is stated under $\QQ_{K,b}$.
For the fixed policy, tolerance, horizon, and coupling, define
$R_{\mathrm{op}}(K,b;\UU_{K,b})
\!:=\!
\Prob_{(\upsilon,\omega)\sim\QQ_{K,b}}
\!\left(\Dtraj(\upsilon,\omega)\!\le\!\alpha\right)$.
The operational target is $R_{\mathrm{op}}\!\ge\!\rho$ for a tested
specification, rather than the universal per-attack requirement of
Problem~\ref{prob:radius}.
We reserve $b_{\mathrm{cert}}^{\mathrm{op}}$ for an accepted tested
magnitude under this operational criterion, distinct from the universal
radius $b_{\mathrm{cert}}^*(K)$.
Henceforth, we consider
$K\!>\!0$, $b\!>\!0$, $0\!\le\!\alpha\!<\!2S_{\max}$, and
$\rho\!\in\!(0,1)$; other cases are summarized in
Appendix~\ref{app:analytical-cases}.

\vspace{-0.1cm}
\subsection{Certificate Synthesis}
\label{sec:synthesis}
\vspace{-0.15cm}
We synthesize a candidate for the conditions of
Definition~\ref{def:barrier} by jointly learning a neural barrier
and its associated constants from coupled trajectories.
The synthesis specification consists of a cardinality budget
$K\!\in\!\T$, a design magnitude
$b_{\mathrm{cnd}}\!\in\!\R_{\ge0}$, a degradation tolerance
$\alpha\!\in\!\R_{\ge0}$, and a target reliability
$\rho\!\in\!(0,1)$.
We choose a synthesis-only reserve
$\eta_{\mathrm{syn}}\!\in\!(0,1-\rho)$ and set
$\rho_{\mathrm{syn}}\!:=\!\rho+\eta_{\mathrm{syn}}$.
This reserve provides optimization headroom only: it has no formal
role, and the certification target remains $\rho$.
Let $h_\phi$ be a scalar neural network with parameters $\phi$.
For a prescribed $\Bmax\!\in\!\R_{>0}$, we parameterize
$B_\phi(\xhat,t)\!:=\!\Bmax\operatorname{sigmoid}(h_\phi(\xhat,t))$,
which ensures $B_\phi(\xhat,t)\!\in\!(0,\Bmax)$ for every
$(\xhat,t)\!\in\!\cX\times\T$.

\vspace{-0.25cm}
\paragraph{Synthesis data.}
We draw $N_s$ independent scenarios
$(\upsilon_i,\omega_i)
\!\stackrel{\mathrm{i.i.d.}}{\sim}\!
\QQ^{\mathrm{syn}}_{K,b_{\mathrm{cnd}}}$, $i\!\in\![N_s]$,
and generate the corresponding trajectories
$\xhat_i(0{:}T)$ according to Definition~\ref{def:dtsds}. Let
$d_i\!:=\!\Dtraj(\upsilon_i,\omega_i)$,
$\mathcal I_s\!:=\!\{i\!\in\![N_s]\mid d_i\!\le\!\alpha\}$, and
$\mathcal I_u\!:=\!\{i\!\in\![N_s]\mid d_i\!>\!\alpha\}$.
We organize their initial states, state--attack--time tuples, and
terminal safe--unsafe partition as indexed collections that retain
repeated observations:
\vspace{-0.1cm}
\begin{equation}
I\!:=\!\bigl(\xhat_i(0)\bigr)_{i\in[N_s]},\quad
X\!:=\!\bigl(\xhat_i(t),\upsilon_i,t\bigr)_{i\in[N_s],t\in\T'},\quad
S\!:=\!\bigl(\xhat_i(T)\bigr)_{i\in\mathcal I_s},\quad
U\!:=\!\bigl(\xhat_i(T)\bigr)_{i\in\mathcal I_u}.
\label{eq:setSU}
\end{equation}
Since $z_i(T)=\Dtraj(\upsilon_i,\omega_i)$, every element of
$I$ lies in $\cX_0$, while the elements of $S$ and $U$ lie in
$\cX_s^\alpha$ and $\cX_u^\alpha$, respectively.
Thus, $S$ and $U$ collect sampled terminal safe and unsafe states;
$U$ enters the synthesis loss, while $S$ is retained for supplementary
analysis. Additional synthesis-pool conventions are given in
Appendix~\ref{app:certificate-parameterization}.
To approximate the expectation in the third condition of
Definition~\ref{def:barrier}, for each indexed tuple
$(\xhat_i(t),\upsilon_i,t)$, $i\!\in\![N_s]$, $t\!\in\!\T'$, we draw
$M_s$ fresh conditionally i.i.d. random inputs
$w_{itj}\!\sim\!P_w$, $j\!\in\![M_s]$, independently
of the source episode.
For each tuple, define
$\bar E_{it}\!:=\!M_s^{-1}
\sum_{j=1}^{M_s}
B_\phi(f(\xhat_i(t),\upsilon_i,w_{itj},t),t{+}1)$
and replace the exact expectation during synthesis by
$\bar E_{it}+\epsilon$, where
$\epsilon\!\in\!\R_{\ge0}$ is a prescribed synthesis margin.
This margin tightens the sampled drift condition during synthesis only
and carries no formal confidence guarantee.

\vspace{-0.25cm}
\paragraph{Synthesis objective.}
We combine the sampled barrier conditions in a weighted training loss.
For prescribed weights
$c_\rho,c_I,c_U,c_f\!\in\!\R_{>0}$, we jointly optimize
$\phi$, $b_1,b_2,b_3\!\in\!\R_{>0}$, and
$\kappa\!\in\!(0,1]$, subject to $b_2\!<\!\Bmax$, by minimizing
$\mathcal L/b_2$, where
\vspace{-0.3cm}
\begin{equation}
\begin{aligned}
\mathcal L ={}&
c_\rho\!\operatorname{ReLU}\!\left(
b_1\kappa^T+b_3\cG_T(\kappa)-b_2(1-\rho_{\mathrm{syn}})
\right)
~+~
c_I\!\!\sum_{i\in[N_s]}
\operatorname{ReLU}\!\left(
B_\phi(\xhat_i(0),0)-b_1
\right)
\\
&+
c_U\!\!\!\sum_{i\in\mathcal I_u}
\!\operatorname{ReLU}\!\left(
b_2\!-\!B_\phi(\xhat_i(T),T)
\right)
~+~
c_f\!\!\!\!\sum_{i\in[N_s]}\!\sum_{t\in\T'}
\operatorname{ReLU}\!\left(
\bar E_{it}\!+\!\epsilon
\!-\!\kappa B_\phi(\xhat_i(t),t)\!-\!b_3
\right).\!
\end{aligned}
\label{eq:loss}
\end{equation}
The four terms penalize violations of the reliability, initial,
unsafe-terminal, and sampled drift conditions, respectively.
After training, the candidate $B_{\phi^*}$ and constants
$b_1^*,b_2^*,b_3^*,\kappa^*$ are fixed before independent
verification. Synthesis constructs a candidate only; its formal
validity is established by the verification stage.
\vspace{-0.1cm}
\subsection{Certificate Verification}
\label{sec:verification}
\vspace{-0.15cm}
Verification converts the frozen synthesis output into a finite-sample
robustness guarantee under the operational law $\QQ_{K,b}$ of
Definition~\ref{def:jointlaws}. Fix a verification specification
$K\!\in\!\T$, $b,\alpha\!\in\!\R_{\ge0}$, and
$\rho\!\in\!(0,1)$, and write $B_\phi^*\!:=\!B_{\phi^*}$ with
$b_1^*,b_2^*,b_3^*,\kappa^*$ from Sec.~\ref{sec:synthesis}.
The candidate and all verification-design choices are fixed before
calibration; fresh calibration data are drawn independently of data
used for candidate construction or selection. All probability
statements below condition on these pre-calibration choices.
We first express the barrier conditions as signed residuals.

\begin{definition}[Barrier residuals]
\label{def:residuals}
For the frozen candidate and verification specification, define
$\mathcal Z_1\!:=\!\cX_0\times\Aadv(K,b)\times\{0\}$,
$\mathcal Z_2\!:=\!\cX_u^\alpha\times\Aadv(K,b)\times\{T\}$, and
$\mathcal Z_3\!:=\!\{(\xhat,\upsilon,t):
\upsilon\!\in\!\Aadv(K,b),\,
t\!\in\!\T',\,
\xhat\!\in\!\cX_t^\upsilon\}$,
where $\cX_t^\upsilon$ is defined in
Definition~\ref{def:dtsds}. For $j\!\in\!\{1,2,3\}$, set
$q_j\!=\!0$ outside $\mathcal Z_j$.
Positive residuals indicate violations of the corresponding conditions
in Definition~\ref{def:barrier}:
\vspace{-0.1cm}
\begin{align}
q_1(\xhat,\upsilon,t)
&\!:=B_\phi^*(\xhat,0)-b_1^*,
&&\forall\,(\xhat,\upsilon,t)\!\in\!\mathcal Z_1,
\label{eq:q1}\\
q_2(\xhat,\upsilon,t)
&\!:=b_2^*-B_\phi^*(\xhat,T),
&&\forall\,(\xhat,\upsilon,t)\!\in\!\mathcal Z_2,
\label{eq:q2}\\
q_3(\xhat,\upsilon,t)
&\!:=\E_{w\sim P_w}\!\big[
B_\phi^*\!\left(f(\xhat,\upsilon,w,t),t{+}1\right)
\big]
-\kappa^*B_\phi^*(\xhat,t)-b_3^*,
&&\forall\,(\xhat,\upsilon,t)\!\in\!\mathcal Z_3,
\label{eq:q3}\\
q_4
&\!:=b_1^*(\kappa^*)^T
+b_3^*\cG_T(\kappa^*)
-b_2^*(1-\rho).
\label{eq:q4}
\end{align}
\end{definition}
\vspace{-0.3cm}
The initial and terminal residuals are directly evaluable, so
$\bar q_1\!:=\!q_1$ and $\bar q_2\!:=\!q_2$.
The drift residual $q_3$ contains an expectation under the stochastic
dynamics. Fix $M\!\in\!\N$ and
$\epsilon_M\!\in\!\R_{>0}$ before calibration.
At each queried
$(\xhat,\upsilon,t)\!\in\!\mathcal Z_3$, draw
$w_1,\ldots,w_M\stackrel{\mathrm{i.i.d.}}{\sim}P_w$
using fresh random inputs independent of the complete episode that
supplies the tuple. Define
\vspace{-0.2cm}
\begin{align}
\overline E_M^*(\xhat,\upsilon,t)
&\!:=\frac{1}{M}\sum_{m=1}^{M}
B_\phi^*\!\left(
f(\xhat,\upsilon,w_m,t),t{+}1
\right)+\epsilon_M,
\label{eq:Ebar}\\[-0.5ex]
\bar q_3(\xhat,\upsilon,t)
&\!:=\overline E_M^*(\xhat,\upsilon,t)
-\kappa^*B_\phi^*(\xhat,t)-b_3^*,
\qquad
(\xhat,\upsilon,t)\!\in\!\mathcal Z_3,
\label{eq:q3bar}
\end{align}
with $\bar q_3\!:=\!0$ outside $\mathcal Z_3$.
Unlike the synthesis margin, $\epsilon_M$ has a formal finite-sample
role through the following bound.

\begin{lemma}[Supermartingale residual bound]
\label{lem:finiteM}
Fix $\beta_0\!\in\!(0,1)$,
$\epsilon_M\!\in\!\R_{>0}$, and
$(\xhat,\upsilon,t)\!\in\!\mathcal Z_3$.
Suppose $B_\phi^*(\xhat',t')\!\in\!(0,\Bmax)$ for every
$(\xhat',t')\!\in\!\cX\times\T$, and the successor random inputs
used in~\eqref{eq:Ebar} are conditionally i.i.d.\ from $P_w$ at the
fixed tuple. Then, with probability at least $1-\beta_0$ over the
successor batch,
$q_3(\xhat,\upsilon,t)\!\le\!\bar q_3(\xhat,\upsilon,t)$, provided
\vspace{-0.1cm}
\begin{equation}
M\ge
\Big\lceil
\frac{\Bmax^2}{2\epsilon_M^2}
\ln\frac{1}{\beta_0}
\Big\rceil .
\label{eq:Mbound}
\end{equation}
\end{lemma}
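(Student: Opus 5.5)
The plan is a one-sided Hoeffding bound applied to the successor batch at the fixed tuple. Fix $(\xhat,\upsilon,t)\in\mathcal Z_3$ and define the random variables
\[
Y_m:=B_\phi^*\bigl(f(\xhat,\upsilon,w_m,t),t{+}1\bigr),\qquad m\in[M].
\]

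\textbf{Step 1: the $Y_m$ are i.i.d., bounded, with the right mean.} Definition~\ref{def:dtsds} says that each evaluation of $f$ recomputes both actions and the gap increment from the supplied $w$. Hence $Y_m$ is a measurable function of $w_m$ alone, the tuple being fixed. Since the successor inputs are conditionally i.i.d.\ from $P_w$, the $Y_m$ are conditionally i.i.d. Their common mean is
\[
\mu:=\E_{w\sim P_w}\bigl[B_\phi^*(f(\xhat,\upsilon,w,t),t{+}1)\bigr],
\]
which is exactly the expectation appearing in $q_3$. By the range hypothesis on $B_\phi^*$, each $Y_m$ takes values in $(0,\Bmax)\subseteq[0,\Bmax]$.

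\textbf{Step 2: reduce the claim to a deviation of the sample mean.} By \eqref{eq:q3} and \eqref{eq:q3bar}, the terms $-\kappa^*B_\phi^*(\xhat,t)-b_3^*$ are deterministic and identical in $q_3$ and $\bar q_3$, so they cancel. Therefore
\[
q_3-\bar q_3=\mu-\tfrac1M\textstyle\sum_m Y_m-\epsilon_M .
\]
The event $q_3>\bar q_3$ is thus exactly the event $\mu-\bar Y_M>\epsilon_M$, where $\bar Y_M$ is the sample mean.

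\textbf{Step 3: apply one-sided Hoeffding.} For i.i.d.\ variables in an interval of length $\Bmax$, Hoeffding's inequality gives
\[
\Prob\bigl(\mu-\bar Y_M\ge\epsilon_M\bigr)\le\exp\bigl(-2M\epsilon_M^2/\Bmax^2\bigr).
\]
The strict event $\mu-\bar Y_M>\epsilon_M$ is contained in this one, so the same bound applies. Condition \eqref{eq:Mbound} gives $M\ge \Bmax^2\ln(1/\beta_0)/(2\epsilon_M^2)$, which makes the right-hand side at most $\beta_0$. Taking the complement gives $q_3\le\bar q_3$ with probability at least $1-\beta_0$ over the successor batch.

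\textbf{Main obstacle.} The main obstacle is not the concentration step but justifying the conditional i.i.d.\ structure in Step 1. This requires that the tuple be treated as fixed, or conditioned on, with the fresh inputs independent of the episode that produced it, so that the bound holds conditionally on the tuple. It also requires that no cached actions or gap increments enter $Y_m$, which the recomputation convention in Definition~\ref{def:dtsds} guarantees.

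It should also be checked that measurability of $f$ and of $B_\phi^*$ makes each $Y_m$ a genuine random variable. Boundedness then makes $\mu$ finite and well-defined. The open interval $(0,\Bmax)$ only helps, since Hoeffding needs just the closed range $[0,\Bmax]$.
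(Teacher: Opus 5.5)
Your proposal is correct and follows essentially the same route as the paper's proof. Both define the successor values $Y_m$ as conditionally i.i.d.\ variables in $(0,\Bmax)$, cancel the common deterministic terms so that $q_3>\bar q_3$ becomes exactly the event that the true mean exceeds the sample mean by more than $\epsilon_M$, and then apply the one-sided Hoeffding inequality together with \eqref{eq:Mbound} to bound that event's probability by $\beta_0$.
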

\vspace{-0.4cm}
\paragraph{Conformal calibration.}
We calibrate violations at the episode level, so no independence
between time steps within one episode is required.
\begin{definition}[Episode-level verification score]
\label{def:score}
Fix $N\!\in\!\N$ and let
$\mathcal D_{\mathrm{cal}}\!=\!(\bar\zeta_i)_{i=1}^{N}$ contain
independent copies of
$\bar\zeta\!=\!(\upsilon,\omega,W)$, where
$(\upsilon,\omega)\!\sim\!\QQ_{K,b}$ generates the augmented trajectory
of Definition~\ref{def:dtsds}.
For each episode, $W$ contains the fresh successor batches used in
\eqref{eq:Ebar} at all $t\!\in\!\T'$, drawn independently of the
complete scenario $(\upsilon,\omega)$.
Write
$\bar q_{1,i}\!:=\!\bar q_1(\xhat_i(0),\upsilon_i,0)$ and
$\bar q_{2,i}\!:=\!\bar q_2(\xhat_i(T),\upsilon_i,T)$.
The episode score combines the initial and drift residuals while
retaining terminal separation as a hard condition:
\vspace{-0.15cm}
\begin{equation}
S_i\!:=\!
\small\begin{cases}
\displaystyle
\max\!\big\{
\bar q_{1,i},\,
\max_{t\in\T'}
\bar q_3(\xhat_i(t),\upsilon_i,t)
\big\},
& \bar q_{2,i}\le0,\\[3pt]
+\infty,
& \bar q_{2,i}>0.
\end{cases}
\label{eq:score}
\end{equation}
\end{definition}
\vspace{-0.3cm}
Thus an unsafe-terminal separation violation receives an infinite score
and cannot be repaired by a finite calibration margin; all scores,
including infinite ones, are retained.
\begin{lemma}[Conformal calibration]
\label{lem:conformal}
Fix $\beta\!\in\!(0,1]$ and let
$S_{(1)}\!\le\!\cdots\!\le\!S_{(N)}$ be the order statistics of the
scores in Definition~\ref{def:score}.
For each $k\!\in\![N]$, let $\bar v_k$ be the smallest
$v\!\in\![0,1]$ satisfying the binomial-tail condition below.
Conditional on the pre-calibration choices, with probability at least
$1-\beta$ over $\mathcal D_{\mathrm{cal}}$, the score of a fresh
independent episode exceeds $S_{(k)}$ with probability at most
$\bar v_k$, simultaneously for all $k\!\in\![N]$.
The statement permits ties and infinite scores:
\vspace{-0.1cm}
\begin{equation}
\sum_{j=k}^{N}\binom Nj(1-v)^jv^{N-j}
\le\frac{\beta}{N}.
\label{eq:conformal-vk}
\end{equation}
\end{lemma}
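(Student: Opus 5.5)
Since every step is conditional on the pre-calibration choices, the scores $S_1,\dots,S_N$ are i.i.d. copies of a random variable $S$ with values in $\R\cup\{+\infty\}$. A fresh episode score $S_{N+1}$ is an independent copy of $S$. The plan is to reduce the claim to a statement about the true survival function of $S$ evaluated at the order statistics, and to prove that statement with a binomial tail bound applied at each $k$. A union bound over $k\in[N]$ then accounts for the factor $\beta/N$ in~\eqref{eq:conformal-vk}.

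\textbf{Step 1 (reduction).} Define $G(s):=\Prob(S>s)$ for $s\in\R\cup\{+\infty\}$, with $G(+\infty)=0$. $G$ is nonincreasing and right-continuous. Conditional on $\mathcal D_{\mathrm{cal}}$, the probability that a fresh score exceeds $S_{(k)}$ equals $G(S_{(k)})$. So it suffices to show $\Prob(G(S_{(k)})>\bar v_k)\le\beta/N$ for each fixed $k$, and then take a union bound.

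\textbf{Step 2 (fixed $k$, ties and infinite scores).} Fix $v\in[0,1)$ and set $s_v:=\inf\{s:G(s)\le v\}$. Right-continuity gives $G(s_v)\le v$ when $s_v$ is finite; if $s_v=+\infty$ the argument is trivial. By monotonicity, the event $G(S_{(k)})>v$ is the event $S_{(k)}<s_v$, which says that at least $k$ scores satisfy $S_i<s_v$. For $s<s_v$ we have $G(s)>v$, so $\Prob(S<s_v)=\lim_{s\uparrow s_v}(1-G(s))\le 1-v$. The count $\#\{i:S_i<s_v\}$ is Binomial$(N,p)$ with $p\le 1-v$, and binomial upper tails are monotone in $p$. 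Hence
\[
\Prob\bigl(G(S_{(k)})>v\bigr)\le\sum_{j=k}^N\binom Nj(1-v)^jv^{N-j}.
\]
This argument handles atoms and infinite scores, because it uses only the strict inequality $S_i<s_v$ and never assumes continuity of the distribution.

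\textbf{Step 3 (choice of $\bar v_k$).} The binomial tail on the right is continuous and nonincreasing in $v$. It equals $1$ at $v=0$ and equals $0$ at $v=1$ when $k\ge1$. So the smallest $v$ satisfying~\eqref{eq:conformal-vk} exists, and at $v=\bar v_k$ the tail is at most $\beta/N$. If $\bar v_k=1$ the claim is vacuous. Substituting $v=\bar v_k$ into Step 2 and applying the union bound over the $N$ values of $k$ gives simultaneous validity with probability at least $1-\beta$.

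\textbf{Main obstacle.} The delicate step is Step 2: making the event equivalence $\{G(S_{(k)})>v\}=\{S_{(k)}<s_v\}$ rigorous with atoms present, including the atom at $+\infty$. I would handle it through the generalized-inverse property of the right-continuous survival function, and check the edge cases $s_v=\pm\infty$ separately.
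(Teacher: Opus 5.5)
Your proposal is correct and follows the paper's proof essentially step for step. Your $s_v=\inf\{s:G(s)\le v\}$ with $G=1-F$ is exactly the paper's quantile $c_v=\inf\{s:F(s)\ge 1-v\}$, and the strict-inequality count $\#\{i:S_i<s_v\}\sim\mathrm{Binomial}(N,p)$ with $p\le 1-v$, the monotonicity of the binomial tail, and the union bound over $k$ at level $\beta/N$ are the paper's argument. The differences are cosmetic: you use the survival function rather than the CDF, and you assert an event equality where the paper only needs the inclusion $\{V_k>v\}\subseteq\{N_v\ge k\}$. One small correction: the case $s_v=+\infty$ is not trivial but is covered by the same count, since then $p=\Prob(S<+\infty)\le 1-v$.
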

\vspace{-0.5cm}
\paragraph{Calibration and acceptance.}
Let
$\mathcal K_{\mathrm{fin}}\!:=\!
\{k\!\in\![N]:S_{(k)}<+\infty\}$
and, for each finite rank, set
$r_k\!:=\!\max\{0,S_{(k)}\}$.
The margin $r_k$ relaxes the initial and drift bounds to
$b_1^*+r_k$ and $b_3^*+r_k$, while retaining $b_2^*$.
Lemma~\ref{lem:conformal} bounds the out-of-sample probability of
violating these empirical episode conditions by $\bar v_k$.
Together with Lemma~\ref{lem:finiteM}, the first-exact-violation
argument transfers this to the exact adjusted conditions with penalty
$\bar v_k/(1-\beta_0)$, without a union bound over time; the
good-prefix barrier argument then bounds terminal failure.
When $\mathcal K_{\mathrm{fin}}\!\neq\!\varnothing$, select the rank
that maximizes the resulting end-to-end reliability lower bound:
\begin{equation}
k^*\in\arg\max_{k\in\mathcal K_{\mathrm{fin}}}
\Big\{
1-
\frac{
(b_1^*+r_k)(\kappa^*)^T
+(b_3^*+r_k)\cG_T(\kappa^*)
}{b_2^*}
-\frac{\bar v_k}{1-\beta_0}
\Big\}.
\label{eq:select}
\end{equation}
Set
$r^*\!:=\!r_{k^*}$,
$\widetilde b_1^*\!:=\!b_1^*+r^*$, and
$\widetilde b_3^*\!:=\!b_3^*+r^*$.
Simultaneous validity over all ranks permits this data-dependent
selection without an additional confidence penalty.
The final verification residual is
\begin{equation}
\bar q_4
:=
\widetilde b_1^*(\kappa^*)^T
+\widetilde b_3^*\cG_T(\kappa^*)
-b_2^*\Big(
1-\rho-\frac{\bar v_{k^*}}{1-\beta_0}
\Big).
\label{eq:q4bar}
\end{equation}
We accept exactly when $\bar q_4\!\le\!0$.
If $\mathcal K_{\mathrm{fin}}\!=\!\varnothing$, we set
$\bar q_4\!:=\!+\infty$ and verification is inconclusive.

\begin{theorem}[FoSeRL certificate]
\label{thm:conformal}
Fix $K\!\in\!\T$, $b,\alpha\!\in\!\R_{\ge0}$,
$\rho\!\in\!(0,1)$, $\beta\!\in\!(0,1]$, and
$\beta_0\!\in\!(0,1)$.
Under Assumptions~\ref{ass:bounded} and~\ref{ass:noise}, consider the
augmented system of Definition~\ref{def:dtsds} with operational law
$\QQ_{K,b}$ from Definition~\ref{def:jointlaws}.
Let
$B_\phi^*:\cX\times\T\!\to\!(0,\Bmax)$ be a measurable frozen
candidate with
$b_1^*,b_2^*,b_3^*\!\in\!\R_{>0}$,
$\kappa^*\!\in\!(0,1]$, and $b_2^*\!<\!\Bmax$.
Choose $N,M\!\in\!\N$ and
$\epsilon_M\!\in\!\R_{>0}$ satisfying~\eqref{eq:Mbound},
and fix the candidate, constants, operational law, and all
verification-design choices before drawing
$\mathcal D_{\mathrm{cal}}$ as in Definition~\ref{def:score}, using
fresh i.i.d data.
Construct $\bar q_4$ by~\eqref{eq:select}--\eqref{eq:q4bar}, with
$\bar q_4\!:=\!+\infty$ if no finite rank exists.
Then, conditional on all pre-calibration choices, acceptance certifies
the operational target with confidence at least $1-\beta$ over
calibration:
\vspace{-0.1cm}
\begin{equation}
\Prob_{\mathcal D_{\mathrm{cal}}}\!\left[
\bar q_4\le0
\ \Longrightarrow\
\Prob_{(\upsilon,\omega)\sim\QQ_{K,b}}\!\left(
\Dtraj(\upsilon,\omega)\le\alpha
\right)\ge\rho
\right]
\ge1-\beta .
\label{eq:final}
\end{equation}
\end{theorem}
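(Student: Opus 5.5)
The plan is to prove the implication on the good calibration event supplied by Lemma~\ref{lem:conformal}. Let $\mathcal E$ be the event, of probability at least $1-\beta$ over $\mathcal D_{\mathrm{cal}}$, on which for every $k\in[N]$ a fresh independent episode $\bar\zeta=(\upsilon,\omega,W)$ has $\Prob(S>S_{(k)})\le\bar v_k$. Simultaneity over $k$ lets us apply this at the data-dependent rank $k^*$ of~\eqref{eq:select} with no extra penalty. So fix a realization in $\mathcal E$ with $\bar q_4\le0$; then $\mathcal K_{\mathrm{fin}}\neq\varnothing$ and $r^*=r_{k^*}<\infty$. It remains to show, deterministically given this realization,
\[
\Prob_{\QQ_{K,b}}\bigl(\Dtraj\le\alpha\bigr)\ \ge\ 1-\frac{\widetilde b_1^*(\kappa^*)^T+\widetilde b_3^*\cG_T(\kappa^*)}{b_2^*}-\frac{\bar v_{k^*}}{1-\beta_0},
\]
which is at least $\rho$ by~\eqref{eq:q4bar}.

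First I define the exact good events for a fresh episode. Let $G_0:=\{B_\phi^*(\xhat(0),0)\le\widetilde b_1^*\}$. For $t\in\T'$, let $G_{t+1}:=G_t\cap\{q_3(\xhat(t),\upsilon,t)\le r^*\}$, meaning the exact drift inequality with constant $\widetilde b_3^*$ holds at the realized state. Let $H:=\{\bar q_2(\xhat(T),\upsilon,T)\le0\}$.

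\textbf{Transfer to exact conditions.} This is the first-exact-violation step, and I expect it to be the main obstacle. On $\{S\le S_{(k^*)}\}$ we have $H$, $\bar q_1\le r^*$, and $\bar q_3(\xhat(t),\upsilon,t)\le r^*$ for all $t$. Let $\tau$ be the first $t$ at which the exact condition fails, i.e.\ $q_1>r^*$ or $q_3>r^*$. On $\{\tau=t\}$, failure of the score event at time $t$ requires $\bar q_3<q_3$ there; $q_1=\bar q_1$ is exact, so an initial failure is already a score failure. The event $\{\tau=t\}$ is measurable with respect to $(\upsilon,\omega)$, and the batch $W_t$ is independent of it. Lemma~\ref{lem:finiteM} then gives $\Prob(\bar q_3\ge q_3\mid\tau=t)\ge1-\beta_0$, so $\Prob(\tau=t,\,S\le S_{(k^*)})\ge(1-\beta_0)\Prob(\tau=t)$ up to the fact that $S$ exceeding $S_{(k^*)}$ is exactly what we need. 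Summing over the disjoint events $\{\tau=t\}$ gives
\[
(1-\beta_0)\Prob(\tau<\infty)\le\Prob(S>S_{(k^*)})\le\bar v_{k^*},
\]
with no union bound over time. Likewise $\Prob(H^c)\le\Prob(S=\infty)\le\bar v_{k^*}$. The delicate part is combining these two failure modes into the single penalty $\bar v_{k^*}/(1-\beta_0)$. I would handle this by noting that $\{\tau<\infty\}\cup H^c$ forces either $S=\infty$ or an exact violation at $\tau$, and running the same conditioning argument on the union, where $H^c$ contributes with conditional probability $1\ge1-\beta_0$.

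\textbf{Good-prefix barrier argument.} Here I mimic the proof of Theorem~\ref{thm:ideal}, conditionally on $\upsilon$ and with the state stopped at the first violation. Define $Y_t:=B_\phi^*(\xhat(t),t)\1_{G_t}$. Since $G_{t+1}\subseteq G_t$, $B\ge0$, and the exact drift bound holds on $G_{t+1}$, we get $\E[Y_{t+1}\mid\mathcal F_t]\le\kappa^*Y_t+\widetilde b_3^*$. Iterating this gives $\E[Y_T]\le\widetilde b_1^*(\kappa^*)^T+\widetilde b_3^*\cG_T(\kappa^*)$.

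On $G_T\cap H\cap\{z(T)>\alpha\}$ we have $Y_T\ge b_2^*$, because $\bar q_2=q_2\le0$ there. Markov's inequality then bounds the probability of this event by $\E[Y_T]/b_2^*$. Since $z(T)=\Dtraj$, we obtain
\[
\Prob(\Dtraj>\alpha)\le\Prob\bigl(\{z(T)>\alpha\}\cap G_T\cap H\bigr)+\Prob\bigl(G_T^c\cup H^c\bigr).
\]
Combining this with the transfer step and integrating over $\upsilon\sim\UU_{K,b}$ yields the displayed bound, and hence~\eqref{eq:final}.
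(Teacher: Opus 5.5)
Your proposal is correct and follows essentially the same route as the paper. Both arguments combine the simultaneous conformal event at the selected rank $k^*$; a first-exact-violation detection argument, with initial and terminal violations detected deterministically, that yields the transfer penalty $\bar v_{k^*}/(1-\beta_0)$ without a union bound over time; and the good-prefix recursion on $B_t\mathbf 1_{G_t}$, followed by Markov's inequality and the split of $\{\Dtraj>\alpha\}$ according to whether the exact adjusted conditions hold. One slip to fix: in the transfer step the displayed inequality should read $\Prob(\tau=t,\,S>S_{(k^*)})\ge(1-\beta_0)\Prob(\tau=t)$, since a detected exact violation forces $S>r^*\ge S_{(k^*)}$.
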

\vspace{-0.1cm}
Algorithm~\ref{alg:foserl} summarizes the workflow: synthesize and freeze
a candidate, calibrate on independent episodes, and accept when
$\bar q_4\le0$; otherwise, verification is inconclusive and does not
establish non-robustness. Confidence is per frozen specification;
adaptive and collection-level reporting are addressed in
Appendix~\ref{app:collection-confidence}.
The rationale for using a barrier witness rather than direct
terminal-event certification is discussed in
Sec.~\ref{sec:main-discussion} and
Appendix~\ref{app:discussion-direct}.

\vspace{-0.2cm}
\section{Experiments}
\label{sec:experiments}
\vspace{-0.2cm}

\begin{wrapfigure}{r}{0.51\textwidth}
\vspace{-0.9em}
\centering
\vspace{-0.2cm}
\refstepcounter{boxedalgorithm}
\begin{algobox}{Algorithm~\theboxedalgorithm: FoSeRL certification\vspace{-0.1cm}}
\label{alg:foserl}
\vspace{-0.1cm}
\noindent
\textbf{Input:}
fixed policy $\pi$; design magnitude $b_{\mathrm{cnd}}$;
tested specification $(K,b,\alpha,\rho)$;
synthesis law $\QQ^{\mathrm{syn}}_{K,b_{\mathrm{cnd}}}$;
operational law $\QQ_{K,b}$;
sample sizes $(N_s,M_s,N,M)$;
confidence parameters $(\beta,\beta_0)$.

\vspace{0.1em}

\begin{enumerate}[
    leftmargin=1.2em,
    label=\textbf{\arabic*.},
    itemsep=1.5pt,
    topsep=1.5pt,
    parsep=0pt,
    partopsep=0pt
]

\item
\textcolor{RoyalBlue}{\textbf{Certificate synthesis.}}
Draw $N_s$ coupled episodes from
$\QQ^{\mathrm{syn}}_{K,b_{\mathrm{cnd}}}$ and the required
$M_s$ successor samples.
Train $B_\phi$ and $(b_1,b_2,b_3,\kappa)$ via
\eqref{eq:loss}, then freeze the candidate.

\item
\textcolor{teal}{\textbf{Certificate verification.}}
With the supplied $M$, before drawing calibration data, fix
$\epsilon_M$ satisfying \eqref{eq:Mbound}, together with all remaining
verification-design choices. Draw $N$ independent calibration episodes under $\QQ_{K,b}$ and use fresh successor batches at the queried tuples, with successor randomness independent of the source episodes.
Compute the scores $S_i$ and $(r_k,\bar v_k)$, select $k^*$ by
\eqref{eq:select}, and construct $\bar q_4$ using
\eqref{eq:q4bar}.

\end{enumerate}

\textbf{\textcolor{Gray}{Return.}}

\centering
\textcolor{WildStrawberry}{
If no finite rank exists or $\bar q_4\!>\!0$, return
\emph{inconclusive}; o.w. certify
$\Prob_{(\upsilon,\omega)\sim\QQ_{K,b}}
(\Dtraj\!\le\!\alpha)\!\ge\!\rho$
with confidence at least $1\!-\!\beta$.}
\vspace{-0.2cm}
\end{algobox}

\vspace{-1em}
\end{wrapfigure}

We instantiate the two-stage procedure in
Algorithm~\ref{alg:foserl} on six continuous-control environments:
Inverted Pendulum (IP), Adaptive Cruise Control (ACC), Unicycle,
Hopper, HalfCheetah, and Ant. The first three are custom control tasks,
with IP and ACC following classical settings
\citep{astrom2000swinging,ames2014control}; Hopper, HalfCheetah, and
Ant use Gymnasium/MuJoCo
\citep{towers2024gymnasium,todorov2012mujoco}.
Across these environments, we certify frozen policies trained with
TD3 \citep{fujimoto2018addressing}, SAC
\citep{haarnoja2018soft}, and PPO
\citep{schulman2017proximal}. Each policy is trained before
certification and remains fixed throughout synthesis and verification.
Complete environment and policy settings, attack generators and norm budgets,
horizons, attack grids, and framework parameters are summarized in
Table~\ref{tab:fixed-case-study-settings} in
Appendix~\ref{app:fixed-experimental-settings}.

\vspace{-0.15cm}
\paragraph{Results.}
\label{sec:cert-results}
We evaluate FoSeRL on all six case studies using
Algorithm~\ref{alg:foserl}. For each tested $(K,b,\alpha)$ specification,
the candidate barrier is synthesized and frozen before independent verification
on fresh calibration episodes. All formal results target $\rho\!=\!0.90$ with
$\beta\!=\!0.001$, and each reported point carries a separate per-specification
confidence guarantee. Figure~\ref{fig:ant-qualitative} first provides a high-level view of what an
accepted FoSeRL budget means in closed-loop behavior.  In the auxiliary
$T\!=\!500$ Ant visualization, SAC and PPO are shown under the paired nominal
execution, their accepted FoSeRL budgets, and larger uncertified attacks, all
at $K/T\!=\!10\%$. In the shown accepted-budget rollouts,
$z(T)/\alpha=D_{\mathrm{traj}}/\alpha<1$, whereas the larger uncertified
attacks finish above the terminal tolerance and visibly collapse. 
\begin{wrapfigure}{r}{0.62\textwidth}
\vspace{-1em}
\centering
\includegraphics[width=\linewidth]{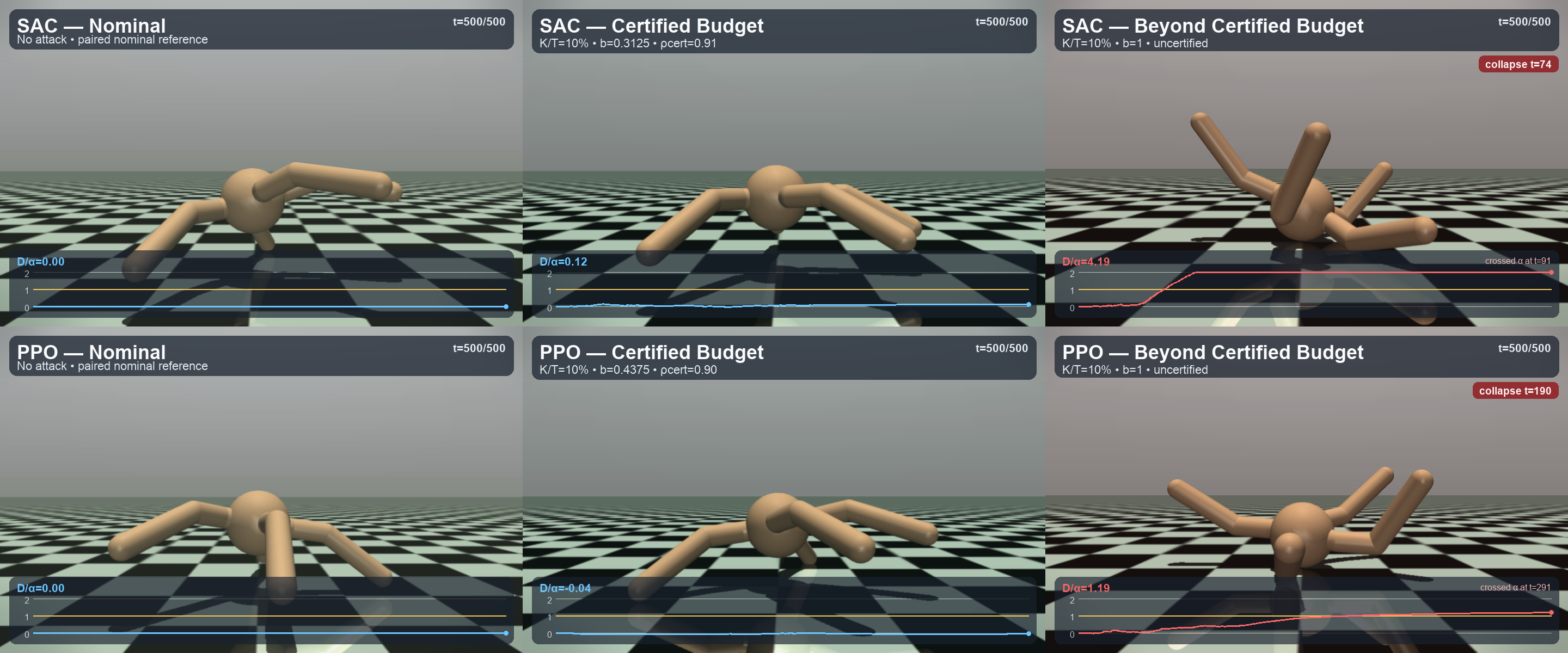}
\vspace{-1.6em}
\caption{\scriptsize
Illustrative Ant rollouts for SAC and PPO in a separate $T\!=\!500$ experiment. 
Each row shows a nominal rollout, an accepted FoSeRL budget, and a larger uncertified attack at $K/T\!=\!10\%$. 
The lower trace shows the running gap $z(t)/\alpha$; at $T$,
$z(T)=D_{\mathrm{traj}}$, and $1$ marks the terminal tolerance. Videos for this and additional case studies are available at
\href{https://figshare.com/s/9bb42c4050f749744af8}{this link}.}
\label{fig:ant-qualitative}
\vspace{-1.5em}
\end{wrapfigure}
This
experiment is used only for qualitative visualization; the quantitative Ant
frontiers and comparisons reported below use the main $T=1000$ campaign.
We report
$\alpha_{\%}\!:=\!100\,\alpha/S_{\mathrm{env}}$, where
$S_{\mathrm{env}}>0$ is a fixed environment-level return scale;
synthesis and verification use the corresponding raw threshold $\alpha$. Complete benchmark, policy,
attack, and certification settings are reported in
Table~\ref{tab:fixed-case-study-settings}, with additional environment details
in Appendix~\ref{app:case-studies}.
Across environments and policy families, FoSeRL yields non-trivial
cardinality--magnitude robustness frontiers. As shown in
Fig.~\ref{fig:main-cert-profiles}, larger degradation tolerances expand the
certified magnitude budget, while denser attacks contract it. For example, at
$\alpha_{\%}=5\%$, the Unicycle--TD3 budget decreases from about $1.0$ at
$K/T=5\%$ to about $0.3$ at $K/T=30\%$; for Ant--SAC at
$\alpha_{\%}=20\%$, it decreases from roughly $0.7$ to $0.23$. Thus,
robustness to isolated perturbations does not directly translate to robustness
under more frequent attacks.
Figure~\ref{fig:cert-vs-emp} shows that the certified frontiers generally
preserve the structure of their empirical design references, but with a
setting-dependent gap: some regimes remain close, whereas others lose a
substantial fraction of the empirical budget after independent verification.
Hence the verification conservatism is not a fixed penalty, but depends on the
policy, attack density, and degradation tolerance. Curves connect individually
verified specifications and do not certify interpolation. Representative
certificates are reported in Table~\ref{tab:main-baseline-results}, with
complete profiles in Appendix~\ref{app:certificate-reporting}.
\begin{figure}[!t]
\centering
\includegraphics[width=\linewidth]{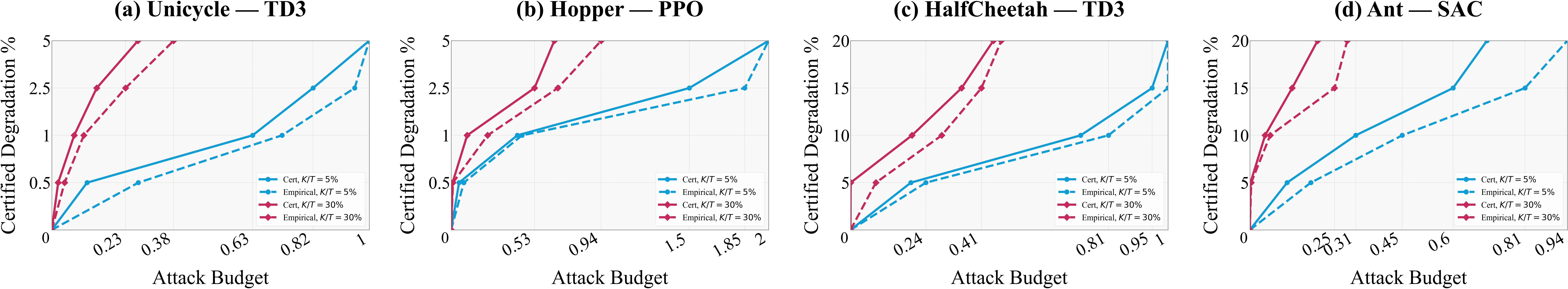}
\vspace{-0.65cm}
\caption{\footnotesize
Certified degradation tolerance versus attack budget across
different settings.
Larger tolerances expand the certified budget, whereas denser temporal
attacks generally contract the robustness frontier.
Markers denote individually verified specifications and connecting
lines are visual guides only. ($\rho=0.90$, $\beta=0.001$)
}
\label{fig:main-cert-profiles}

\vspace{0.2cm}

\includegraphics[width=\linewidth]{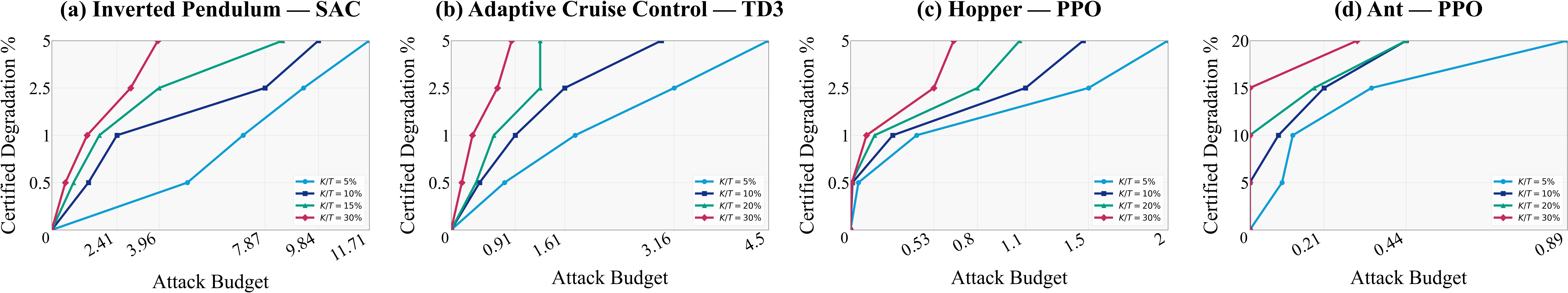}
\vspace{-0.6cm}
\caption{\footnotesize
Certified versus empirical attack-budget profiles across degradation
tolerances.
Solid curves show FoSeRL-certified budgets and dashed curves show the
corresponding empirical design references at matched $K/T$ values.
The empirical--certified gap varies across environments and policies. ($\rho=0.90$, $\beta=0.001$)
}
\label{fig:cert-vs-emp}
\vspace{-0.6cm}
\end{figure}

\newcommand{\envrule}{%
  \arrayrulecolor{black!20}%
  \specialrule{0.3pt}{0.4ex}{0.4ex}%
  \arrayrulecolor{black}%
}
\begin{wraptable}{r}{0.6\textwidth}
\centering
\vspace{-0.45cm}
\caption{\small
Representative formal certificates at selected
$(K/T,\alpha_{\%})$ settings.
Direct certification, policy smoothing (PS), and FoSeRL are compared by
$b_{\mathrm{cert}}$, $\rho_{\mathrm{cert}}$, and runtime (RT, seconds).
Direct uses the corresponding benchmark attack and norm specification,
whereas PS is evaluated under its native $\ell_2$ perturbation setting.
All results use $\rho\!=\!0.90$ and $\beta\!=\!0.001$.
FoSeRL remains non-trivial where PS fails (Failures are denoted by ``--'').}
\label{tab:main-baseline-results}

\vspace{-0.15cm}

\tiny
\setlength{\tabcolsep}{2pt}
\renewcommand{\arraystretch}{1}

\begin{tcolorbox}[
    enhanced,
    sharp corners,
    boxrule=0.45pt,
    colframe=black!75,
    colback=FoSeRLGray!5,
    boxsep=0pt,
    left=4pt,
    right=4pt,
    top=4pt,
    bottom=4pt,
    before skip=0pt,
    after skip=0pt
]

\resizebox{\linewidth}{!}{%
\begin{tabular}{
@{}
l@{\hspace{2pt}}
c@{\hspace{2pt}}
c@{\hspace{2pt}}
c@{\hspace{2pt}}
|
ccc
|
ccc
|
ccc
@{}
}
\toprule

{\multirow{2}{*}{\textbf{Env.}}}
&
{\multirow{2}{*}{\textbf{Policy}}}
&
{\multirow{2}{*}{$\mathbf{K/T}$}}
&
\multicolumn{1}{l}{\multirow{2}{*}{$\boldsymbol{\alpha_{\%}}$}}
&
\multicolumn{3}{c}{\textbf{Direct}}
&
\multicolumn{3}{c}{\textbf{PS}}
&
\multicolumn{3}{c}{\textbf{FoSeRL}}
\\

\cmidrule(lr){5-7}
\cmidrule(lr){8-10}
\cmidrule(lr){11-13}

&
&
&
&
$b_{\mathrm{cert}}$
&
$\rho_{\mathrm{cert}}$
&
RT
&
$b_{\mathrm{cert}}$
&
$\rho_{\mathrm{cert}}$
&
RT
&
$b_{\mathrm{cert}}$
&
$\rho_{\mathrm{cert}}$
&
RT
\\
\midrule

\multirow{3}{*}{IP}
& SAC & 8\%  & 5\%
& 4 & 0.92 & 14.1
& 0.0132 & 0.9 & 11.2
& 3.00 & 0.92 & 10.5
\\

& SAC & 13.5\% & 1\%
& 0.82 & 0.91 & 8.6
& -- & 0.9 & --
& 0.65 & 0.92 & 12.2
\\

& SAC & 20\% & 5\%
& 2.84 & 0.91 & 9.2
& 0.0083 & 0.9 & 11.2
& 2.50 & 0.92 & 15.5
\\

\envrule

\multirow{3}{*}{Hopper}
& TD3 & 10\% & 2.5\%
& 1.57 & 0.91 & 28.9
& 0.0014 & 0.9 & 357.3
& 1.32 & 0.93 & 96.4
\\
& SAC & 10\% & 2.5\%
& 1.31 & 0.94 & 48.5
& 0.0023 & 0.9 & 438.4
& 1.12 & 0.94 & 119.6
\\
& PPO & 10\% & 2.5\%
& 1.29 & 0.93 & 31.1
& -- & 0.9 & --
& 1.17 & 0.93 & 54.3
\\

\envrule

\multirow{3}{*}{HCH}
& TD3 & 30\% & 20\%
& 0.63 & 0.92 & 32.1
& 0.0003 & 0.9 & 345.3
& 0.45 & 0.92 & 139.5
\\
& SAC & 30\% & 20\%
& 0.26 & 0.94 & 47.7
& 0.0013 & 0.9 & 321.3
& 0.22 & 0.94 & 134.1
\\
& PPO & 30\% & 20\%
& 0.92 & 0.96 & 33.4
& -- & 0.9 & --
& 0.91 & 0.92 & 145
\\

\envrule

\multirow{3}{*}{Ant}
& TD3 & 20\% & 20\%
& 0.63 & 0.92 & 62.1
& -- & -- & --
& 0.49 & 0.91 & 94.7
\\
& SAC & 20\% & 20\%
& 0.39 & 0.9 & 81.1
& -- & -- & --
& 0.31 & 0.91 & 138.6
\\
& PPO & 20\% & 20\%
& 0.55 & 0.91 & 52.9
& -- & -- & --
& 0.44 & 0.90 & 95.6
\\

\bottomrule
\end{tabular}%
}

\end{tcolorbox}

\vspace{-1.5em}
\end{wraptable}
In addition, Table~\ref{tab:main-baseline-results} compares FoSeRL with direct
terminal-event certification and policy smoothing (PS)
\citep{kumar2022policy,mu2024reward}. Direct certification uses exactly the same benchmark attack and norm
specification as FoSeRL and can yield larger scalar budgets by bypassing
the sufficient barrier conditions, without requiring barrier synthesis or
conditional successor queries. Nevertheless, FoSeRL retains roughly
$71$--$99\%$ of the direct certified budget in the representative settings,
while additionally returning a state--time barrier witness and residual
diagnostics that expose how the certificate is limited along the closed-loop
evolution.
No existing external formal baseline matches all elements of our setting
simultaneously (detailed comparison provided in Appendix~\ref{app:rel}). We therefore adapt PS, the closest formal alternative, to the
same action channel, paired degradation event, and
reliability and confidence
targets, while retaining its native $\ell_2$ perturbation geometry. The
comparison should therefore be interpreted as an aligned external baseline
rather than a fully matched threat-model comparison. Under this alignment,
FoSeRL certifies orders-of-magnitude larger nonzero budgets in the comparable
settings, while PS is unable to certify a nonzero budget in several cases.
Detailed protocols, applicability conditions, and complete results are given
in Appendix~\ref{app:rs-comparison}.

\newpage
\section{Discussion}
\label{sec:main-discussion}
\vspace{-0.2cm}

FoSeRL provides post-hoc guarantees for the cumulative performance loss
of an unchanged policy under temporally sparse action attacks. Its
paired-state construction makes return degradation a terminal quantity,
connecting sequential robustness to stochastic barrier analysis and
independent finite-sample verification. Experiments across six
continuous-control environments and three RL policy families demonstrate
non-trivial cardinality--magnitude robustness profiles and larger
certified budgets than the evaluated smoothing baseline at the reported
settings.

\vspace{-0.22cm}
\paragraph{Robustness at the episode level.}
The joint budget $(K,b)$ distinguishes how many actions can be perturbed
from the strength of each perturbation, while cumulative degradation
captures consequences that persist after the attack ends. Pairing each
attacked execution with its nominal counterpart under shared initial
conditions and random inputs provides a consistent reference for this
loss. It avoids variability introduced by an independently randomized
comparison without removing stochastic variation in the attack's
consequences, and gives exactly zero degradation when no attack occurs.
The resulting tail criterion addresses excessive episode-level loss,
rather than only average performance or instantaneous action changes.
Coupling is therefore part of the robustness specification itself,
not merely a device for improving estimation.

\vspace{-0.22cm}
\paragraph{Policy- and environment-agnostic certification.}
FoSeRL separates certification from how the policy is obtained.
Under the stated execution and sampling assumptions, its argument does
not depend on a particular RL algorithm, policy architecture, policy
differentiability, or analytical family of dynamics. Learned and
analytically specified policies, including deployment-time randomization,
fit the same formulation. Relative to the robust-training,
policy-smoothing, and model-based approaches reviewed in
Appendix~\ref{app:rel}, its distinguishing feature is that certification
requires neither a robustness-specific policy modification nor a
separately learned dynamics surrogate. Instead, coupled rollouts and
fresh conditional successors support certification of the
\emph{policy as deployed}. This makes the framework applicable when
the original training pipeline is unavailable and complementary to
robust policy learning: a policy can be assessed regardless of whether
robustness was considered during its training.

\vspace{-0.22cm}
\paragraph{The value of a dynamical certificate.}
Direct terminal testing and FoSeRL address the same operational event
but provide different certifying objects. A one-sided binomial bound
on independent paired episodes can certify the terminal probability
without barrier synthesis or successor queries. FoSeRL additionally
provides a state--time witness linking initial conditions, expected
one-step evolution, and terminal separation. Its residuals and
reliability decomposition identify which certificate conditions and
sampled episode phases limit witness quality, supporting targeted
refinement. These are diagnostics, not certified conditional failure
probabilities at arbitrary states. Direct testing can yield tighter
budgets at lower cost by bypassing the sufficient barrier conditions.
The trade-off is therefore scalar tightness and computation versus
dynamical structure, not formal versus empirical validity.
Appendix~\ref{app:discussion-direct} develops this distinction.

\vspace{-0.22cm}
\paragraph{Formal assurance from finite samples.}
Separating synthesis from verification allows flexible neural candidate
construction while grounding validity in an independent statistical
test. Successor approximation uncertainty and out-of-sample violations
are incorporated into the end-to-end guarantee, rather than inferred
from training loss or empirical success rates. Episode-level
calibration accommodates temporal dependence within each trajectory,
and the empirical-to-exact transfer avoids an additional union-bound
penalty over time. Uniform satisfaction of the ideal barrier conditions
covers every admissible precommitted attack; the data-driven procedure
certifies the declared attack--episode law
(Theorem~\ref{thm:conformal}). This is a trade-off in attack coverage
that enables certification from finite generative data without global
verification of the closed-loop inequalities, while retaining a formal
guarantee at the prescribed confidence level.

\vspace{-0.22cm}
\paragraph{Extensibility and future directions.}
The common pipeline across different dynamics, horizons, action
dimensions, and policy families supports FoSeRL's portability.
Its modular design offers complementary improvements: better witness
synthesis can reduce representational conservatism, while additional
successor samples trade computation for smaller approximation margins.
 New specifications can be added incrementally using fresh
calibration and the confidence-control protocols of
Appendix~\ref{app:collection-confidence}, without predeclaring the
entire future experimental family. Alongside richer synthesis and more
efficient verification, a natural extension is certificate-aware policy
improvement, using witness diagnostics to guide refinement while
independently re-certifying each updated policy.

\newpage
\subsubsection*{Ethics Statement}
Our work provides a formal framework for assessing the robustness of deployed
decision policies against adversarial action perturbations. Although such
analyses could, in principle, be misused to identify vulnerable operating
regimes, we believe that exposing and quantifying these vulnerabilities is
important for the safe deployment of learned control policies. The societal
benefits of advancing robustness certification therefore outweigh the potential
risks, and we do not anticipate any immediate misuse arising from the methods or
experiments presented in this work. No human-subject or personal data are used.

\subsubsection*{Reproducibility Statement}
We have taken considerable care to ensure the reproducibility of our results.
The complete assumptions, proofs, and certification procedure are provided in
the main paper and appendix. Detailed descriptions of the environments,
policies, attack models and norm budgets, certificate architectures, and
synthesis and verification sample budgets are reported in
Sec.~\ref{sec:experiments} and Appendix~\ref{app:expdetails}. Qualitative
rollout videos for the reported case studies are provided through the link in Fig.~\ref{fig:ant-qualitative}. The complete codebase, along with the configuration files for every experiment, is available at
\href{https://figshare.com/s/9bb42c4050f749744af8}{this link}.

\subsubsection*{AI Use Statement}
This paper was entirely written by the authors. Large language models (LLMs), were used solely for final-stage language editing and polishing, without contributing to the scientific content or experimental results.

\bibliography{references}
\bibliographystyle{plainnat}

\newpage
\appendix

\section*{Appendix Overview}

\begingroup
\small
\setlength{\parindent}{0pt}
\setlength{\parskip}{0pt}

\newcommand{\appsectionentry}[2]{
  \par\addvspace{0.5em}
  \noindent\textbf{\ref{#1}\quad #2}\hfill\pageref{#1}\par
}
\newcommand{\appsubsectionentry}[2]{
  \noindent\hspace*{1em}\makebox[3em][l]{\ref{#1}}
  #2\dotfill\pageref{#1}\par
}

\appsectionentry{app:notation}{Notation}

\appsectionentry{app:formulation}{Additional Formulation and Verification Details}
\appsubsectionentry{app:attack-admissibility}{Attack admissibility and action projection}
\appsubsectionentry{app:coupling-semantics}{Coupling semantics}
\appsubsectionentry{app:frontier-monotonicity}{Monotonicity of the ideal robustness frontier}
\appsubsectionentry{app:analytical-cases}{Analytical boundary cases and zero tolerance}
\appsubsectionentry{app:common-barrier-limitation}{Conservatism of the barrier witness}
\appsubsectionentry{app:verification-planning}{Verification planning and sample cost}
\appsubsectionentry{app:collection-confidence}{Tested specifications and collection-level confidence}

\appsectionentry{app:proofs}{Proofs}
\appsubsectionentry{app:proof-augmented}{Augmented-state identity}
\appsubsectionentry{app:proof-ideal}{Proof of Theorem~\ref{thm:ideal}}
\appsubsectionentry{app:proof-finiteM}{Proof of Lemma~\ref{lem:finiteM}}
\appsubsectionentry{app:proof-calibration}{Proof of Lemma~\ref{lem:conformal}}
\appsubsectionentry{app:proof-foserl}{Proof of Theorem~\ref{thm:conformal}}

\appsectionentry{app:expdetails}{Additional Experimental Details and Results}
\appsubsectionentry{app:benchmark-formulation}{Benchmark formulation}
\appsubsectionentry{app:policy-families}{Policy families}
\appsubsectionentry{app:case-studies}{Case studies}
\appsubsectionentry{app:certificate-parameterization}{Certificate parameterization}
\appsubsectionentry{app:fixed-experimental-settings}{Fixed experimental settings}
\appsubsectionentry{app:certificate-workflow}
{Certificate construction and reporting workflow}
\appsubsectionentry{app:certificate-reporting}
{Complete FoSeRL certification results}
\appsubsectionentry{app:rs-comparison}{Comparison with Policy Smoothing}

\appsectionentry{sec:discussion}{Additional Discussion}
\appsubsectionentry{app:discussion-training}
{From training-time robustness to long-horizon certification}
\appsubsectionentry{app:discussion-calibration}
{Episode-level calibration and statistical validity}
\appsubsectionentry{app:discussion-probability-levels}
{Reliability, confidence, and violation levels}
\appsubsectionentry{app:discussion-direct}
{Barrier witnesses versus direct terminal certification}
\appsubsectionentry{app:discussion-scope}
{Generality, simulator access, and attack coverage}
\appsubsectionentry{app:discussion-frontiers}
{Reading robustness frontiers and selecting specifications}
\appsubsectionentry{app:discussion-performance}
{Performance robustness versus pathwise safety}
\appsubsectionentry{app:discussion-refinement}
{Certificate refinement and policy improvement}
\appsubsectionentry{app:discussion-limitations}
{Limitations and outlook}

\appsectionentry{app:rel}{Positioning, Baseline Selection, and Extended Comparisons}

\endgroup

\vspace{2cm}
\section{Notation}
\label{app:notation}

\noindent
We denote the sets of real, nonnegative real, and strictly positive real numbers by $\mathbb{R}$, $\mathbb{R}_{\ge 0}$, and $\mathbb{R}_{>0}$, respectively. The set of positive integers is denoted by $\mathbb{N}$, and we define $\mathbb{N}_0\!:=\!\{0\}\cup\mathbb{N}$. For integers $a,b\!\in\!\mathbb{N}_0$ with $a\le b$, we write $\mathbb{N}_{a:b}\!:=\!\{a,a+1,\ldots,b\}$, and for $N\!\in\!\mathbb{N}$ we use $[N]\!:=\!\{1,\ldots,N\}$. We define the time-index sets $\T:=\{0,\ldots,T\}$ and $\mathbb{T}'\!:=\!\{0,\ldots,T-1\}$. For a scalar $x\!\in\!\mathbb{R}$, $|x|$ and $\lceil x\rceil$ denote its absolute value and ceiling, respectively. The $d$-dimensional Euclidean space is denoted by $\mathbb{R}^d$, and for $x\!\in\!\mathbb{R}^d$ and $p\!\in\![1,\infty]$, $\|x\|_p$ denotes the $\ell_p$ norm, with $\|x\|_\infty\!:=\!\max_{i\!\in\![d]}|x_i|$. For a finite set $A$, $|A|$ denotes its cardinality. For a probability measure $\mu$, $\operatorname{supp}(\mu)$ denotes its support, while $\mathbb{E}[\cdot]$ and $\mathbb{P}(\cdot)$ denote expectation and probability with respect to the underlying probability space; conditional quantities are written in the standard form $\mathbb{E}[\cdot\mid\cdot]$ and $\mathbb{P}(\cdot\mid\cdot)$. The product of probability measures is denoted by $\otimes$, and $X_i\stackrel{\mathrm{i.i.d.}}{\sim}P$ indicates that the random variables $X_i$ are independent and identically distributed according to $P$. For an event $\mathcal E$, $\mathbf{1}_{\mathcal E}$ denotes its indicator. We use $\operatorname{ReLU}(x)\!:=\!\max\{0,x\}$ and $\operatorname{sigmoid}(x)\!:=\!(1+e^{-x})^{-1}$.

\clearpage

\section{Additional Formulation Details}
\label{app:formulation}
This section supplements Secs.~\ref{sec:setup}--\ref{sec:datadriven}
with attack admissibility, coupling semantics, ideal-frontier
monotonicity, and boundary cases of the certification framework.

\subsection{Attack admissibility and action projection}
\label{app:attack-admissibility}

The class $\Aadv(K,b)$ in Definition~\ref{def:attack} constrains the
precommitted schedule and perturbation vectors, not the state-dependent
command after projection. Along the attacked branch, the executed
action is $\ubar^a(t;\upsilon)=\Pi_{\cA}(\pi(x^a(t),w(t))+\Delta_t)$,
where $\Delta_t=0$ outside $\Tatk$. Thus an admissible attack is
specified before the episode begins, while the executed action remains
in $\cA$ for every realized state.
We do not test attack membership by requiring
$\pi(x^a(t),w(t))+\Delta_t\in\cA$ along the realized trajectory: such a
membership test would depend on that trajectory rather than only on
$(\Tatk,\Delta)$. Instead, projection is part of the attacked
closed-loop dynamics, and the budget $b$ always refers to
$\|\Delta_t\|_p$ before projection.
The norm used in a projection bound must be specified. For a closed
convex action set, Euclidean projection is nonexpansive in the
Euclidean norm. For each realized $u=\pi(x,w)\in\cA$,
\begin{equation}
\|\Pi_{\cA}(u+\delta)-u\|_2
=\|\Pi_{\cA}(u+\delta)-\Pi_{\cA}(u)\|_2
\le\|\delta\|_2.
\label{eq:app-projection-l2}
\end{equation}
For a box action set, projection is coordinatewise clipping and
$|[\Pi_{\cA}(u+\delta)]_j-u_j|\le|\delta_j|$ for every coordinate
$j$. Consequently, the same displacement bound holds in each of the
$\ell_1$, $\ell_2$, and $\ell_\infty$ norms for boxes. We do not
assert this all-norm property for arbitrary closed convex sets.
The attack class and its certification criterion remain defined by
the pre-projection budget and the specified projected dynamics.

\subsection{Coupling semantics}
\label{app:coupling-semantics}

The tail criterion in Definition~\ref{def:certperf} depends on the
joint coupling of the nominal and attacked executions, not only on
their trajectory marginals. To make this distinction explicit, fix
an attack $\upsilon$ and let $\mathsf C$ be any coupling with the
prescribed nominal and attacked marginals. Linearity of expectation
gives
\begin{equation}
\E_{\mathsf C}\!\left[
R_{\mathrm{nom}}-R_{\mathrm{att}}
\right]
=
J(\pi)-\E_\omega\!\left[
R_{\mathrm{att}}(\upsilon,\omega)
\right].
\label{eq:app-coupling-mean}
\end{equation}
Thus the mean degradation is coupling-invariant, whereas
$\Prob_{\mathsf C}(R_{\mathrm{nom}}-R_{\mathrm{att}}\!\le\!\alpha)$
generally is not. A coupling must therefore be fixed to define the
paired tail event.

Definition~\ref{def:coupling} uses the synchronous coupling induced
by the declared execution representation $(\pi,F,P_w)$.
Both branches share $x(0)$ and the complete random-input sequence,
including deployment-time policy randomization, while evaluating the
policy at their own states. Hence the certification statement is
relative to this fixed shared-randomness semantics, chosen before
synthesis and verification; the marginal transition kernel and policy
action law alone do not determine the paired tail distribution.

The synchronous coupling also gives an exact nominal anchor.
If $K\!=\!0$ or $b\!=\!0$ and
$x^0(t)\!=\!x^a(t)$, the shared $w(t)$ produces the same policy action
in both branches. Projection leaves that admissible action unchanged,
and the same $F$ and $w(t)$ then produce equal next states and rewards.
Induction from the common initial state yields
$\Dtraj(\upsilon,\omega)\!\equiv\!0$ for every $\omega$.
This is an analytical property of the declared coupling, not an
empirical calibration result.

The need to specify the coupling cannot be removed by specifying only
the marginals. For example, take $T\!=\!2$, $\gamma\!=\!1$,
$x(0)\!=\!0$, $\pi\!=\!0$, $\cS\!=\!\{0,1\}$,
$\cA\!=\![0,1]$, $r(x,u)\!=\!x$, and fair Bernoulli noise $w$.
Let $F_1(x,u,w)\!=\!w$, while
$F_2(x,u,w)\!=\!w$ for $u\!<\!1/2$ and
$F_2(x,u,w)\!=\!1-w$ otherwise. Both realize the same Bernoulli
transition kernel. A time-zero perturbation from zero to one gives
$\Dtraj\!=\!0$ under $F_1$ but
$\Dtraj\!=\!2w(0)-1$ under $F_2$.
At $\alpha\!=\!0$, the corresponding paired reliabilities are
one and one-half despite identical trajectory marginals.

\subsection{Monotonicity of the ideal robustness frontier}
\label{app:frontier-monotonicity}

Fix the policy, horizon, attack norm, coupling, and the pair
$(\alpha,\rho)$. The attack classes are nested:
\begin{align}
\Aadv(K,b)&\subseteq\Aadv(K,b'), && 0\le b\le b',
\label{eq:app-magnitude-inclusion}\\
\Aadv(K,b)&\subseteq\Aadv(K',b), && 0\le K\le K'\le T.
\label{eq:app-cardinality-inclusion}
\end{align}
Since Definition~\ref{def:certperf} quantifies over every admissible
attack, satisfaction for a larger class implies satisfaction for each
contained class.
Let $\mathcal B_K:=\{b\in\R_{\ge0}:\text{\eqref{eq:certified-forall}
holds}\}$ denote the set of magnitudes satisfying the ideal criterion.
By~\ref{eq:app-magnitude-inclusion}, this set is downward closed:
if $b'\in\mathcal B_K$, then $b\in\mathcal B_K$ for every
$0\le b\le b'$. It is nonempty because the zero-attack property
implies $0\in\mathcal B_K$. Moreover,
\ref{eq:app-cardinality-inclusion} gives
$\mathcal B_{K'}\subseteq\mathcal B_K$ when $K\le K'$, and therefore
\begin{equation}
b_{\mathrm{cert}}^*(K')\le b_{\mathrm{cert}}^*(K)
\qquad\text{for }K\le K'.
\label{eq:app-frontier-monotonicity}
\end{equation}
This monotonicity concerns the ideal per-attack objective and does not
require a sampled-prefix construction.
The distributional guarantee of Theorem~\ref{thm:conformal} has a
different scope. Under
$\QQ_{K,b}=\UU_{K,b}\otimes\Prob_\omega$, changing $K$ or $b$ may
change the attack sampling law; set inclusion alone therefore does
not imply monotonicity of the distributional success probability.
In particular, a certificate at one $(K,b)$ does not automatically
certify other budgets under different operational laws. This is a
distinction between the ideal and distributional criteria, not merely
a finite-sample effect.

\subsection{Conservatism of the barrier witness}
\label{app:common-barrier-limitation}

The conditions of Definition~\ref{def:barrier} are sufficient but not
necessary. Conservatism arises both from requiring one common barrier
across admissible attacks and from the pointwise initial-set bound.

\paragraph{Common witness across attacks.}
Let $T\!=\!2$, $K\!=\!b\!=\!\gamma\!=\!1$,
$\cS\!=\!\cA\!=\![-1,1]$, $x(0)\!=\!0$, and $\pi\!=\!0$.
Take $F(x,u,w)\!=\!w$ with fair $w\!\in\!\{-1,1\}$ and
$r(x,u)\!=\!-(u^2+xu)/2$, and set $\alpha\!=\!\rho\!=\!1/2$.
A time-zero perturbation $u$ gives
$\Dtraj\!=\!u^2/2\!\le\!1/2$.
A time-one perturbation gives $(u^2+su)/2$ at
$x(1)\!=\!s\!\in\!\{-1,1\}$; for each precommitted $u$, at most one
sign exceeds $1/2$. Thus every admissible precommitted attack satisfies
the target.

At the shared time-one states $(s,s,0)$, however, a common barrier must
also accommodate the respective attacks $u\!=\!s$, which give terminal
gap one. Hence
$\kappa B((s,s,0),1)+b_3\!\ge\!b_2$ for both signs, and the initial
drift inequality implies
$b_1\kappa^2+b_3(1+\kappa)\!\ge\!b_2$.
The resulting ideal barrier bound is nonpositive even though the
universal robustness property holds. This is conservatism of the common
witness, not a counterexample to Theorem~\ref{thm:ideal}.

\paragraph{Pointwise initial condition.}
The proof of Theorem~\ref{thm:ideal} only requires
$\E_{x\sim\mu_0}[B((x,x,0),0)]\!\le\!b_1$.
Definition~\ref{def:barrier} instead imposes the stronger pointwise
condition $B(\xhat,0)\!\le\!b_1$ on $\cX_0$.
We retain this form because the verifier directly calibrates the
corresponding residual $q_1$; no separate empirical test of the initial
expectation is introduced.

\paragraph{Unit reliability.}
For $T\!\ge\!1$ and
$b_1,b_3\!\in\!\R_{>0}$,
$b_1\kappa^T+b_3\cG_T(\kappa)\!>\!0$.
Hence the sufficient bound in Theorem~\ref{thm:ideal} cannot certify
$\rho\!=\!1$, even when the underlying robustness property holds.
This limitation concerns the certificate conditions, not the property
itself.
\subsection{Analytical boundary cases and zero tolerance}
\label{app:analytical-cases}

Some specifications are resolved analytically before invoking the
data-driven procedure. If $K\!=\!0$ or $b\!=\!0$, synchronous coupling
gives $\Dtraj(\upsilon,\omega)\!\equiv\!0$ pathwise, so
Definition~\ref{def:certperf} holds for every $\alpha\!\ge\!0$ and
$\rho\!\le\!1$. If $\alpha\!\ge\!2S_{\max}$, bounded rewards imply
$\Dtraj(\upsilon,\omega)\!\le\!2S_{\max}\!\le\!\alpha$ for every
admissible attack and episode. These are analytical certificates, not
finite-sample acceptances.

\paragraph{Zero-tolerance boundary.}
Assume $S_{\max}\!>\!0$ and that $B$ is continuous in $z$.
At $\alpha\!=\!0$, global terminal separation for all $z\!>\!0$
implies $B((x,x,0),T)\!\ge\!b_2$ by continuity.
Under zero attack, the actual terminal state has this form; hence the
ideal recursion cannot yield a positive reliability bound through such
a globally separating continuous candidate.
The analytical zero-attack certificate above bypasses this restriction.
The data-driven theorem instead controls violations under the
operational episode law and does not require global terminal separation.

\subsection{Verification planning and sample cost}
\label{app:verification-planning}

For one fixed tested specification, formal verification uses $N$
paired calibration episodes and $NTM$ augmented successor queries when
all $T$ time indices require Monte Carlo successor evaluation; each
augmented query advances both branches. A complete synthesis pass uses
$N_sTM_s$ successor queries, with additional cost if
successor batches are regenerated during optimization.

For a fixed $M$, the choice
$\epsilon_M\!=\!\Bmax\sqrt{\log(1/\beta_0)/(2M)}$
saturates the sufficient condition in~\eqref{eq:Mbound}.
The conformal calibration also imposes a finite-$N$ statistical floor.
Since the smallest rank bound is
$\bar v_N\!=\!1-(\beta/N)^{1/N}$, define
$c_{\mathrm{stat,min}}\!:=\!
[1-(\beta/N)^{1/N}]/(1-\beta_0)$.
Because the normalized barrier contribution is strictly positive,
acceptance requires
$c_{\mathrm{stat,min}}\!<\!1-\rho$.
This condition can be checked before calibration and expensive
successor sampling; failure means that $N$ must be increased or no
certificate can be obtained with that verification design.
For a planned positive dynamical contribution $d$, the stronger
planning condition is
$c_{\mathrm{stat,min}}\!\le\!1-\rho-d$.

Avoiding a union bound over the horizon does not eliminate horizon
dependence. For rank $k$, the calibration margin contributes
$r_k[(\kappa^*)^T+\cG_T(\kappa^*)]/b_2^*$ to the normalized
barrier term, reducing to $(T+1)r_k/b_2^*$ when $\kappa^*=1$.

\subsection{Scope across tested specifications and collection-level confidence}
\label{app:collection-confidence}

Theorem~\ref{thm:conformal} gives a separate confidence statement for
each frozen tested specification, including its policy, coupling,
attack budget, tolerance, operational law, candidate, and sampling
design. New specifications may therefore be added incrementally using
fresh calibration data; the complete future experimental family need
not be fixed in advance.

Simultaneous validity over conformal ranks within one specification does
not imply simultaneous validity across different specifications.
Consequently, a displayed collection of individually certified points
carries no automatic family-wise guarantee, interpolation guarantee, or
downward-closure claim when the operational law changes.
Likewise, selecting the largest accepted budget after repeated formal
tests does not preserve the same nominal confidence for that selected
output. For a single selected-budget claim at a stated confidence,
candidate selection may use design or previous search data, but the
selected specification must then be frozen and assessed using fresh
final calibration data. Repeating that final test until acceptance
requires additional confidence control.

For a simultaneous claim over a finite predeclared collection of $J$
specifications, fix all candidates and designs before calibration and
choose $\beta_j>0$ with
$\sum_{j=1}^{J}\beta_j\le\beta_{\mathrm{total}}$.
Applying Theorem~\ref{thm:conformal} to specification $j$ with
$\beta=\beta_j$ and taking a union bound yields probability at least
$1-\beta_{\mathrm{total}}$ that every accepted specification satisfies
its own operational target. Independence across specifications is not
required, provided each test satisfies its own calibration conditions.

The entire future family also need not be predeclared.
For sequential testing, one may fix a summable allocation such as
$\beta_j=\beta_{\mathrm{total}}/[j(j+1)]$.
Before attempt $j$, freeze its specification using only previously
available information and draw fresh calibration and successor samples
conditionally independently of that information.
The conditional false-acceptance probability at attempt $j$ is then at
most $\beta_j$; summing these bounds gives the same
$1-\beta_{\mathrm{total}}$ collection-level guarantee over all
attempts, including under stopping. Failed attempts still consume their
assigned confidence budget. These stronger protocols do not
retrospectively apply to runs performed at other confidence levels and
do not certify untested or interpolated budgets.

\section{Proofs}
\label{app:proofs}

This appendix establishes the theoretical results used in the main
text. We first establish the
augmented-state identity and prove the ideal sequential robustness
guarantee. We then prove the two statistical ingredients of the primary
data-driven verification method: the finite-sample supermartingale
approximation and the conformal calibration guarantee. These results are combined
directly with the stochastic barrier argument in the proof of the final
data-driven FoSeRL certificate.

\subsection{Augmented-state identity}
\label{app:proof-augmented}

By Definition~\ref{def:dtsds}, the third component of the augmented
state satisfies $z(0)=0$ and, for each $t\in\T'$,
\begin{equation}
z(t+1)-z(t)
=
\gamma^t
\left[
r(x^0(t),u^0(t))
-r(x^a(t),\ubar^a(t;\upsilon))
\right].
\label{eq:proof-zincrement}
\end{equation}
Summing~\ref{eq:proof-zincrement} over
$t=0,\ldots,T-1$ gives
\begin{equation}
\begin{aligned}
z(T)
&=
\sum_{t=0}^{T-1}
\gamma^t
\left[
r(x^0(t),u^0(t))
-r(x^a(t),\ubar^a(t;\upsilon))
\right] \\
&=
R_{\mathrm{nom}}(\omega)
-
R_{\mathrm{att}}(\upsilon,\omega)
=
\Dtraj(\upsilon,\omega).
\end{aligned}
\label{eq:proof-zT}
\end{equation}
Hence the terminal augmented coordinate is exactly the paired
trajectory-level degradation.
For completeness, fix any $\upsilon\in\Aadv(K,b)$.
Since the nominal branch depends only on $\omega$ and the fixed
policy $\pi$,
\begin{equation}
\E_\omega\!\left[R_{\mathrm{nom}}(\omega)\right]
=
J(\pi).
\end{equation}
Taking expectation in~\ref{eq:proof-zT} over $\omega$ therefore gives
\begin{equation}
\E_\omega\!\left[\Dtraj(\upsilon,\omega)\right]
=
J(\pi)
-
\E_\omega\!\left[R_{\mathrm{att}}(\upsilon,\omega)\right],
\end{equation}
which is the mean identity stated after
Definition~\ref{def:coupling}.

\subsection{Proof of Theorem~\ref{thm:ideal}}
\label{app:proof-ideal}

\begin{proof}
Let $B$ satisfy Definition~\ref{def:barrier} with constants
$b_1,b_2,b_3\in\R_{>0}$ and $\kappa\in(0,1]$. Fix an arbitrary
$\upsilon\in\Aadv(K,b)$. All probabilities and expectations below are
over the episode randomness $\omega$, conditional on the
fixed policy and execution representation.
Define
$B_t:=B(\xhat(t),t)$ and the filtration
\begin{equation}
\mathcal F_t
:=
\sigma
\big(
x(0),w(0),\ldots,w(t-1)
\big),
\qquad t\in\T .
\label{eq:proof-ideal-filtration}
\end{equation}
For fixed $\upsilon$, the augmented state $\xhat(t)$ and hence $B_t$
are $\mathcal F_t$-measurable.
Because $\xhat(0)\in\cX_0$ almost surely, the initial barrier
condition gives
\begin{equation}
\E_\omega[B_0]
\le b_1 .
\label{eq:proof-ideal-initial}
\end{equation}
Since $\xhat(t)\in\cX_t^\upsilon$ almost surely and $w(t)$ is independent of $\mathcal F_t$ with law $P_w$ by Assumption~\ref{ass:noise}, the third condition in Definition~\ref{def:barrier} and the tower property give
\begin{equation}
\begin{aligned}
\E_\omega[B_{t+1}]
&=
\E_\omega\!\left[
\E_\omega\!\left[
B_{t+1}\mid\mathcal F_t
\right]
\right]\\
&\le
\kappa\E_\omega[B_t]+b_3,
\qquad t\in\T'.
\end{aligned}
\label{eq:proof-ideal-recursion}
\end{equation}
The conditional barrier inequality implies that the process
$\kappa^{T-t}B_t+b_3\sum_{j=0}^{T-t-1}\kappa^j$,
$t\in\T$, is a nonnegative supermartingale with respect to
$\{\mathcal F_t\}_{t\in\T}$, where an empty sum is zero.
Iterating~\ref{eq:proof-ideal-recursion} and using
\ref{eq:proof-ideal-initial} yields
\begin{equation}
\begin{aligned}
\E_\omega[B_T]
\le
b_1\kappa^T
+
b_3\cG_T(\kappa).
\end{aligned}
\label{eq:proof-ideal-terminal}
\end{equation}
By~\ref{eq:proof-zT}, the terminal unsafe event is
\begin{equation}
\left\{
\xhat(T)\in\cX_u^\alpha
\right\}
=
\left\{
\Dtraj(\upsilon,\omega)>\alpha
\right\}.
\label{eq:proof-ideal-unsafe}
\end{equation}
On this event the terminal separation condition gives
$B_T\ge b_2$. Since $B_T\ge0$, Markov's inequality and
\ref{eq:proof-ideal-terminal} imply
\begin{equation}
\Prob_\omega
\left(
\Dtraj(\upsilon,\omega)>\alpha
\right)
\le
\frac{
b_1\kappa^T+b_3\cG_T(\kappa)
}{b_2}.
\label{eq:proof-ideal-failure}
\end{equation}
Taking complements gives
\begin{equation}
\Prob_\omega
\left(
\Dtraj(\upsilon,\omega)\le\alpha
\right)
\ge
1-
\frac{
b_1\kappa^T+b_3\cG_T(\kappa)
}{b_2}.
\label{eq:proof-ideal-safe}
\end{equation}
Finally, the scalar condition in Definition~\ref{def:barrier} implies
\begin{equation}
1-
\frac{
b_1\kappa^T+b_3\cG_T(\kappa)
}{b_2}
\ge\rho .
\label{eq:proof-ideal-rho}
\end{equation}
Because $\upsilon$ was arbitrary, the bound holds for every
$\upsilon\in\Aadv(K,b)$, proving~\ref{eq:idealbound} and completing
the proof.
\end{proof}

\subsection{Proof of Lemma~\ref{lem:finiteM}}
\label{app:proof-finiteM}

\begin{proof}
Fix $(\xhat,\upsilon,t)\in\mathcal Z_3$. The candidate and verification choices are fixed; the only
randomness considered here is that of the fresh successor samples. Let
$Y_m:=B_\phi^*(f(\xhat,\upsilon,w_m,t),t+1)$,
$\widehat\mu_M:=M^{-1}\sum_{m=1}^{M}Y_m$, and
$\mu:=\E_w[Y_m\mid\xhat,\upsilon]$.
By the assumed output bound, $Y_m\in(0,\Bmax)$ almost surely, and
$Y_1,\ldots,Y_M$ are conditionally i.i.d.\ given
$(\xhat,\upsilon,t)$.
Define the approximation event
$E_{\mathrm{Approx}}
:=\{q_3(\xhat,\upsilon,t)\le
\bar q_3(\xhat,\upsilon,t)\}$.
At the fixed tuple, $\overline E_M^*=\widehat\mu_M+\epsilon_M$
by~\eqref{eq:Ebar}, while the other terms in $q_3$ and
$\bar q_3$ coincide.
Therefore, $\neg E_{\mathrm{Approx}}$ is exactly the event
$\mu-\widehat\mu_M>\epsilon_M$.
Applying the one-sided Hoeffding inequality to the conditionally
i.i.d.\ variables $Y_m\in(0,\Bmax)$, and then using
~\ref{eq:Mbound}, gives
\begin{equation}
\begin{aligned}
\Prob_W\!\left(
\neg E_{\mathrm{Approx}}
\,\middle|\,
\xhat,\upsilon,t
\right)
&=
\Prob_W\!\left(
\mu-\widehat\mu_M>\epsilon_M
\,\middle|\,
\xhat,\upsilon,t
\right)\\
&\le
\exp\!\left(
-\frac{2M(\epsilon_M)^2}{(\Bmax)^2}
\right)
\le\beta_0 .
\end{aligned}
\label{eq:proof-finiteM-failure}
\end{equation}
Taking complements therefore yields
\begin{equation}
\Prob_W\!\left(
E_{\mathrm{Approx}}
\,\middle|\,
\xhat,\upsilon,t
\right)
=
\Prob_W\!\left(
q_3(\xhat,\upsilon,t)
\le
\bar q_3(\xhat,\upsilon,t)
\,\middle|\,
\xhat,\upsilon,t
\right)
\ge1-\beta_0 .
\label{eq:proof-finiteM-success}
\end{equation}
Hence the corrected empirical supermartingale residual upper-bounds the
exact supermartingale residual at the fixed state--attack--time tuple with
the confidence stated in Lemma~\ref{lem:finiteM}.
\end{proof}

\subsection{Proof of Lemma~\ref{lem:conformal}}
\label{app:proof-calibration}

\begin{proof}
Condition on the synthesis data and on all quantities fixed before
calibration. Under this conditioning, the score construction
in~\ref{eq:score} is fixed, and the calibration scores
$S_1,\ldots,S_N$ are i.i.d.\ with the same distribution as the score
$S(\bar\zeta)$ of a fresh independent augmented episode
$\bar\zeta$.
Let
\begin{equation}
F(s)
:=
\Prob_{\bar\zeta}\!\left(
S(\bar\zeta)\le s
\right),
\qquad
s\in\overline{\R}:=\R\cup\{+\infty\},
\label{eq:proof-conformal-cdf}
\end{equation}
denote the population distribution function of the score. The score
distribution may contain atoms, including an atom at $+\infty$
induced by hard unsafe-terminal violations; no continuity or
tie-breaking assumption is required.
For each rank $k\!\in\![N]$, define the conditional exceedance probability
associated with the $k$-th order statistic by
\begin{equation}
V_k(\mathcal D_{\mathrm{cal}})
:=
\Prob_{\bar\zeta}\!\left(
S(\bar\zeta)>S_{(k)}
\,\middle|\,
\mathcal D_{\mathrm{cal}}
\right)
=
1-F\!\left(S_{(k)}\right).
\label{eq:proof-conformal-Vk}
\end{equation}
We first control $V_k$ for a fixed rank $k$ and a fixed
$v\in[0,1)$. Define the population $(1-v)$-quantile
\begin{equation}
c_v
:=
\inf
\left\{
s\in\overline{\R}:
F(s)\ge1-v
\right\},
\label{eq:proof-conformal-quantile}
\end{equation}
and let
\begin{equation}
p_v^{-}
:=
\Prob_{\bar\zeta}\!\left(
S(\bar\zeta)<c_v
\right).
\label{eq:proof-conformal-pminus}
\end{equation}
By the definition of $c_v$, for every $s<c_v$,
\begin{equation}
F(s)<1-v.
\label{eq:proof-conformal-below-quantile}
\end{equation}
Therefore, by continuity of probability from below,
\begin{equation}
p_v^{-}
=
\Prob_{\bar\zeta}\!\left(
S(\bar\zeta)<c_v
\right)
=
\sup_{s<c_v}F(s)
\le
1-v.
\label{eq:proof-conformal-pminus-bound}
\end{equation}
Suppose now that $V_k(\mathcal D_{\mathrm{cal}})>v$. By
\ref{eq:proof-conformal-Vk}, this implies
\begin{equation}
F\!\left(S_{(k)}\right)
<
1-v.
\label{eq:proof-conformal-bad-rank}
\end{equation}
By the definition of $c_v$ in
\ref{eq:proof-conformal-quantile},
\ref{eq:proof-conformal-bad-rank} implies
\begin{equation}
S_{(k)}<c_v.
\label{eq:proof-conformal-order-below}
\end{equation}
Hence at least $k$ of the $N$ calibration scores are strictly smaller
than $c_v$. Defining
\begin{equation}
N_v
:=
\sum_{i=1}^{N}
\mathbf 1_{\{S_i<c_v\}},
\label{eq:proof-conformal-count}
\end{equation}
we therefore have the event inclusion
\begin{equation}
\left\{
V_k(\mathcal D_{\mathrm{cal}})>v
\right\}
\!\subseteq\!
\left\{
N_v\ge k
\right\}.
\label{eq:proof-conformal-event-inclusion}
\end{equation}
Since $c_v$ depends only on the fixed population score distribution,
the indicators in~\ref{eq:proof-conformal-count} are i.i.d.\
Bernoulli random variables with success probability $p_v^{-}$.
Therefore,
\begin{equation}
N_v
\sim
\operatorname{Binomial}
\left(
N,p_v^{-}
\right).
\label{eq:proof-conformal-binomial-law}
\end{equation}
Using~\ref{eq:proof-conformal-event-inclusion}, the binomial law in
\ref{eq:proof-conformal-binomial-law}, the bound
$p_v^{-}\le1-v$ from~\ref{eq:proof-conformal-pminus-bound}, and
monotonicity of the binomial upper tail in its success probability,
we obtain
\begin{equation}
\begin{aligned}
\Prob_{\mathcal D_{\mathrm{cal}}}\!\left(
V_k(\mathcal D_{\mathrm{cal}})>v
\right)
&\le
\Prob_{\mathcal D_{\mathrm{cal}}}\!\left(
N_v\ge k
\right)\\
&=
\sum_{j=k}^{N}
\binom Nj
(p_v^{-})^j
(1-p_v^{-})^{N-j}\\
&\le
\sum_{j=k}^{N}
\binom Nj
(1-v)^j
v^{N-j}.
\end{aligned}
\label{eq:proof-conformal-fixed-rank-tail}
\end{equation}
We now set $v=\bar v_k$, where $\bar v_k$ is defined by
\ref{eq:conformal-vk}. By construction,
\begin{equation}
\sum_{j=k}^{N}
\binom Nj
(1-\bar v_k)^j
\bar v_k^{\,N-j}
\le
\frac{\beta}{N}.
\label{eq:proof-conformal-vk-definition}
\end{equation}
Combining~\ref{eq:proof-conformal-fixed-rank-tail} and
\ref{eq:proof-conformal-vk-definition} gives, for every fixed
$k\!\in\![N]$,
\begin{equation}
\Prob_{\mathcal D_{\mathrm{cal}}}\!\left(
V_k(\mathcal D_{\mathrm{cal}})>\bar v_k
\right)
\le
\frac{\beta}{N}.
\label{eq:proof-conformal-fixed-rank}
\end{equation}
Define the rank-wise validity events
\begin{equation}
E_k
:=
\left\{
V_k(\mathcal D_{\mathrm{cal}})
\le
\bar v_k
\right\},
\qquad
k\!\in\![N].
\label{eq:proof-conformal-Ek}
\end{equation}
Equation~\ref{eq:proof-conformal-fixed-rank} implies
\begin{equation}
\Prob_{\mathcal D_{\mathrm{cal}}}\!\left(
\neg E_k
\right)
\le
\frac{\beta}{N},
\qquad
k\!\in\![N].
\label{eq:proof-conformal-Ek-failure}
\end{equation}
Applying a union bound over all $N$ ranks yields
\begin{equation}
\begin{aligned}
\Prob_{\mathcal D_{\mathrm{cal}}}\!\left(
\bigcap_{k=1}^{N}E_k
\right)
&=
1-
\Prob_{\mathcal D_{\mathrm{cal}}}\!\left(
\bigcup_{k=1}^{N}\neg E_k
\right)\\
&\ge
1-
\sum_{k=1}^{N}
\Prob_{\mathcal D_{\mathrm{cal}}}\!\left(
\neg E_k
\right)\\
&\ge
1-\beta .
\end{aligned}
\label{eq:proof-conformal-simultaneous}
\end{equation}
Using the definition of $V_k$ in
\ref{eq:proof-conformal-Vk}, the event in
\ref{eq:proof-conformal-simultaneous} is exactly
\begin{equation}
\left\{
\forall k\!\in\![N]:
\Prob_{\bar\zeta}\!\left(
S(\bar\zeta)>S_{(k)}
\,\middle|\,
\mathcal D_{\mathrm{cal}}
\right)
\le
\bar v_k
\right\}.
\label{eq:proof-conformal-final-event}
\end{equation}
Combining~\eqref{eq:proof-conformal-simultaneous} and
\eqref{eq:proof-conformal-final-event} yields
\begin{equation}
\Prob_{\mathcal D_{\mathrm{cal}}}\!\left[
\forall k\in[N]:\quad
\Prob_{\bar\zeta}\!\left(
S(\bar\zeta)>S_{(k)}\mid\mathcal D_{\mathrm{cal}}
\right)\le\bar v_k
\right]\ge1-\beta.
\label{eq:conformal}
\end{equation}
This proves Lemma~\ref{lem:conformal}.
In particular, the guarantee holds simultaneously for all ranks on
the same calibration-validity event. Therefore, a rank may
subsequently be selected as a function of the calibration data,
including through~\ref{eq:select}, without introducing any
additional confidence penalty.
\end{proof}


\subsection{Proof of Theorem~\ref{thm:conformal}}
\label{app:proof-foserl}

\begin{proof}
Condition on all pre-calibration choices, including the data used
for construction or selection. Under this conditioning, the candidate,
constants, operational law, and verification design are fixed, and
the calibration episodes are independent copies drawn as in
Definition~\ref{def:score}.
Define the simultaneous conformal-validity event
\begin{equation}
E_{\mathrm{Conf}}
:=
\left\{
\forall k\!\in\![N]:
\Prob_{\bar\zeta}\!\left(
S(\bar\zeta)>S_{(k)}
\,\middle|\,
\mathcal D_{\mathrm{cal}}
\right)
\le
\bar v_k
\right\}.
\label{eq:proof-theorem-conf-event}
\end{equation}
By Lemma~\ref{lem:conformal},
\begin{equation}
\Prob_{\mathcal D_{\mathrm{cal}}}\!\left(
E_{\mathrm{Conf}}
\right)
\ge
1-\beta .
\label{eq:proof-theorem-conf}
\end{equation}
Because the bounds in~\ref{eq:proof-theorem-conf-event} hold
simultaneously for every rank $k\!\in\![N]$, the data-dependent selection
of $k^*$ in~\ref{eq:select} does not incur an additional confidence
penalty.
Fix an arbitrary realization of $\mathcal D_{\mathrm{cal}}$ belonging
to $E_{\mathrm{Conf}}$ for which $\bar q_4\le0$.
By the empty-rank convention in Sec.~\ref{sec:verification},
this implies $\mathcal K_{\mathrm{fin}}\neq\varnothing$. Hence $k^*$, $r^*$,
$\widetilde b_1^*$, $\widetilde b_3^*$, and $\bar v_{k^*}$ are
well defined and fixed for the remainder of the argument. All
probabilities below involving a fresh episode are evaluated for this
fixed calibration realization; we suppress this conditioning for
notational simplicity.
For a fresh augmented episode
$\bar\zeta=(\upsilon,\omega,W)$, define the selected empirical barrier
event
\begin{equation}
E_{M,*}
:=
\left\{
\bar q_1(\xhat(0),\upsilon,0)\le r^*,\;
\bar q_2(\xhat(T),\upsilon,T)\le0,\;
\max_{t\in\T'}
\bar q_3(\xhat(t),\upsilon,t)\le r^*
\right\}.
\label{eq:proof-theorem-empirical-event}
\end{equation}
Since $k^*\in\mathcal K_{\mathrm{fin}}$,
\begin{equation}
S_{(k^*)}<+\infty,
\qquad
r^*
=
\max\left\{
0,S_{(k^*)}
\right\}.
\label{eq:proof-theorem-rstar}
\end{equation}
If $\neg E_{M,*}$ occurs, then either
$\bar q_2(\xhat(T),\upsilon,T)>0$, in which case
$S(\bar\zeta)=+\infty$, or one of the calibrated residuals exceeds
$r^*$. In the latter case,
\begin{equation}
S(\bar\zeta)
>
r^*
\ge
S_{(k^*)}.
\label{eq:proof-theorem-score-exceed}
\end{equation}
Therefore,
\begin{equation}
\neg E_{M,*}
\!\subseteq\!
\left\{
S(\bar\zeta)>S_{(k^*)}
\right\}.
\label{eq:proof-theorem-score-inclusion}
\end{equation}
Since the fixed calibration realization belongs to
$E_{\mathrm{Conf}}$, equations
\ref{eq:proof-theorem-conf-event} and
\ref{eq:proof-theorem-score-inclusion} imply
\begin{equation}
V_{M,*}
:=
\Prob_{\bar\zeta}\!\left(
\neg E_{M,*}
\right)
\le
\bar v_{k^*}.
\label{eq:proof-theorem-empirical-risk}
\end{equation}
We next transfer this empirical violation probability to the exact
state-dependent barrier conditions. For the corresponding episode
$\zeta=(\upsilon,\omega)\sim\QQ_{K,b}$, define the exact adjusted
barrier event
\begin{equation}
E_{\mathrm{Barrier},*}
:=
\left\{
q_1(\xhat(0),\upsilon,0)\le r^*,\;
q_2(\xhat(T),\upsilon,T)\le0,\;
q_3(\xhat(t),\upsilon,t)\le r^*
\ \forall t\in\T'
\right\}.
\label{eq:proof-theorem-barrier-event}
\end{equation}
Thus, on $E_{\mathrm{Barrier},*}$, the initial and supermartingale
conditions hold with the adjusted constants
\begin{equation}
\widetilde b_1^*
=
b_1^*+r^*,
\qquad
\widetilde b_3^*
=
b_3^*+r^*,
\label{eq:proof-theorem-adjusted-constants}
\end{equation}
while the unsafe-terminal condition retains the original threshold
$b_2^*$.
Let
\begin{equation}
V_*
:=
\Prob_{\QQ_{K,b}}\!\left(
\neg E_{\mathrm{Barrier},*}
\right).
\label{eq:proof-theorem-exact-risk-def}
\end{equation}
If $V_*=0$, the empirical-to-exact risk bound established below is
immediate. Assume therefore that $V_*>0$.
On an episode in $\neg E_{\mathrm{Barrier},*}$, let
$E_{\mathrm{Det},*}$ denote the event that the finite-$M$ test detects
at least one exact adjusted barrier violation. If
$q_1(\xhat(0),\upsilon,0)>r^*$ or
$q_2(\xhat(T),\upsilon,T)>0$, the violation is detected
deterministically because
\begin{equation}
\bar q_1=q_1,
\qquad
\bar q_2=q_2.
\label{eq:proof-theorem-deterministic-residuals}
\end{equation}
Otherwise, an exact supermartingale violation must occur. On this
remaining event, define the first such time by
\begin{equation}
\tau
:=
\min
\left\{
t\in\T':
q_3(\xhat(t),\upsilon,t)>r^*
\right\}.
\label{eq:proof-theorem-first-violation}
\end{equation}
Conditional on the complete episode $\zeta$, the index $\tau$ and the
tuple $(\xhat(\tau),\upsilon,\tau)$ are fixed, whereas the successor
batch associated with that tuple is fresh and conditionally i.i.d.
By Lemma~\ref{lem:finiteM},
\begin{equation}
\Prob_W\!\left(
q_3(\xhat(\tau),\upsilon,\tau)
\le
\bar q_3(\xhat(\tau),\upsilon,\tau)
\,\middle|\,
\zeta
\right)
\ge
1-\beta_0 .
\label{eq:proof-theorem-first-violation-detection}
\end{equation}
Since the exact residual at time $\tau$ already satisfies
$q_3(\xhat(\tau),\upsilon,\tau)>r^*$, the event in
\ref{eq:proof-theorem-first-violation-detection} implies
\begin{equation}
\bar q_3(\xhat(\tau),\upsilon,\tau)
>
r^*,
\label{eq:proof-theorem-empirical-detection}
\end{equation}
and therefore produces an empirical violation.
Consequently, whether the exact violation occurs in $q_1$, $q_2$, or
$q_3$, its conditional probability of being detected is at least
$1-\beta_0$. Hence
\begin{equation}
\Prob_{\bar\zeta}\!\left(
E_{\mathrm{Det},*}
\,\middle|\,
\neg E_{\mathrm{Barrier},*}
\right)
\ge
1-\beta_0 .
\label{eq:proof-theorem-detection}
\end{equation}
Moreover, every detected exact violation is also an empirical
violation, so
\begin{equation}
E_{\mathrm{Det},*}
\cap
\neg E_{\mathrm{Barrier},*}
\!\subseteq\!
\neg E_{M,*}.
\label{eq:proof-theorem-detection-inclusion}
\end{equation}
Using~\ref{eq:proof-theorem-detection} and
\ref{eq:proof-theorem-detection-inclusion},
\begin{align}
V_{M,*}
&\ge
\Prob_{\bar\zeta}\!\left(
E_{\mathrm{Det},*}
\cap
\neg E_{\mathrm{Barrier},*}
\right),
\label{eq:proof-theorem-transfer-a}\\
&=
\Prob_{\bar\zeta}\!\left(
E_{\mathrm{Det},*}
\,\middle|\,
\neg E_{\mathrm{Barrier},*}
\right)
V_*,
\label{eq:proof-theorem-transfer-b}\\
&\ge
(1-\beta_0)V_* .
\label{eq:proof-theorem-transfer-c}
\end{align}
Combining~\ref{eq:proof-theorem-empirical-risk} and
\ref{eq:proof-theorem-transfer-c} yields
\begin{equation}
V_*
\le
\frac{V_{M,*}}{1-\beta_0}
\le
\frac{\bar v_{k^*}}{1-\beta_0}.
\label{eq:proof-theorem-exact-risk}
\end{equation}
The argument uses only the first exact supermartingale violation after
conditioning on the complete episode. Lemma~\ref{lem:finiteM} is
therefore invoked once per exact violating episode, and no union bound
over the horizon is required.
It remains to relate satisfaction of the exact adjusted barrier
conditions to the trajectory-level degradation event. Define the
trajectory-safety event
\begin{equation}
E_{\mathrm{Safe}}
:=
\left\{
\Dtraj(\upsilon,\omega)\le\alpha
\right\},
\label{eq:proof-theorem-safe-event}
\end{equation}
and let
\begin{equation}
B_t
:=
B_\phi^*(\xhat(t),t).
\label{eq:proof-theorem-Bt}
\end{equation}
With
\begin{equation}
\mathcal F_t
:=
\sigma
\big(
\upsilon,x(0),w(0),\ldots,w(t-1)
\big),
\qquad
t\in\T,
\label{eq:proof-theorem-filtration}
\end{equation}
define the good-prefix events
\begin{equation}
G_0
:=
\left\{
q_1(\xhat(0),\upsilon,0)\le r^*
\right\},
\qquad
G_{t+1}
:=
G_t
\cap
\left\{
q_3(\xhat(t),\upsilon,t)\le r^*
\right\},
\quad
t\in\T'.
\label{eq:proof-theorem-prefix}
\end{equation}
By Assumption~\ref{ass:noise} and the product law in
Definition~\ref{def:jointlaws}, $w(t)$ is independent of
$\mathcal F_t$ with law $P_w$. Thus the expectation in
\eqref{eq:q3} satisfies, along the trajectory,
\begin{equation}
\E\!\left[B_{t+1}\mid\mathcal F_t\right]
=
\E_{w\sim P_w}\!\left[
B_\phi^*\!\left(f(\xhat(t),\upsilon,w,t),t+1\right)
\right].
\label{eq:proof-theorem-conditional-third}
\end{equation}
On $G_{t+1}$, the exact adjusted supermartingale condition therefore
gives
\begin{equation}
\E\!\left[
B_{t+1}
\,\middle|\,
\mathcal F_t
\right]
\le
\kappa^*B_t+\widetilde b_3^*.
\label{eq:proof-theorem-adjusted-third}
\end{equation}
Moreover, $G_{t+1}$ is $\mathcal F_t$-measurable,
$G_{t+1}\!\subseteq\! G_t$, and $B_t\ge0$. Thus, by the tower property,
\begin{equation}
\begin{aligned}
\E\!\left[
B_{t+1}\mathbf 1_{G_{t+1}}
\right]
&=
\E\!\left[
\mathbf 1_{G_{t+1}}
\E\!\left[
B_{t+1}
\,\middle|\,
\mathcal F_t
\right]
\right]\\
&\le
\kappa^*
\E\!\left[
B_t\mathbf 1_{G_{t+1}}
\right]
+
\widetilde b_3^*
\Prob(G_{t+1})\\
&\le
\kappa^*
\E\!\left[
B_t\mathbf 1_{G_t}
\right]
+
\widetilde b_3^* .
\end{aligned}
\label{eq:proof-theorem-prefix-recursion}
\end{equation}
On $G_0$, the adjusted initial condition gives
\begin{equation}
B_0
\le
b_1^*+r^*
=
\widetilde b_1^*.
\label{eq:proof-theorem-adjusted-initial}
\end{equation}
Iterating~\ref{eq:proof-theorem-prefix-recursion} from
\ref{eq:proof-theorem-adjusted-initial} yields
\begin{equation}
\E\!\left[
B_T\mathbf 1_{G_T}
\right]
\le
\widetilde b_1^*(\kappa^*)^T
+
\widetilde b_3^*\cG_T(\kappa^*).
\label{eq:proof-theorem-terminal-bound}
\end{equation}
On
$\neg E_{\mathrm{Safe}}\cap E_{\mathrm{Barrier},*}$,
all good-prefix conditions hold. In addition,
$\neg E_{\mathrm{Safe}}$ implies
$\xhat(T)\in\cX_u^\alpha$. Since
$q_2(\xhat(T),\upsilon,T)\le0$ on
$E_{\mathrm{Barrier},*}$, the unsafe-terminal residual
\ref{eq:q2} gives
\begin{equation}
B_T
\ge
b_2^*.
\label{eq:proof-theorem-terminal-separation}
\end{equation}
Consequently,
\begin{equation}
\begin{aligned}
b_2^*
\Prob_{\QQ_{K,b}}\!\left(
\neg E_{\mathrm{Safe}}
\cap
E_{\mathrm{Barrier},*}
\right)
&\le
\E\!\left[
B_T
\mathbf 1_{\{
\neg E_{\mathrm{Safe}}
\cap
E_{\mathrm{Barrier},*}
\}}
\right]\\
&\le
\E\!\left[
B_T\mathbf 1_{G_T}
\right]\\
&\le
\widetilde b_1^*(\kappa^*)^T
+
\widetilde b_3^*\cG_T(\kappa^*) .
\end{aligned}
\label{eq:proof-theorem-barrier-failure}
\end{equation}
Define
\begin{equation}
\delta_B^*
:=
\frac{
\widetilde b_1^*(\kappa^*)^T+\widetilde b_3^*\cG_T(\kappa^*)
}{
b_2^*
}.
\label{eq:proof-theorem-deltaB}
\end{equation}
Then~\ref{eq:proof-theorem-barrier-failure} implies
\begin{equation}
\Prob_{\QQ_{K,b}}\!\left(
\neg E_{\mathrm{Safe}}
\cap
E_{\mathrm{Barrier},*}
\right)
\le
\delta_B^*.
\label{eq:proof-theorem-barrier-bound}
\end{equation}

We now decompose trajectory-level failure according to whether the
exact adjusted barrier conditions hold. Using
\ref{eq:proof-theorem-exact-risk} and
\ref{eq:proof-theorem-barrier-bound},
\begin{equation}
\begin{aligned}
\Prob_{\QQ_{K,b}}\!\left(
\neg E_{\mathrm{Safe}}
\right)
&=
\Prob_{\QQ_{K,b}}\!\left(
\neg E_{\mathrm{Safe}}
\cap
E_{\mathrm{Barrier},*}
\right)\\
&\quad+
\Prob_{\QQ_{K,b}}\!\left(
\neg E_{\mathrm{Safe}}
\cap
\neg E_{\mathrm{Barrier},*}
\right)\\
&\le
\delta_B^*+V_*\\
&\le
\frac{
\widetilde b_1^*(\kappa^*)^T+\widetilde b_3^*\cG_T(\kappa^*)
}{
b_2^*
}
+
\frac{\bar v_{k^*}}{1-\beta_0}.
\end{aligned}
\label{eq:proof-theorem-failure-bound}
\end{equation}
By the definition of $\bar q_4$ in~\ref{eq:q4bar} and the fact that
$b_2^*>0$, the condition $\bar q_4\le0$ is equivalent to
\begin{equation}
\frac{
\widetilde b_1^*(\kappa^*)^T+\widetilde b_3^*\cG_T(\kappa^*)
}{
b_2^*
}
+
\frac{\bar v_{k^*}}{1-\beta_0}
\le
1-\rho .
\label{eq:proof-theorem-q4-equivalent}
\end{equation}
Combining~\ref{eq:proof-theorem-failure-bound} and
\ref{eq:proof-theorem-q4-equivalent} therefore gives
\begin{equation}
\bar q_4\le0
\quad\Longrightarrow\quad
\Prob_{(\upsilon,\omega)\sim\QQ_{K,b}}\!\left(
\neg E_{\mathrm{Safe}}
\right)
\le
1-\rho .
\label{eq:proof-theorem-failure-rho}
\end{equation}
Taking complements,
\begin{equation}
\bar q_4\le0
\quad\Longrightarrow\quad
\Prob_{(\upsilon,\omega)\sim\QQ_{K,b}}\!\left(
E_{\mathrm{Safe}}
\right)
\ge
\rho .
\label{eq:proof-theorem-safe-rho}
\end{equation}
Since
$E_{\mathrm{Safe}}
=
\{\Dtraj(\upsilon,\omega)\le\alpha\}$, this is equivalent to
\begin{equation}
\bar q_4\le0
\quad\Longrightarrow\quad
\Prob_{(\upsilon,\omega)\sim\QQ_{K,b}}\!\left(
\Dtraj(\upsilon,\omega)\le\alpha
\right)
\ge
\rho .
\label{eq:proof-theorem-rho}
\end{equation}
We have established~\ref{eq:proof-theorem-rho} for every calibration
realization belonging to $E_{\mathrm{Conf}}$. Since
\ref{eq:proof-theorem-conf} gives
$\Prob_{\mathcal D_{\mathrm{cal}}}(E_{\mathrm{Conf}})\ge1-\beta$,
it follows that
\begin{equation}
\Prob_{\mathcal D_{\mathrm{cal}}}\!\left[
\bar q_4\le0
\ \Longrightarrow\
\Prob_{(\upsilon,\omega)\sim\QQ_{K,b}}\!\left(
\Dtraj(\upsilon,\omega)\le\alpha
\right)
\ge\rho
\right]
\ge
1-\beta .
\label{eq:proof-theorem-final}
\end{equation}
Equation~\ref{eq:proof-theorem-final} is exactly
\ref{eq:final}, completing the proof.
\end{proof}


\section{Additional Experimental Details and Results}
\label{app:expdetails}

This appendix gives the complete experimental specification underlying
Sec.~\ref{sec:experiments}.  We first state the benchmark instantiation used in
our experiments, then summarize the policy families and the six case studies,
describe the certificate parameterization, and finally collect all fixed
experimental settings in a single table.  Numerical robustness results are
reported separately so that benchmark definitions are not mixed with the
outcomes of a particular certification run.
The complete codebase, experiment configurations, and per-specification
results, including the learned certificate parameters, are available at
\href{https://figshare.com/s/9bb42c4050f749744af8}{this link}.
\subsection{Benchmark formulation}
\label{app:benchmark-formulation}

Each case study instantiates the finite-horizon stochastic closed-loop process
of Sec.~\ref{sec:setup}. For every tested environment--policy pair, nominal
and attacked trajectories start from the same initial state and use the shared
episode-randomness semantics of Definition~\ref{def:coupling}. The operational
attack law is instantiated separately for each benchmark using the attack
method, attacked action channel, and norm budget reported in
Table~\ref{tab:fixed-case-study-settings}. Each realized attack is fixed before
the episode, perturbs exactly $K$ decision times, and satisfies the corresponding
per-step norm bound. Thus, $K$ controls temporal sparsity, whereas $b$ controls
the perturbation magnitude.

Because the horizons differ across tasks, we report temporal attack density as
$K/T$ in addition to the absolute cardinality $K$.  The final grids are chosen
so that comparable attack densities are represented across environments.  The
formal event always uses the raw paired-degradation threshold
$\Dtraj\le\alpha$.  For cross-task presentation we additionally report
\begin{equation}
    \alpha_{\%}:=100\,\frac{\alpha}{S_{\mathrm{env}}},
    \qquad
    \alpha=\frac{\alpha_{\%}}{100}S_{\mathrm{env}},
    \label{eq:app-alpha-normalization}
\end{equation}
where $S_{\mathrm{env}}>0$ is a fixed environment-level return scale.  The
same $S_{\mathrm{env}}$ is used for every policy compared within one
environment.  Hence $\alpha_{\%}$ is neither a success probability nor a
policy-specific percentage drop from that policy's nominal return; it is only a
common reporting scale for the raw terminal-degradation threshold.  The
values used in the current experiments are reported in
Table~\ref{tab:fixed-case-study-settings}.

The operating law also fixes the source of episode randomness.  In the custom
control tasks, the random input enters the declared stochastic state update.
For the current Hopper, HalfCheetah, and Ant campaigns, the environmental
randomness follows the corresponding native reset semantics, with no artificial
per-step actuation or process noise. Formal verification approximates the
conditional successor expectations using fresh Monte Carlo successor batches,
with the corresponding sample counts reported in
Table~\ref{tab:fixed-case-study-settings}.

\subsection{Policy families}
\label{app:policy-families}

The formulation of Sec.~\ref{sec:setup} is agnostic to how the fixed
deployed policy is obtained: $\pi$ may be a learned policy or an
analytically designed decision rule, provided its deployment-time
execution satisfies Assumption~\ref{ass:noise}. The main empirical
study focuses on learned RL policies.

Specifically, we evaluate frozen policies trained with TD3
\citep{fujimoto2018addressing}, SAC \citep{haarnoja2018soft}, and PPO
\citep{schulman2017proximal}. The exact policy coverage, actor architectures, and deployment
conventions are collected in Table~\ref{tab:fixed-case-study-settings}, while
the resulting complete robustness profiles are shown in
Fig.~\ref{fig:all-cert-results}. Each checkpoint is trained before
certification and then held fixed throughout attack generation, paired rollout
generation, barrier synthesis, calibration, and formal verification. Thus the training algorithm does not alter the
certification problem: the certified object is the deployed policy
itself, including its chosen deployment-time randomization mode.

For the three custom tasks, we additionally retain analytic reference
policies: swing-up with local LQR stabilization for the inverted
pendulum, headway regulation for ACC, and waypoint/pure-pursuit
feedback for the unicycle. These are supplementary instances of the
same fixed-policy formulation, not competing certification methods.
All main-text empirical results report RL policies only.

\subsection{Case studies}
\label{app:case-studies}

\subsubsection{Inverted pendulum}
The inverted-pendulum task captures nonlinear swing-up and stabilization under
bounded torque and stochastic angular-velocity disturbances.  With physical
state $x=(\theta,\omega)$, the implementation uses the benchmark
semi-implicit discretization with $\Delta t=0.05$ and an additive Gaussian
term on the angular-velocity update.  The RL policy observes
$(\cos\theta,\sin\theta,\omega)$ and acts through a scalar torque in
$[-2,2]$.  The initial state is sampled near the downward configuration,
$\theta_0\in\pi\pm0.1$ and $\omega_0\in[-0.1,0.1]$.  The per-step reward
penalizes wrapped-angle error, angular velocity, and control effort,
\begin{align}
 r_t=-\bigl(\bar\theta_t^2+0.1\omega_t^2+10^{-3}u_t^2\bigr),
 \qquad \bar\theta:=\operatorname{wrap}(\theta).
\end{align}
This task is deliberately nonlinear over the full angular domain: a bounded
action perturbation applied during swing-up can affect the subsequent
closed-loop transient long after the attacked time step.

\subsubsection{Adaptive cruise control}
ACC models longitudinal car following with state
$x=(d,v_E,v_L)$, consisting of inter-vehicle distance, ego speed, and lead
speed.  With $\Delta t=0.1$,
\begin{align}
 v^L_{t+1}&=v^L_t+\Delta t\,a^L_t+\sigma\xi_t,\\
 v^E_{t+1}&=v^E_t+\Delta t\,u_t,\\
 d_{t+1}&=d_t+\Delta t\,(v^L_t-v^E_t),
 \label{eq:app-acc-dynamics}
\end{align}
where the lead-vehicle stochastic input is shared between the nominal and
attacked branches.  The ego action is bounded in $[-3,2]$ and the state is
clipped to the declared speed/headway box.  Initial speeds satisfy
$v_E(0)=v_L(0)\sim\mathcal U[20,25]$ with the initial headway placed at the
desired constant-time-headway reference.  Writing
$e=d-(5+1.4v_E)$ and $\dot e=v_L-v_E$, the reward is
\begin{align}
 r_t=-(e/10)^2-0.1(\dot e/5)^2-0.02(u/3)^2.
\end{align}
The case study combines a physically interpretable stochastic disturbance with
a direct action-space attack on the throttle/brake command.

\subsubsection{Unicycle waypoint tracking}
The unicycle task represents planar goal-directed motion with physical state
$x=(p_x,p_y,\theta)$ and two control channels, forward speed and turn rate.
The discrete-time model is
\begin{align}
 p_{x,t+1}&=p_{x,t}+\Delta t\,v_t\cos\theta_t,\\
 p_{y,t+1}&=p_{y,t}+\Delta t\,v_t\sin\theta_t,\\
 \theta_{t+1}&=\operatorname{wrap}(\theta_t+\Delta t\,\omega_t+\sigma\xi_t),
 \label{eq:app-unicycle-dynamics}
\end{align}
with $\Delta t=0.1$ and stochastic heading evolution.  The action satisfies
$v\in[0,1]$ and $\omega\in[-1.5,1.5]$.  Initial positions are sampled from
an annulus of radius $[2,4]$ around the goal with a random initial heading.
The reward combines squared distance-to-goal with normalized forward-speed and
turn-rate penalties.  Because the attacked channels have heterogeneous
physical units, the fixed channel scaling used to map the scalar budget $b$ to
$(v,\omega)$ perturbations is part of the experimental specification.

\subsubsection{Hopper-v5}
Hopper-v5 is a MuJoCo planar locomotion task with intermittent ground contact
\citep{towers2024gymnasium,todorov2012mujoco}.  Its generalized dynamics can
be written schematically as
\begin{align}
 M(q)\ddot q+C(q,\dot q)\dot q+g(q)=Bu+J(q)^\top\lambda,
\end{align}
where the simulator resolves the contact forces $\lambda$.  The deployed
policy observes the standard $11$-dimensional observation and controls three
bounded joint-torque channels.  The retained runs use the native Hopper-v5
reset randomness with reset-noise scale $0.005$, and no artificial per-step
transition or actuation noise is added.  The default unhealthy-state
termination is retained.  This task tests certification under hybrid contact
transitions, where a sparse action perturbation can change both the accumulated
return and the subsequent contact sequence.

\subsubsection{HalfCheetah-v5}
HalfCheetah-v5 is a higher-dimensional planar MuJoCo locomotion task with six
coupled torque inputs \citep{towers2024gymnasium,todorov2012mujoco}.  As in
Hopper, its multibody dynamics include state-dependent inertial coupling and
ground-contact forces.  The policy uses the standard $17$-dimensional
observation and actions are clipped to $[-1,1]^6$.  The retained campaign uses
the native HalfCheetah-v5 reset randomness (reset-noise scale $0.1$) and no
artificial per-step actuation or process noise.  Small action perturbations can
shift the phase and coordination of the learned gait over the finite horizon,
which makes the paired return degradation sensitive to both attack magnitude
and attack timing.

\subsubsection{Ant-v5}
Ant-v5 is a three-dimensional quadrupedal MuJoCo locomotion task with eight
actuated joints \citep{towers2024gymnasium,todorov2012mujoco}.  The current
policy interface uses the standard $105$-dimensional Ant-v5 observation and an
$8$-dimensional action in $[-1,1]^8$.  The default simulator control interval
is $0.05$~s, the retained horizon is $T=1000$, and the environment uses the
native Ant reset/health semantics of the final run.  The reward combines
forward motion and healthy-state reward with control and contact penalties.
The task provides the highest-dimensional locomotion setting in the current
suite and therefore probes the scalability of the learned certificate beyond
the planar MuJoCo systems.

\subsection{Certificate parameterization}
\label{app:certificate-parameterization}

All experiments use the bounded neural barrier parameterization
introduced in Sec.~\ref{sec:synthesis}. The barrier is a scalar
structured MLP on features of the augmented nominal--attacked state,
degradation coordinate, and time, with a sigmoid output bounded by
$\Bmax$. Its learned constants
$b_1^*,b_2^*,b_3^*$ and $\kappa^*$ retain the meanings in
Definition~\ref{def:barrier}: they summarize the initial-set bound,
terminal separation, stochastic one-step drift, and multiplicative
propagation factor. The accumulated finite-horizon contribution uses
$\cG_T(\kappa)\!=\!\sum_{j=0}^{T-1}\kappa^j$ as in
Definition~\ref{def:barrier}, rather than replacing the geometric sum
by the looser factor $T$.

The hidden layers use SiLU activations in all retained certificates.
This is an optimization choice, not a theoretical restriction.
During synthesis, the conditional drift term is estimated using empirical
averages over $M_s$ fresh successor samples. Formal verification likewise
uses empirical averages over $M$ fresh successor samples together with the
finite-sample correction of Lemma~\ref{lem:finiteM}. The values of
$N_s$, $M_s$, $N$, and $M$ used in each case study are collected in
Table~\ref{tab:fixed-case-study-settings}.

For the synthesis sets in~\eqref{eq:setSU}, if
$U\!=\!\varnothing$, its loss contribution is defined as zero.
This provides no evidence of robustness, since the resulting candidate
remains subject to independent verification.
The safe pool $S$ is retained for supplementary analysis and does not
enter the loss in~\eqref{eq:loss}.
If auxiliary unsafe examples are used during synthesis, their full
augmented trajectories are retained so that intermediate
state--attack--time tuples may also contribute to the sampled drift
condition; such data affect candidate construction only and do not
themselves establish a formal guarantee.

\subsection{Fixed experimental settings}
\label{app:fixed-experimental-settings}
Table~\ref{tab:fixed-case-study-settings} provides the complete experimental
specification for the six retained case studies, including environment and
policy settings, attack methods and norm budgets, attack and
degradation-tolerance grids, normalization constants, certificate
parameterization, and synthesis/verification sample budgets. Unless stated
otherwise, all formal FoSeRL experiments use target reliability
$\rho=0.90$ and confidence parameters $\beta=\beta_0=0.001$.

\begin{table*}[p]
\centering
\caption{
Complete experimental specification for the six FoSeRL case studies.
For each environment, we report the environment and policy configuration,
attack specification, degradation-tolerance grid, certificate architecture,
and synthesis/verification sample budgets.
$N_s$ and $M_s$ denote the synthesis episodes and successor samples,
while $N$ and $M$ denote the corresponding formal-verification sample counts.
All formal runs target $\rho=0.90$ with $\beta=\beta_0=0.001$.
}
\label{tab:fixed-case-study-settings}

\setlength{\tabcolsep}{3pt}
\renewcommand{\arraystretch}{1.85}

\newcommand{\lighttabrule}{%
    \arrayrulecolor{black!20}%
    \specialrule{0.4pt}{0pt}{0pt}%
    \arrayrulecolor{black}%
}

\resizebox{\textwidth}{!}{%
\begin{tabular}{
    >{\columncolor{FoSeRLTableBlue}\hspace{2mm}}c<{\hspace{2mm}}
    l
    !{\vrule width 0.4pt}
    c
    !{\vrule width 0.4pt}
    c
    !{\vrule width 0.4pt}
    c
    !{\vrule width 0.4pt}
    c
    !{\vrule width 0.4pt}
    c
    !{\vrule width 0.4pt}
    c !{\vrule width 0.1pt}
}
\toprule
&
\textbf{Parameter}
&
\textbf{Inv. Pend.}
&
\textbf{ACC}
&
\textbf{Unicycle}
&
\textbf{Hopper}
&
\textbf{HalfCheetah}
&
\textbf{Ant}
\\

\lighttabrule


& Model
& Custom
& Custom
& Custom
& Gym v5
& Gym v5
& Gym v5
\\

& $n_{\mathrm{obs}}$
& 3 & 3 & 3 & 11 & 17 & 105
\\

& $n_x$
& 2 & 3 & 3 & 12 & 18 & 29
\\

& $n_a$
& 1 & 1 & 2 & 3 & 6 & 8
\\

& $\Delta t$
& 0.05 & 0.10 & 0.10 & 0.008 & 0.05 & 0.05
\\

& $T$
& 200 & 300 & 500 & 1000 & 500 & 1000
\\

& $\gamma$
& 0.99 & 0.99 & 0.99 & 0.99 & 0.99 & 0.99
\\

& Task const.
& \makecell{$\theta$ wrapped, $|\omega|\le 8$\\$g=10,\ m=\ell=1$}
& \makecell{$d\in[0,60]$, $v_E,v_L\in[0,40]$\\$d_0=5,\ h=1.4$}
& \makecell{$(p_x,p_y)\in[-5,5]^2$\\goal $(0,0)$}
& Native v5
& Native v5
& Native v5
\\

& Randomness
& $\omega$ noise
& Lead-process noise
& Heading noise
& Reset
& Reset
& Reset
\\

& Noise scale
& $\sigma=0.01$
& $\sigma=0.30$
& $\sigma=0.05$
& $0.005$
& $0.1$
& $0.1$
\\

& Initial law
& \makecell{$\theta_0\in\pi\pm0.1$\\$\omega_0\in[-0.1,0.1]$}
& $v_E(0)=v_L(0)\sim\mathcal U[20,25]$
& \makecell{annulus $[2,4]$\\random $\theta_0$}
& Native
& Native
& Native
\\

\multirow{-12}{*}{\rotatebox[origin=c]{90}{\textbf{Environment}}}
& $\mathcal A$
& $[-2,2]$
& $[-3,2]$
& \makecell{$v\in[0,1]$\\$\omega\in[-1.5,1.5]$}
& $[-1,1]^3$
& $[-1,1]^6$
& $[-1,1]^8$
\\

\lighttabrule


& Policy
& TD3, SAC, PPO
& TD3, SAC, PPO
& TD3, SAC, PPO
& TD3, SAC, PPO
& TD3, SAC, PPO
& TD3, SAC, PPO
\\

& Analytic ref.
& Swing-up + LQR
& Headway LQR
& Pure pursuit
& --
& --
& --
\\

& Actor
& $3\!\to\!64\!\to\!64\!\to\!1$
& $3\!\to\!64\!\to\!64\!\to\!1$
& $3\!\to\!64\!\to\!64\!\to\!2$
& $11\!\to\!64\!\to\!64\!\to\!3$
& $17\!\to\!64\!\to\!64\!\to\!6$
& $105\!\to\!256\!\to\!256\!\to\!8$
\\

& Activation
& ReLU & ReLU & ReLU & ReLU & ReLU & ReLU
\\

\multirow{-5}{*}{\rotatebox[origin=c]{90}{\textbf{Policy}}}
& Output
& Bounded
& Bounded
& Bounded
& Bounded
& tanh / bounded
& tanh / bounded
\\

\lighttabrule


& Channel
& Torque
& Ego accel.
& $(v,\omega)$
& $3$ actuators
& $6$ actuators
& $8$ actuators
\\

& Attack method
& PGD / Uniform
& PGD
& PGD
& LAS / Uniform
& PGD / Uniform
& DE / Uniform
\\

& Norm budget
& $\ell_\infty$
& $\ell_2$
& $\ell_\infty$
& $\ell_2$
& $\ell_\infty$
& $\ell_\infty$
\\

& $\alpha_{\%}$
& $\{0.5,1,2.5,5\}$
& $\{0.5,1,2.5,5\}$
& $\{0.5,1,2.5,5\}$
& $\{0.5,1,2.5,5\}$
& $\{5,10,15,20\}$
& $\{5,10,15,20\}$
\\

\multirow{-7}{*}{\rotatebox[origin=c]{90}{\textbf{Threat Model}}}
& $S_{\mathrm{env}}$
& 453.340
& 6.229
& 22.921
& 174.525
& 682.400
& 336.554
\\

\lighttabrule


& Certificate
& MLP
& MLP
& MLP
& MLP
& MLP
& MLP
\\

& Input dim.
& 12 & 18 & 21 & 39 & 57 & 92
\\

& Widths
& $48,48$
& $48,48$
& $48,48$
& $64,64$
& $96,96,64$
& $128,128,96$
\\

& Output dim.
& 1 & 1 & 1 & 1 & 1 & 1
\\

& Activation
& SiLU & SiLU & SiLU & SiLU & SiLU & SiLU
\\

\multirow{-6}{*}{\rotatebox[origin=c]{90}{\textbf{Synthesis}}}
& $(N_s,M_s)$
& $(1024,8)$
& $(1024,8)$
& $(1024,8)$
& $(1000,10)$
& $(1000,10)$
& $(2000,40)$
\\

\lighttabrule


& $\rho$
& 0.90 & 0.90 & 0.90 & 0.90 & 0.90 & 0.90
\\

& $\beta$
& 0.001 & 0.001 & 0.001 & 0.001 & 0.001 & 0.001
\\

& $\beta_0$
& 0.001
& 0.001
& 0.001
& 0.001
& 0.001
& 0.001
\\

& $N$
& 2000
& $1395$--$2000$
& $1566$--$2000$
& 2000
& 2000
& 3000
\\

\multirow{-5}{*}{\rotatebox[origin=c]{90}{\textbf{Verification}}}
& $M$
& 49,152
& 49,152
& 49,152
& 30,000
& 50,000
& 20,000
\\

\bottomrule
\end{tabular}%
}
\end{table*}

\paragraph{Computational setup.}
All FoSeRL experiments reported in this paper were run on a workstation
with an Intel Xeon w5-3433 CPU, 128~GB DDR5-4400 RAM, and an NVIDIA
RTX~4500 Ada Generation GPU with 24~GB GDDR6 ECC memory. Reported runtimes are wall-clock certification
times and exclude policy training.

\subsection{Certificate construction and reporting workflow}
\label{app:certificate-workflow}

This subsection summarizes how each tested attack specification is converted
into a reported FoSeRL certificate. The formal definitions and guarantees are
given in Sec.~\ref{sec:verification}; here we describe the experimental
workflow used to produce the robustness profiles in
Fig.~\ref{fig:all-cert-results}.

\paragraph{From a tested specification to a certified point.}
Fix an environment--policy pair and a specification $(K,b,\alpha,\rho)$.
Candidate construction first learns the bounded time-dependent barrier
$B_{\phi^*}$ and the associated constants
$b_1^*,b_2^*,b_3^*$ and $\kappa^*$ using the synthesis procedure of
Sec.~\ref{sec:synthesis}. The candidate and all verification-design choices
are then frozen. Independent calibration episodes are drawn under the
operational law $\QQ_{K,b}$, and each episode is assigned the score
$S_i$ of Definition~\ref{def:score}. For every finite conformal rank $k$,
the score determines the residual allowance
$r_k=\max\{0,S_{(k)}\}$ and the corresponding out-of-sample violation bound
$\bar v_k$ from~\eqref{eq:conformal-vk}. The selected rank $k^*$ is the one
maximizing the resulting end-to-end reliability lower bound, as in
\eqref{eq:select}.

\paragraph{Reliability decomposition.}
For reporting and diagnosis, we separate the reliability bound into three
terms:
\begin{equation}
\begin{aligned}
&\mathrm{Base}
:=
\frac{
b_1^*(\kappa^*)^T
+
b_3^*\,\cG_T(\kappa^*)
}{
b_2^*
},\\
&\mathrm{Margin}_k
:=
r_k
\frac{
(\kappa^*)^T
+
\cG_T(\kappa^*)
}{
b_2^*
},\\
&\mathrm{Bridge}_k
:=
\frac{\bar v_k}{1-\beta_0}.
\end{aligned}
\label{eq:app-cert-components}
\end{equation}
The corresponding certified reliability at rank $k$ is
\begin{equation}
\rho_{\mathrm{cert}}(k)
=
1-
\Big(
\mathrm{Base}
+
\mathrm{Margin}_k
+
\mathrm{Bridge}_k
\Big).
\label{eq:app-rho-cert}
\end{equation}
Hence $k^*$ equivalently maximizes
$\rho_{\mathrm{cert}}(k)$ over the finite ranks, and the tested magnitude
$b$ is accepted exactly when
$\rho_{\mathrm{cert}}(k^*)\ge\rho$, which is equivalent to the acceptance
condition $\bar q_4\le0$ in~\eqref{eq:q4bar}.
Base is the contribution of the frozen barrier itself, Margin is the
propagated calibration slack, and Bridge accounts for violations of the
exact adjusted barrier conditions that remain after finite-sample
calibration. These are components of a certified lower bound, not empirical
terminal-failure frequencies.

\paragraph{Finite-successor correction.}
When the conditional expectation in the drift condition is estimated from
$M$ fresh successors, the correction $\epsilon_M$ is chosen to satisfy
Lemma~\ref{lem:finiteM}; the tight Hoeffding choice at fixed $M$ is
$\epsilon_M=\Bmax\sqrt{\log(1/\beta_0)/(2M)}$.
This correction enters the empirical drift residual and therefore affects the
calibration scores and the selected Margin. It is not subtracted again from
$\rho_{\mathrm{cert}}$.

\paragraph{From certified points to robustness profiles.}
For each fixed $(K,\alpha)$, attack magnitudes are explored over the prescribed
experimental grid. The quantity $b_{\mathrm{emp}}$ is used only as a
pre-verification design reference for candidate construction and search; it is
not a formal certificate. A tested magnitude $b$ contributes to the certified
frontier only after the frozen candidate passes the independent verification
procedure above. Thus, the reported $b_{\mathrm{cert}}^{\mathrm{op}}$ is an
accepted tested magnitude under the operational criterion of
Sec.~\ref{sec:datadriven}, not an interpolated robustness radius.

Repeating this procedure across the tested values of $\alpha$ and $K$
produces the paired empirical and certified profiles in
Fig.~\ref{fig:all-cert-results}. Each displayed certified point has its own
per-specification confidence statement from
Theorem~\ref{thm:conformal}; connecting lines are visual guides only and do
not certify intermediate budgets or imply simultaneous validity across the
displayed collection. Adaptive or collection-level reporting follows the
confidence-control procedures in Appendix~\ref{app:collection-confidence}.

\paragraph{Practical certificate design.}
The decomposition in~\eqref{eq:app-rho-cert} provides a direct way to diagnose
a weak certificate. A large Base points to the synthesized witness itself:
the propagated initial and drift contributions are large relative to terminal
separation. A large Margin indicates that the calibration scores require
substantial residual slack. A large Bridge indicates limited episode-level
statistical resolution. These effects should be treated separately rather
than increasing every synthesis and verification budget indiscriminately.

Long horizons require particular care because both the learned drift term and
the calibrated residual slack propagate through $\cG_T(\kappa)$. When
$\kappa=1$, $\cG_T(\kappa)=T$, so even a small residual allowance can become
dominant for large $T$. For long-horizon problems, selecting or fixing
$\kappa<1$ during candidate design can substantially reduce this propagation:
$\cG_T(\kappa)$ is then bounded by $(1-\kappa)^{-1}$ and $\kappa^T$ decays
with the horizon. This is nevertheless a design trade-off rather than a rule
that smaller $\kappa$ is always better, since decreasing $\kappa$ also
tightens the one-step drift condition and may increase $b_3^*$ or the
resulting residuals. Any such choice must be made before independent
calibration and then kept fixed during verification.

The verification sample sizes play different roles. Increasing the number of
calibration episodes $N$ improves the resolution of the conformal violation
bound and can reduce Bridge, whereas increasing the successor count $M$
reduces $\epsilon_M$ and can improve the drift residuals that determine
Margin. The barrier output bound $\Bmax$ also enters this trade-off through
the finite-successor requirement: an unnecessarily large $\Bmax$ increases
the required successor budget, while a value that is too small can restrict
the witness class. In practice, certificate refinement is therefore most
efficient when the limiting contribution in
\eqref{eq:app-rho-cert} is identified first and the corresponding synthesis
or verification component is improved.

\subsection{Complete FoSeRL certification results} \label{app:certificate-reporting} Figure~\ref{fig:all-cert-results} reports the complete robustness profiles for all retained RL-policy settings. Each environment--policy specification is shown as a paired set of panels: the upper panel reports the formally certified FoSeRL frontier, while the lower panel shows the corresponding empirical pre-verification reference. Within each panel, curves correspond to the tested temporal attack densities $K/T$, the horizontal axis gives the attack magnitude $b$, and the vertical axis gives the normalized degradation tolerance $\alpha_{\%}$. All formal FoSeRL results use the target reliability $\rho=0.90$ and per-specification confidence parameter $\beta=0.001$. Across environments and policy families, relaxing the degradation tolerance generally permits larger attack magnitudes, whereas increasing the temporal attack density typically contracts the admissible budget. The empirical and certified profiles exhibit the same broad structure, but the gap between them varies substantially across settings. 
\begin{wrapfigure}{r}{0.45\textwidth}
    \vspace{-0.8em}
    \centering
    \includegraphics[width=\linewidth]{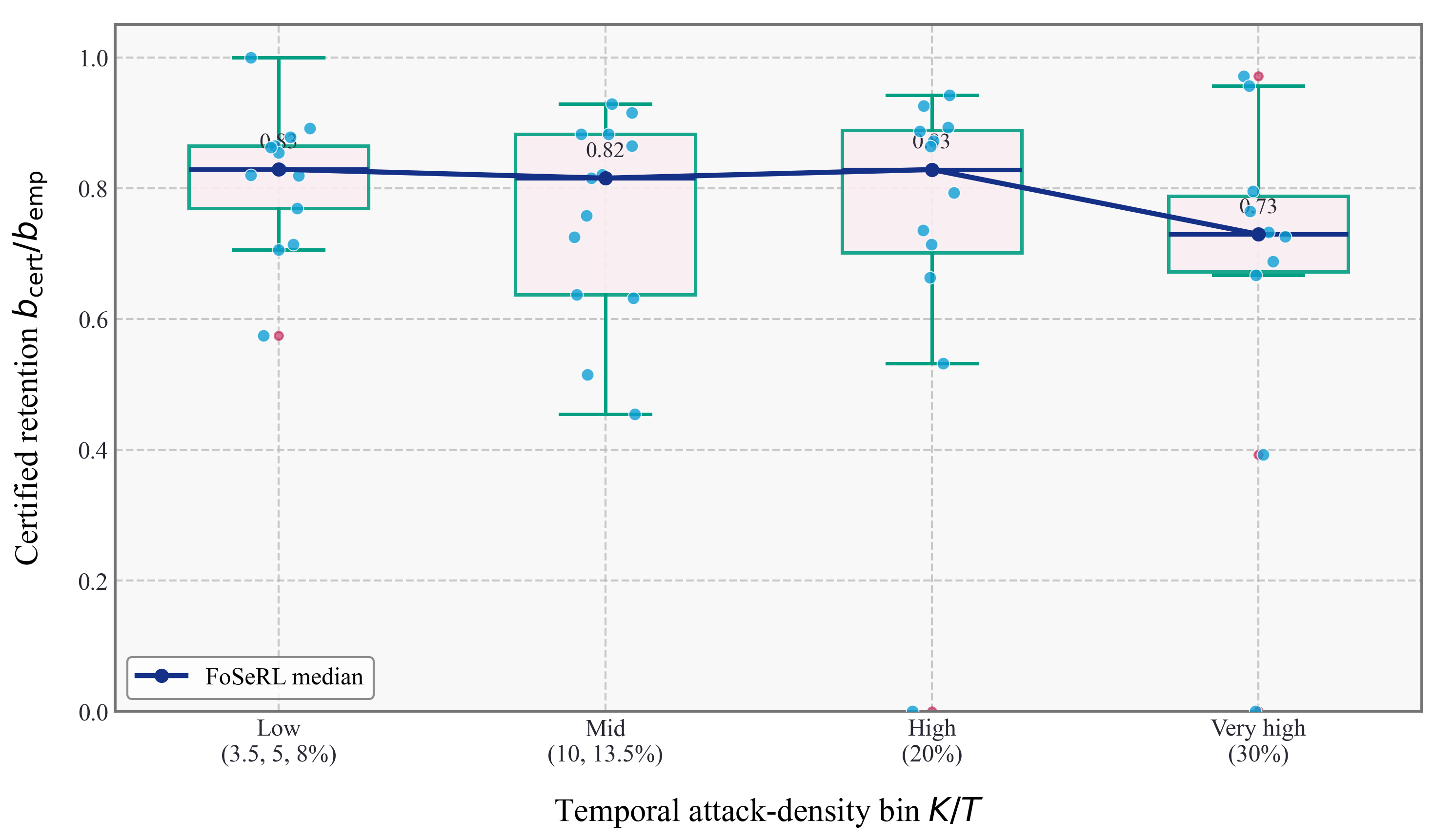}
    \vspace{-2em}
    \caption{\scriptsize
    FoSeRL certified-budget retention
    $b_{\mathrm{cert}}/b_{\mathrm{emp}}$ across temporal attack-density
    bins. Points denote individual reported specifications and the blue
    line connects bin medians.
    }
    \label{fig:app-retention-summary}
\end{wrapfigure}
The aggregate view in Fig.~\ref{fig:app-retention-summary} shows that this
retention remains broadly stable across the low-to-high density regimes,
with a larger reduction appearing in the very-high-density bin. In some cases the formal frontier closely tracks the empirical reference, while in others verification produces a visibly more conservative budget. This variation reflects the cost of converting a design-stage empirical frontier into an independently verified certificate and also shows that the conservatism of the certification stage is environment-- and policy-dependent rather than uniform across tasks. Each marker corresponds to an individually tested specification; connecting lines are included only as visual guides and do not imply certification of intermediate or untested budgets.

\begin{figure*}[p]
\centering
\includegraphics[width=0.92\textwidth]{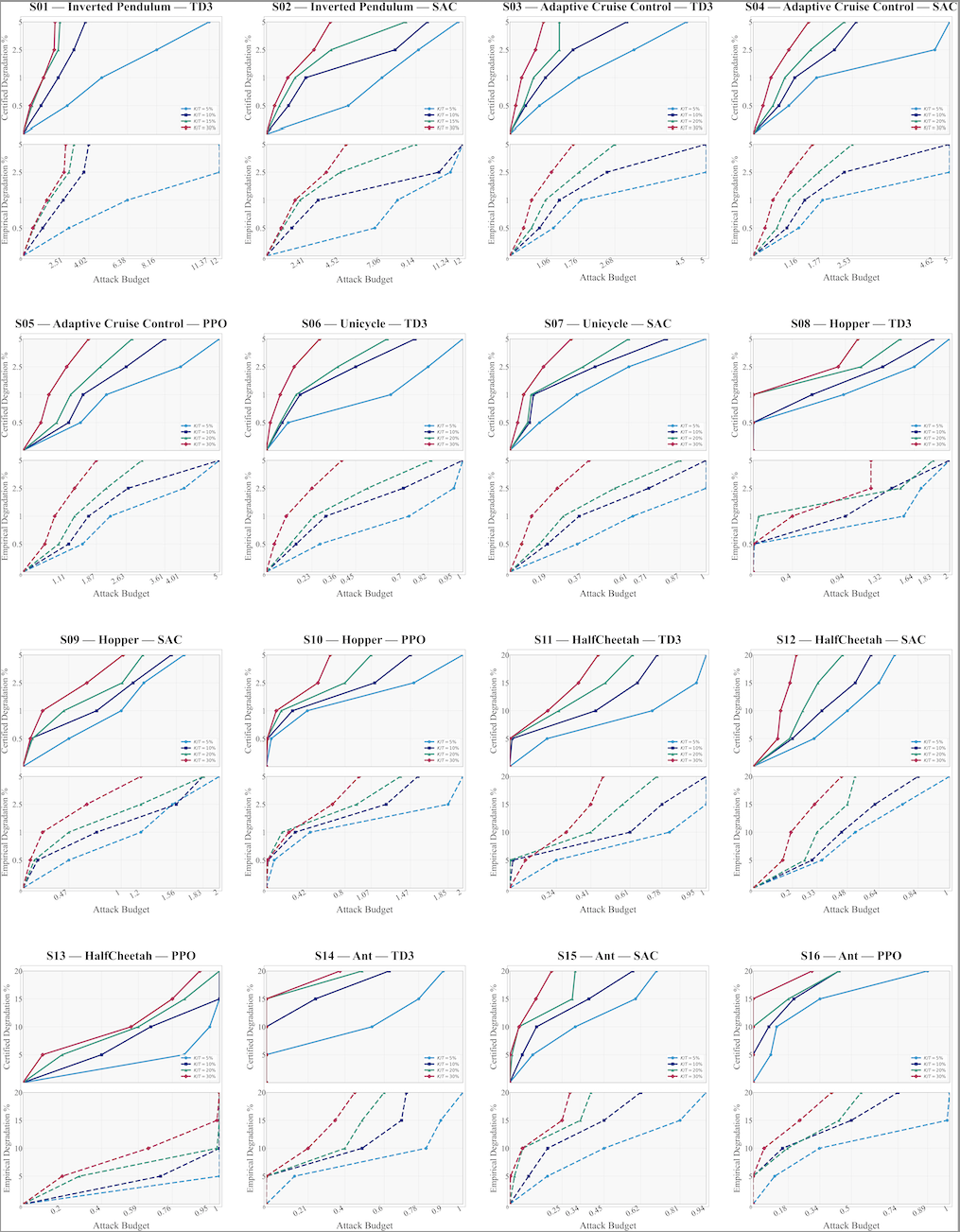}
\caption{ Complete empirical and certified robustness profiles for all retained RL-policy experiments. Each environment--policy setting is shown as a paired panel: the upper plot gives the FoSeRL-certified frontier and the lower plot the corresponding empirical pre-verification reference. Curves represent the tested temporal attack densities $K/T$; the horizontal axis is the attack magnitude $b$, and the vertical axis is the normalized degradation tolerance $\alpha_{\%}$. Formal FoSeRL certificates target reliability $\rho=0.90$ with per-specification confidence parameter $\beta=0.001$. Across settings, larger degradation tolerances generally support larger attack budgets, while denser temporal attacks tend to reduce the admissible magnitude. The gap between the empirical and certified frontiers varies across environments and policies, revealing setting-dependent verification conservatism. Markers denote individually tested specifications; connecting lines are visual guides only and do not certify interpolation or untested budgets. }
\label{fig:all-cert-results}
\end{figure*}

\subsection{Comparison with policy smoothing}
\label{app:rs-comparison}
\begin{wrapfigure}{r}{0.45\textwidth}
    \vspace{-0.8em}
    \centering
    \includegraphics[width=\linewidth]{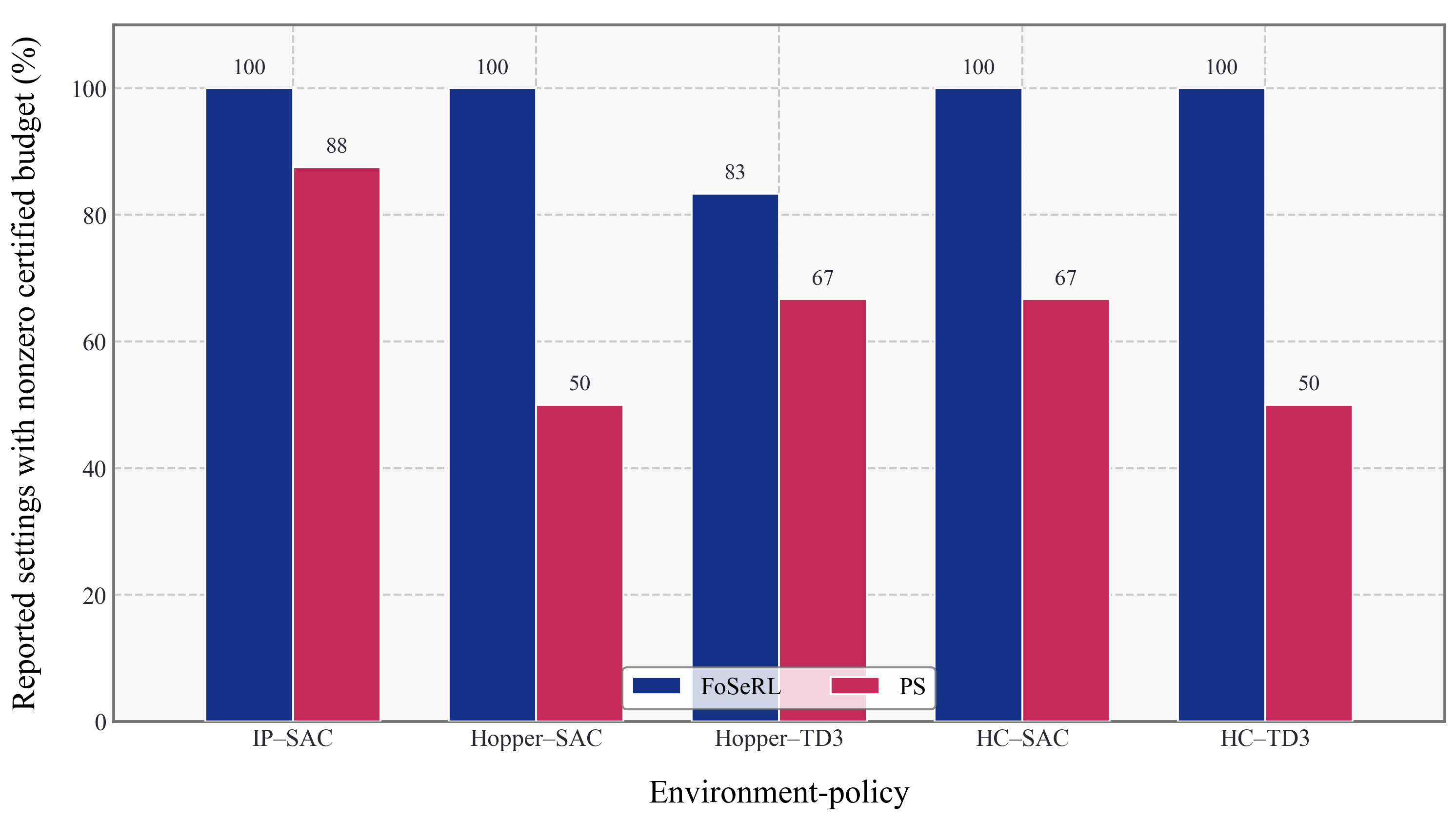}
    \vspace{-1em}
    \caption{\scriptsize
    Fraction of reported settings with a nonzero certified budget for
    FoSeRL and policy smoothing (PS), grouped by environment--policy pair.
    }
    \label{fig:app-nonzero-summary}
    \vspace{-1em}
\end{wrapfigure}
Table~\ref{tab:app-foserl-rs-comparison} reports the detailed external-baseline
comparison, while Fig.~\ref{fig:app-nonzero-summary} summarizes how often each
method returns a nonzero certificate across the reported specifications.
Since no existing formal method matches our complete setting, we align policy
smoothing (PS) with FoSeRL on the action channel, paired degradation event,
and reliability/confidence targets, while retaining the native $\ell_2$
perturbation geometry of the smoothing certificate. The PS
guarantee is worst-case over the corresponding $\ell_2$ perturbation set and
therefore does not depend on a particular attack optimizer; uniform
precommitted perturbations are used only for the associated empirical rollout
calculations. Full benchmark-specific attack and norm specifications are given
in Table~\ref{tab:fixed-case-study-settings}, and the broader positioning of PS
relative to FoSeRL is discussed in Appendix~\ref{app:rel}.
Comparable PS certificates are obtained for Inverted Pendulum, Hopper, and
HalfCheetah. ACC is excluded because the available smoothing-noise channel
does not coincide with the attacked ego-control channel, while Unicycle does
not meet the required nominal-reliability condition at the tested tolerance.
For Ant, PS yields no nonzero certified budget at the tested settings. Across
the comparable cases, FoSeRL consistently certifies substantially larger
budgets, including several specifications for which PS certifies only the
zero budget.

\newpage
\begingroup
\footnotesize
\newlength{\CompTableColSep}
\setlength{\CompTableColSep}{11pt}
\newcommand{\CompTableRowStretch}{0.85}
\setlength{\tabcolsep}{\CompTableColSep}
\renewcommand{\arraystretch}{\CompTableRowStretch}

\setlength{\LTpre}{10pt}
\setlength{\LTpost}{0pt}

\setlength{\LTleft}{\fill}
\setlength{\LTright}{\fill}

\newcommand{\rszero}[1]{%
  \cellcolor{PSZeroFill}\textbf{#1}%
}

\newcommand{\foserlhi}[1]{%
  \cellcolor{FoSeRLGoodFill}\textbf{#1}%
}

\newcommand{\ktline}{%
  \arrayrulecolor{black!18}%
  \cmidrule(lr){4-10}%
  \arrayrulecolor{black}%
}

\begin{longtable}{
@{}
l
c
c
c
c
c
c
cc
c
@{}
}

\caption{\small
Detailed FoSeRL--policy-smoothing (PS) comparison on aligned
environment--policy settings. For each setting, we report the search limit
$b_{\max}$, attack cardinality $K$ and density $K/T$, normalized degradation
tolerance $\alpha_{\%}$, empirical design reference $b_{\mathrm{emp}}$, and
the certified budgets returned by FoSeRL and PS. The normalized gap is
$\left|b_{\mathrm{cert}}^{\mathrm{FoSeRL}}
-b_{\mathrm{cert}}^{\mathrm{PS}}\right|/b_{\max}$.
A zero PS entry indicates that no nonzero budget was certified at that
specification.
\vspace{-0.1cm}}
\label{tab:app-foserl-rs-comparison}
\\

\toprule

\multirow{2}{*}{\textbf{Environment}}
&
\multirow{2}{*}{\textbf{Policy}}
&
\multirow{2}{*}{$\mathbf{b_{\max}}$}
&
\multirow{2}{*}{$\mathbf{K}$}
&
\multirow{2}{*}{$\mathbf{K/T}$}
&
\multirow{2}{*}{$\boldsymbol{\alpha_{\%}}$}
&
\multirow{2}{*}{$\mathbf{b_{\mathrm{emp}}}$}
&
\multicolumn{2}{c}{\boldmath$\mathbf{b_{\mathrm{cert}}}$}
&
\multirow{2}{*}{\textbf{Norm. gap}}
\\

\cmidrule(lr){8-9}

&
&
&
&
&
&
&
\textbf{FoSeRL}
&
\textbf{PS}
&
\\

\midrule
\endfirsthead

\toprule

\multirow{2}{*}{\textbf{Environment}}
&
\multirow{2}{*}{\textbf{Policy}}
&
\multirow{2}{*}{$\mathbf{b_{\max}}$}
&
\multirow{2}{*}{$\mathbf{K}$}
&
\multirow{2}{*}{$\mathbf{K/T}$}
&
\multirow{2}{*}{$\boldsymbol{\alpha_{\%}}$}
&
\multirow{2}{*}{$\mathbf{b_{\mathrm{emp}}}$}
&
\multicolumn{2}{c}{\boldmath$\mathbf{b_{\mathrm{cert}}}$}
&
\multirow{2}{*}{\textbf{Norm. gap}}
\\

\cmidrule(lr){8-9}

&
&
&
&
&
&
&
\textbf{FoSeRL}
&
\textbf{PS}
&
\\

\midrule
\endhead

\midrule
\multicolumn{10}{r}{\scriptsize\emph{Continued on next page}}
\\
\endfoot

\bottomrule
\endlastfoot


\multirow{8}{*}{
\makecell[l]{Inverted Pendulum\\[1pt]($T=200$)}
}
&
\multirow{8}{*}[1.5ex]{SAC}
&
\multirow{8}{*}[1.5ex]{4}
& 7 & 3.5\% & 5\%
& 4.00
& 4.00
& 0.0198
& 0.9951
\\

\ktline

&
&
& 16 & 8\% & 2.5\%
& 2.60
& 2.00
& 0.0129
& 0.4968
\\

&
&
& 16 & 8\% & 5\%
& 3.47
& 3.00
& 0.0132
& 0.7467
\\

\ktline

&
&
& 27 & 13.5\% & 1\%
& 1.02
& \foserlhi{0.65}
& \rszero{0.0000}
& 0.1625
\\

&
&
& 27 & 13.5\% & 2.5\%
& 1.84
& 1.50
& 0.0101
& 0.3725
\\

&
&
& 27 & 13.5\% & 5\%
& 3.47
& 3.00
& 0.0101
& 0.7475
\\

\ktline

&
&
& 40 & 20\% & 2.5\%
& 1.72
& 1.50
& 0.0081
& 0.3730
\\

&
&
& 40 & 20\% & 5\%
& 2.80
& 2.50
& 0.0083
& 0.6229
\\

\midrule


\multirow{20}{*}{
\makecell[l]{Hopper\\[1pt]($T=1000$)}
}
&
\multirow{8}{*}[1.5ex]{SAC}
&
\multirow{8}{*}[1.5ex]{2}
& 50 & 5\% & 1\%
& 1.40
& \foserlhi{1.00}
& \rszero{0.0000}
& 0.5000
\\

&
&
& 50 & 5\% & 5\%
& 1.84
& 1.64
& 0.0032
& 0.8184
\\

\ktline

&
&
& 100 & 10\% & 1\%
& 0.99
& \foserlhi{0.75}
& \rszero{0.0000}
& 0.3750
\\

&
&
& 100 & 10\% & 5\%
& 1.84
& 1.51
& 0.0023
& 0.7539
\\

\ktline

&
&
& 200 & 20\% & 1\%
& 0.79
& \foserlhi{0.42}
& \rszero{0.0000}
& 0.2100
\\

&
&
& 200 & 20\% & 5\%
& 1.84
& 1.22
& 0.0016
& 0.6092
\\

\ktline

&
&
& 300 & 30\% & 1\%
& 0.51
& \foserlhi{0.20}
& \rszero{0.0000}
& 0.1000
\\

&
&
& 300 & 30\% & 5\%
& 1.05
& 1.02
& 0.0013
& 0.5094
\\

\cmidrule(lr){2-10}


&
\multirow{12}{*}[1.5ex]{TD3}
&
\multirow{12}{*}[1.5ex]{2.5}
& 50 & 5\% & 1\%
& 1.11
& \foserlhi{0.92}
& \rszero{0.0000}
& 0.3680
\\

&
&
& 50 & 5\% & 2.5\%
& 1.92
& 1.64
& 0.0020
& 0.6552
\\

&
&
& 50 & 5\% & 5\%
& 2.44
& 2.00
& 0.0032
& 0.7987
\\

\ktline

&
&
& 100 & 10\% & 1\%
& 0.95
& \foserlhi{0.60}
& \rszero{0.0000}
& 0.2400
\\

&
&
& 100 & 10\% & 2.5\%
& 1.82
& 1.32
& 0.0014
& 0.5274
\\

&
&
& 100 & 10\% & 5\%
& 1.97
& 1.83
& 0.0023
& 0.7311
\\

\ktline

&
&
& 200 & 20\% & 1\%
& 0.95
& 0.00
& \rszero{0.0000}
& 0.0000
\\

&
&
& 200 & 20\% & 2.5\%
& 1.54
& 1.10
& 0.0010
& 0.4396
\\

&
&
& 200 & 20\% & 5\%
& 1.62
& 1.50
& 0.0016
& 0.5994
\\

\ktline

&
&
& 300 & 30\% & 1\%
& 0.91
& 0.00
& \rszero{0.0000}
& 0.0000
\\

&
&
& 300 & 30\% & 2.5\%
& 0.91
& 0.87
& 0.0008
& 0.3477
\\

&
&
& 300 & 30\% & 5\%
& 1.46
& 1.07
& 0.0013
& 0.4275
\\

\midrule


\multirow{20}{*}{
\makecell[l]{HalfCheetah\\[1pt]($T=500$)}
}
&
\multirow{12}{*}[1.5ex]{SAC}
&
\multirow{12}{*}[1.5ex]{1}
& 25 & 5\% & 5\%
& 0.54
& \foserlhi{0.31}
& \rszero{0.0000}
& 0.3100
\\

&
&
& 25 & 5\% & 10\%
& 0.68
& 0.48
& 0.0024
& 0.4776
\\

&
&
& 25 & 5\% & 20\%
& 0.82
& 0.72
& 0.0032
& 0.7168
\\

\ktline

&
&
& 50 & 10\% & 5\%
& 0.44
& \foserlhi{0.20}
& \rszero{0.0000}
& 0.2000
\\

&
&
& 50 & 10\% & 10\%
& 0.68
& 0.35
& 0.0017
& 0.3483
\\

&
&
& 50 & 10\% & 20\%
& 0.68
& 0.60
& 0.0023
& 0.5977
\\

\ktline

&
&
& 100 & 20\% & 5\%
& 0.22
& \foserlhi{0.19}
& \rszero{0.0000}
& 0.1900
\\

&
&
& 100 & 20\% & 10\%
& 0.34
& 0.25
& 0.0012
& 0.2488
\\

&
&
& 100 & 20\% & 20\%
& 0.58
& 0.46
& 0.0016
& 0.4584
\\

\ktline

&
&
& 150 & 30\% & 5\%
& 0.17
& \foserlhi{0.13}
& \rszero{0.0000}
& 0.1300
\\

&
&
& 150 & 30\% & 10\%
& 0.21
& 0.14
& 0.0010
& 0.1390
\\

&
&
& 150 & 30\% & 20\%
& 0.32
& 0.22
& 0.0013
& 0.2187
\\

\cmidrule(lr){2-10}


&
\multirow{8}{*}[1.5ex]{TD3}
&
\multirow{8}{*}[1.5ex]{1.5}
& 25 & 5\% & 15\%
& 1.16
& \foserlhi{0.95}
& \rszero{0.0000}
& 0.6333
\\

&
&
& 25 & 5\% & 20\%
& 1.16
& 1.00
& 0.0007
& 0.6662
\\

\ktline

&
&
& 50 & 10\% & 15\%
& 0.71
& \foserlhi{0.65}
& \rszero{0.0000}
& 0.4333
\\

&
&
& 50 & 10\% & 20\%
& 0.85
& 0.75
& 0.0005
& 0.4997
\\

\ktline

&
&
& 100 & 20\% & 15\%
& 0.52
& \foserlhi{0.49}
& \rszero{0.0000}
& 0.3267
\\

&
&
& 100 & 20\% & 20\%
& 0.71
& 0.63
& 0.0004
& 0.4197
\\

\ktline

&
&
& 150 & 30\% & 15\%
& 0.44
& \foserlhi{0.35}
& \rszero{0.0000}
& 0.2333
\\

&
&
& 150 & 30\% & 20\%
& 0.62
& 0.45
& 0.0003
& 0.2998
\\

\end{longtable}

\endgroup

\paragraph{Qualitative comparison.}
The quantitative separation in
Table~\ref{tab:app-foserl-rs-comparison} is also visible in representative
closed-loop rollouts. Figures~\ref{fig:qual-ip} and~\ref{fig:qual-mujoco}
show nominal executions, executions at FoSeRL-certified budgets, and attacks
beyond those certified budgets for Inverted Pendulum, Hopper, and
HalfCheetah. At the reported FoSeRL budgets, the illustrated degradation
traces remain below the prescribed terminal tolerance, whereas the larger
uncertified perturbations produce substantially greater degradation and, in
the Hopper example, early collapse. The corresponding PS certificates remain
confined to much smaller perturbation magnitudes, consistent with the
quantitative comparison above.

\begin{figure}[t]
  \centering
  \includegraphics[width=0.92\textwidth]{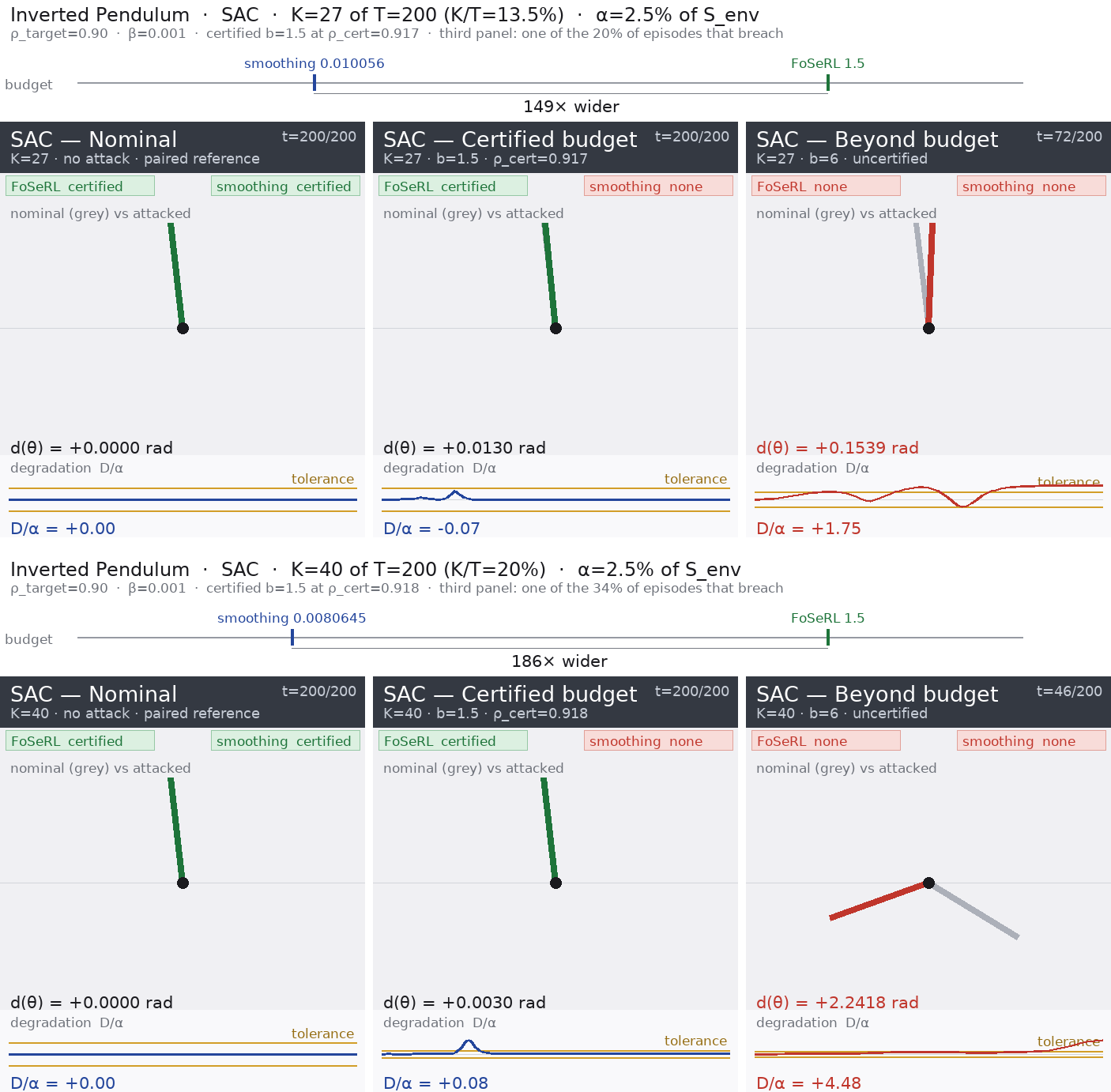}
  \caption{Inverted pendulum (SAC), at $K=27$ (top block) and $K=40$ (bottom),
  $\alpha=2.5\%$ of $S_{\mathrm{env}}$, $\rho_{\mathrm{target}}=0.90$,
  $\beta=10^{-3}$. Each block shows the nominal rollout, the rollout at FoSeRL's
  certified budget, and a rollout beyond it, with the attacked pole in colour
  against the nominal in grey. The budget bar contrasts the two certificates;
  note the scopes differ, smoothing being worst-case over
  $\|\delta\|_2\le b\sqrt{n_aK}$ and FoSeRL distributional under the declared
  attack law. $D/\alpha$ is signed, so a marginally negative value ($-0.07$, top
  middle) indicates an attack realisation that slightly helped that episode. The
  third panel of each block is one of the $20\%$ ($K=27$) and $34\%$ ($K=40$) of
  episodes that breach beyond the certified budget; breaching outside the
  certified region is the certificate behaving as specified.}
  \label{fig:qual-ip}
\end{figure}

\begin{figure}[t]
  \centering
  \begin{subfigure}{\textwidth}
    \centering
    \includegraphics[width=\textwidth]{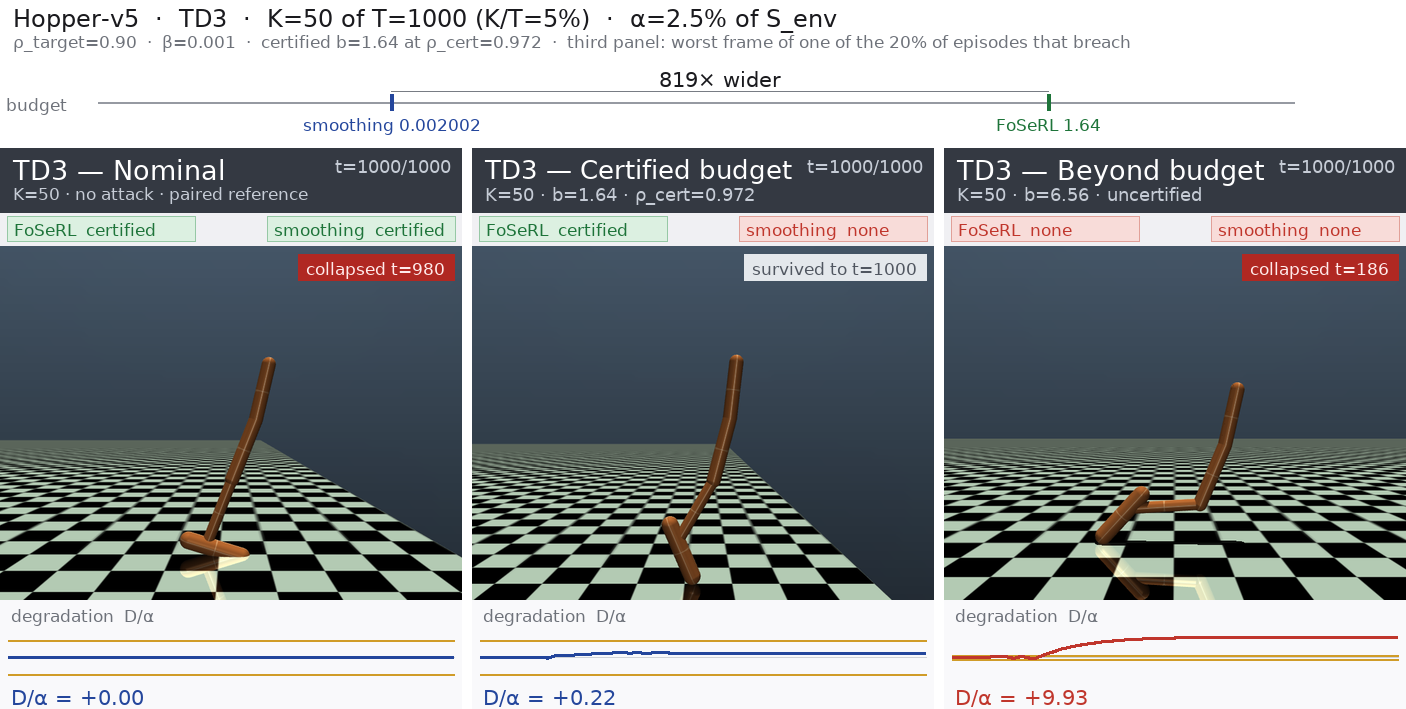}
    \caption{Hopper-v5 (TD3), $K=50$ of $T=1000$, $\alpha=2.5\%$; certified
    $b=1.64$ at $\rho_{\mathrm{cert}}=0.972$, against a smoothing budget of
    $2.0\times10^{-3}$.}
    \label{fig:qual-hopper}
  \end{subfigure}\\[6pt]
  \begin{subfigure}{\textwidth}
    \centering
    \includegraphics[width=\textwidth]{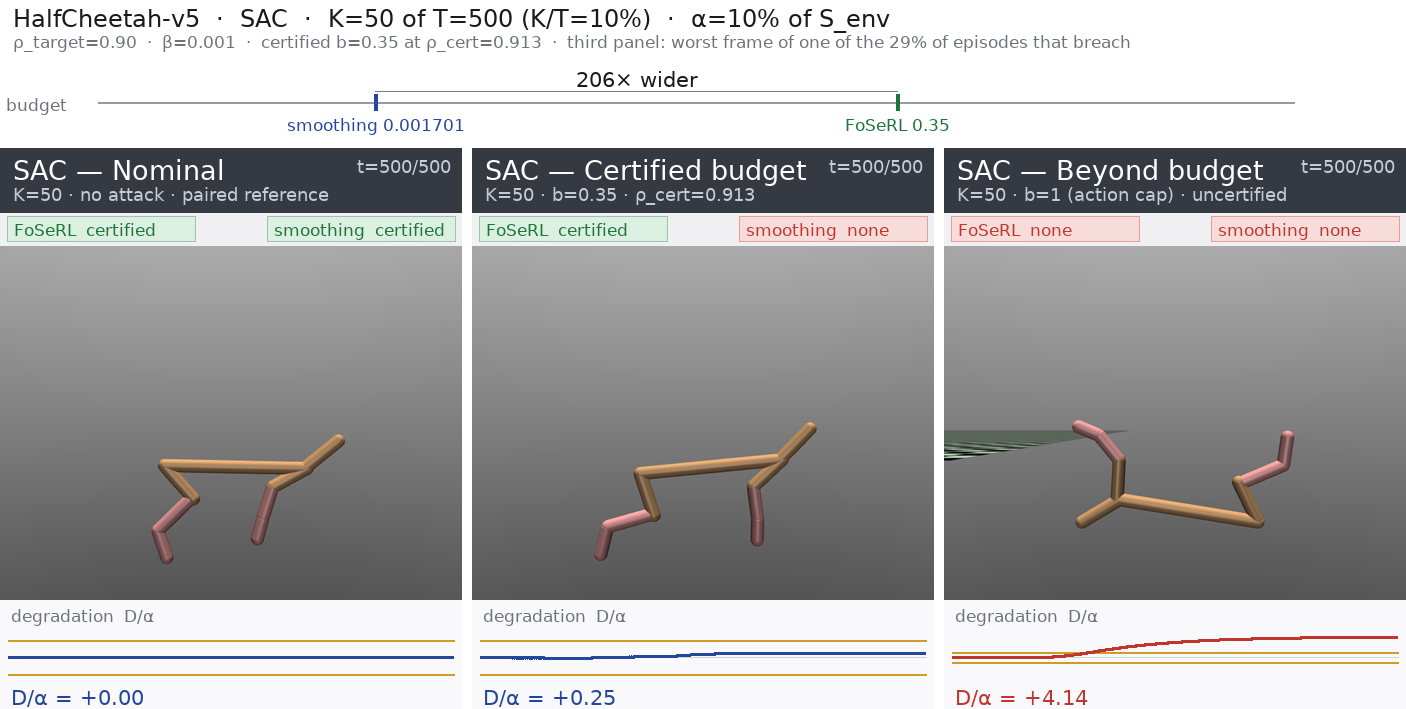}
    \caption{HalfCheetah-v5 (SAC), $K=50$ of $T=500$, $\alpha=10\%$; certified
    $b=0.35$ at $\rho_{\mathrm{cert}}=0.913$, against a smoothing budget of
    $1.7\times10^{-3}$.}
    \label{fig:qual-hc}
  \end{subfigure}
  \caption{Certified and uncertified regimes on MuJoCo locomotion, the setting in
  which neither \citet{kumar2022policy} nor \citet{mu2024reward} report results.
  Panels show the nominal rollout, the rollout at FoSeRL's certified budget, and
  a rollout beyond it, with the running degradation $D/\alpha$ beneath each.
  Because the two branches share the noise realisation but diverge after the
  first perturbation, their termination times differ: in
  \subref{fig:qual-hopper} the nominal episode terminates at $t=980$ while the
  attacked one at the certified budget survives to the horizon, and beyond the
  budget it collapses at $t=186$. Third panels are drawn from the $20\%$
  (Hopper) and $29\%$ (HalfCheetah) of episodes that breach outside the certified
  region.}
  \label{fig:qual-mujoco}
\end{figure}

\paragraph{Scope of the comparison.}
These visualizations should be interpreted together with the differing
certificate scopes. Policy smoothing provides a worst-case guarantee over its
corresponding $\ell_2$ perturbation set, whereas the data-driven FoSeRL
certificate is distributional under the declared attack law. The
beyond-certificate panels are deliberately selected from episodes that violate
the degradation tolerance outside the certified region; they illustrate the
behavior beyond the guarantee rather than a failure of the certificate.

\FloatBarrier

\newpage
\section{Additional Discussion}
\label{sec:discussion}
\vspace{-0.3cm}
This section expands on Sec.~\ref{sec:main-discussion}, explaining the
long-horizon verification design, the information supplied by the barrier
witness, and the interpretation and refinement of the resulting certificates.
Technical boundary cases and sampling requirements are collected in
Appendix~\ref{app:formulation}; literature positioning and external-baseline
selection are discussed separately in Appendix~\ref{app:rel}.

\subsection{From training-time robustness to long-horizon certification}
\label{app:discussion-training}
The framework of \citet{taheri2026probabilistic} models stochastic training
as a dynamical system and certifies the accuracy of its terminal parameters
under data perturbations, already incorporating a training horizon and a
time-dependent barrier. FoSeRL changes both the intervention and the
certified object: the policy is already fixed, perturbations affect its
executed actions, and performance is measured by the accumulated loss
relative to a matched nominal episode. The paired-state construction and
return-gap coordinate make this sequential objective a terminal safety
property, while the separate cardinality and magnitude budgets capture the
temporal allocation of attack resources.
Long horizons also require controlling the accumulation of local drift
allowances. With the earlier linear term $b_3T$, a verification correction
$\eta_3$ contributes $\eta_3T/b_2$ to the failure bound; for example,
$\eta_3/b_2=10^{-3}$ already contributes one at $T=1000$, exceeding the
failure allowance $1-\rho=0.1$. FoSeRL retains the geometric accumulation
$\cG_T(\kappa)$ and calibrates the residual allowance using the final
reliability bound. For $\kappa<1$,
$\cG_T(\kappa)\le\min\{T,(1-\kappa)^{-1}\}$, avoiding unnecessary linear
accumulation without removing the horizon dependence of the problem
(Appendix~\ref{app:verification-planning}). Nor can $T$ simply be replaced
by $K$: an altered state can affect later decisions and rewards after the
last attacked action.

\subsection{Episode-level calibration and statistical validity}
\label{app:discussion-calibration}
FoSeRL calibrates complete episodes rather than treating their dependent
transitions as independent observations. The score in
Definition~\ref{def:score} aggregates initial and drift residuals while
retaining terminal separation as a hard condition. All calibration episodes
remain in the sample, including those assigned infinite scores because of
unsafe-terminal separation violations. No parametric score distribution is
required; validity instead relies on independent episodes and fresh
conditional successor samples.
Calibration balances the propagated residual margin against the probability
of episode-level condition violations. Simultaneous validity over ranks
permits selecting this balance from the calibration data without an
additional rank-selection penalty. The finite-successor bound then connects
the sampled drift residuals to their exact counterparts: conditional on a
violating episode, its first exact drift violation is detected with
probability at least $1-\beta_0$. This yields the transfer penalty
$\bar v_{k^*}/(1-\beta_0)$ without a union bound over time. Combined with
the good-prefix barrier argument, these steps establish the end-to-end
operational guarantee of Theorem~\ref{thm:conformal}, rather than only
marginal coverage of a score.

\subsection{Reliability, confidence, and violation levels}
\label{app:discussion-probability-levels}
The target $\rho$ concerns a fresh attack--episode draw: it is the required
probability of satisfying $\Dtraj\le\alpha$ under $\QQ_{K,b}$. The reported
$\rho_{\mathrm{cert}}$ is the lower bound in
\eqref{eq:app-rho-cert}, and acceptance requires
$\rho_{\mathrm{cert}}\ge\rho$. In contrast, $\beta$ controls uncertainty
over calibration data: conditional on the frozen specification,
Theorem~\ref{thm:conformal} bounds false acceptance by $\beta$.
The auxiliary $\beta_0$ controls the one-sided successor approximation at a
fixed state--attack--time tuple through Lemma~\ref{lem:finiteM}. Its effect
is incorporated into the transfer penalty, not deducted again from the
outer confidence, which remains $1-\beta$. Finally, $\bar v_{k^*}$ bounds
out-of-sample violations of the selected empirical episode conditions; it is
neither an empirical failure frequency nor the terminal-degradation risk
itself. The latter also includes the dynamical contribution, as made
explicit by the Base--Margin--Bridge decomposition in
\eqref{eq:app-cert-components}--\eqref{eq:app-rho-cert}.

\subsection{Barrier witnesses versus direct terminal certification}
\label{app:discussion-direct}
\textbf{FoSeRL provides a dynamical witness in addition to a scalar
robustness guarantee.} Direct terminal certification applies a one-sided
binomial bound to independent observations of the same paired terminal
event under the same operational specification. It is a useful formal
reference when the required output is only the terminal probability, and
can yield larger certified budgets with less computation because it does
not require barrier synthesis or conditional successor queries. FoSeRL's
additional contribution is the structured certificate supporting that
probability and the analysis it enables.

\textbf{State--time structure.}
The witness $B_{\phi^*}(\xhat,t)$ links initial conditions, expected one-step
evolution, and terminal separation through the conditions of
Definition~\ref{def:barrier}. These relations make the dynamics part of
the certification argument, rather than reducing each episode solely to a
terminal success indicator. The same structure underlies the ideal
uniform guarantee and its finite-sample operational counterpart, with their
respective scopes stated in Theorems~\ref{thm:ideal}
and~\ref{thm:conformal}.

\textbf{Condition-specific diagnosis and refinement.}
The residuals in Definition~\ref{def:residuals} distinguish initial-value,
terminal-separation, and drift violations, including the sampled phases at
which drift conditions are difficult to satisfy. The reliability
decomposition further separates the frozen witness contribution from
calibration slack and episode-level violations. This provides concrete
directions for refinement: concentrated late-horizon drift violations can
motivate targeted synthesis data or time-dependent features, whereas poor
terminal separation calls for improving the terminal witness. The scalar
binomial certificate alone does not provide this condition-based
decomposition. These diagnostics concern the witness, not certified
conditional failure probabilities at arbitrary states.

In Table~\ref{tab:main-baseline-results}, FoSeRL retains approximately
$71$--$99\%$ of Direct's certified magnitude across the representative
settings while additionally returning this state--time witness. Thus, the
comparison is not between formal and empirical validity: both methods
provide formal operational guarantees. It is a trade-off between scalar
budget tightness and computation on one side, and a dynamically structured
certificate with targeted diagnostic value on the other. When the purpose
includes understanding and refining the certifying conditions, the barrier
provides information beyond the terminal bound itself.

\subsection{Generality, simulator access, and attack coverage}
\label{app:discussion-scope}
FoSeRL separates policy evaluation from policy construction. Under the
stated execution and sampling assumptions, its verification argument does
not require a particular learning algorithm, policy architecture,
differentiable controller, or analytical dynamics family. The deployed
policy remains unchanged, including its chosen randomization mode; the
learned barrier is an auxiliary witness, not a replacement controller.
This generality uses a concrete interface: synchronously coupled rollouts
and repeatable conditional successor queries, with fresh complete one-step
randomness regenerated as specified in Sec.~\ref{sec:datadriven}.
The resulting guarantee concerns the execution process being sampled.
The distinction from offline RL and evaluation from single-branch logs is
discussed in Appendix~\ref{app:rel}.
The operational attack law need not be uniform. It may emphasize difficult
precommitted plans or design-selected mixtures, provided it is fixed before
calibration and attacks remain independent of the initial state and episode
randomness as required by Definition~\ref{def:jointlaws}.
Theorem~\ref{thm:conformal} certifies reliability under this law; uniform
coverage of all admissible precommitted attacks instead follows from the
ideal barrier conditions. State-adaptive adversaries require an extension
of the present attack model.

\subsection{Reading robustness frontiers and selecting specifications}
\label{app:discussion-frontiers}
The robustness profiles expose how attack frequency and magnitude interact
with the allowed return loss. Their interpretation retains the declared
policy execution, attack channel and norm, horizon, coupling, and
statistical target. The PS comparison in Appendix~\ref{app:rs-comparison}
also retains the distinction between operational and worst-case attack
coverage. Normalization by $S_{\mathrm{env}}$ supplies a common reporting
scale within an environment, not a percentage of each policy's nominal
return or a universal ranking across tasks.
The empirical design budget supports candidate construction, whereas an
accepted operational budget is supported by independent verification.
Neither is asserted to be the optimal robustness radius. The ideal frontier
has the nesting properties of Appendix~\ref{app:frontier-monotonicity};
operational certificates do not automatically inherit them when changing
the budget changes the attack law. Connecting lines therefore do not
certify interpolation or untested specifications.
New specifications can nevertheless be added incrementally without
predeclaring the entire future experiment family. Each frozen specification
receives its own audit; selecting a budget after exploration requires fresh
final calibration or appropriate selection-aware confidence control.
Appendix~\ref{app:collection-confidence} gives the finite-collection and
sequential protocols, including accounting for repeated tests and stopping.

\subsection{Performance robustness versus pathwise safety}
\label{app:discussion-performance}
Paired degradation complements nominal performance by measuring the loss
relative to the same episode without an attack. Shared randomness does not
force equal actions or states after a perturbation: each branch follows its
own closed-loop evolution, preserving the attack's subsequent consequences.
The running gap can increase or recover because individual reward
differences can have either sign. Accordingly, crossing the tolerance before
$T$ is not itself terminal failure, and the maximum in the verification
score aggregates residuals rather than return loss.
This formulation certifies cumulative performance without imposing an
additional pathwise constraint. Collision avoidance or all-time state
safety can be studied alongside it, but requires an explicit safety event
and corresponding verification conditions. A safety-violation indicator in
the augmented state offers one route for such an extension.

\subsection{Certificate refinement and policy improvement}
\label{app:discussion-refinement}
Separating construction from verification allows the witness to improve
without changing the meaning of acceptance. Richer features, alternative
bounded architectures, and targeted synthesis data may improve candidate
quality; the synthesis law may emphasize difficult cases rather than match
the operational law. Each candidate and verification design must then be
frozen before fresh independent calibration.
The decomposition in Appendix~\ref{app:certificate-workflow} also supports
targeted resource allocation. The episode count $N$ controls calibration
resolution, whereas the successor count $M$ controls the approximation
correction. The choice of $\kappa$ balances geometric propagation against
the strictness of the local drift inequality. These are distinct design
levers, not interchangeable ways to improve a certificate
(Appendix~\ref{app:verification-planning}). Beyond witness refinement,
residual diagnostics offer a direction for certificate-informed policy
improvement, with each updated policy treated as a new target and
independently re-certified.

\subsection{Limitations and outlook}
\label{app:discussion-limitations}
FoSeRL's simulation-based interface supports broad applicability but
requires repeated coupled rollouts and conditional successor queries.
Long horizons and expensive policy or simulator evaluations can therefore
increase computational cost, as quantified in
Appendix~\ref{app:verification-planning}. The reported case studies use
the original execution process without surrogate-assisted synthesis. For
larger-scale applications, lightweight surrogates could reduce
candidate-construction cost by generating synthesis rollouts. The candidate
would still be frozen and formally verified using fresh paired episodes
and conditional successors from the original execution process. Provided
the verification assumptions and confidence protocol are retained,
surrogate accuracy would affect candidate quality and certifiability, not
the stated validity of an accepted certificate. Such acceleration would
concern synthesis; formal verification would retain its original-process
sampling requirements.
The sufficient barrier conditions also trade some budget tightness for a
state--time witness; their conservatism is discussed in
Appendix~\ref{app:common-barrier-limitation}. These computational and
tightness considerations accompany the additional certificate structure,
while policy and environment generality hold within the execution, sampling,
and attack-coverage scope specified in
Appendix~\ref{app:discussion-scope}. The separation between synthesis and
verification provides a route to improving efficiency and witness quality
while retaining independent formal validation.

\section{Positioning, Baseline Selection, and Extended Comparisons}
Here we specify what FoSeRL certifies and how that object relates to
existing guarantees. We first place the relevant literature on the axes that
distinguish certified quantities from one another, and separate the methods that
share our problem formulation from those that solve a different one
(Sec.~\ref{app:rel}). We then state the criteria a method must meet to serve as a
quantitative baseline here, apply them, and explain why exactly one external
method qualifies (Sec.~\ref{app:why-smoothing}). Finally, we record the regime in
which that baseline has previously been evaluated, which differs from the
higher-dimensional, longer-horizon settings considered in our experiments
(Sec.~\ref{app:regime}).
\label{app:rel}
\subsection{Positioning}
Here we illustrate, using Table~\ref{tab:related} the exact characterization of our problem and what it certifies.
\begin{table}[ht]

\centering
\scriptsize

\begin{tabular}{@{}p{3.2in}cccccccc@{}}
\toprule
\textbf{Method} & \hd{Formal} & \hd{Channel} & \hd{Budget} & \hd{Fixed $\pi$} & \hd{Level} & \hd{Form} & \hd{Paired} & \hd{Resolved} \\
\midrule
\multicolumn{9}{@{}l}{\textit{Attacks on deployed policies}}\\
FGSM on policies \citep{huang2017adversarial}                & \No & obs & $\ell_p$ & \Y & ep & --- & \No & \No \\
Strategically-timed \citep{lin2017tactics}                    & \No & obs & $(K,b)$  & \Y & ep & --- & \No & \No \\
Critical-point \citep{sun2020stealthy}                        & \No & obs & $(K,b)$  & \Y & ep & --- & \No & \No \\
MAS action attack \citep{lee2020spatiotemporal}               & \No & act & $(K,b)$  & \Y & ep & --- & \No & \No \\
\addlinespace
\multicolumn{9}{@{}l}{\textit{Robust training: hardens a policy, certifies none}}\\
RARL \citep{pinto2017robust}                                  & \No & dyn & set      & \No & ep & --- & \No & \No \\
NR-/PR-MDP \citep{tessler2019action}                          & \No & act & $\ell_p$ & \No & ep & --- & \No & \No \\
SA-MDP \citep{zhang2020robust}                                & \Part & obs & $\ell_p$ & \No & ep & mean & \No & \No \\
ATLA \citep{zhang2021robust}                                  & \No & obs & $\ell_p$ & \No & ep & --- & \No & \No \\
WocaR \citep{liang2022wocar}                                  & \Part & obs & $\ell_p$ & \No & ep & mean & \No & \No \\
RADIAL \citep{oikarinen2021radial}                            & \Part & obs & $\ell_p$ & \No & st & wc & \No & \No \\
\addlinespace
\multicolumn{9}{@{}l}{\textit{Certified robustness for RL}}\\
CARRL \citep{everett2021certifiable}                          & \Y & obs & $\ell_p$ & \No & st & wc   & \No & \Y \\
Policy smoothing \citep{kumar2022policy}                      & \Y & obs & $\ell_2$ & \No & ep & mean & \No & \No \\
CROP \citep{wu2022crop}                                       & \Y & obs & $\ell_2$ & \No & ep & mean & \No & \No \\
COPA \citep{wu2022copa}                                       & \Y & data & $\ell_0$& \No & ep & wc   & \No & \No \\
ReCePS \citep{mu2024reward}                                   & \Y & act & $\ell_p$ & \No & ep & mean & \No & \No \\
CAROL \citep{yang2024carol}                                   & \Y & obs & $\ell_p$ & \No & ep & wc   & \No & \No \\
\addlinespace
\multicolumn{9}{@{}l}{\textit{Guarantees without an adversary model}}\\
Robust MDPs \citep{iyengar2005robust,nilim2005robust}         & \Y & kern & set     & \No & ep & wc   & \No & \Y \\
Offline RL / OPE \citep{levine2020offline}                    & \No & --- & ---      & \Y & ep & mean & \No & \No \\
HCOPE \citep{thomas2015high}                                  & \Y & --- & ---      & \Y & ep & mean & \No & \No \\
Statistical model checking \citep{legay2010statistical}       & \Y & --- & ---      & \Y & ep & tail & \No & \No \\
\addlinespace
\multicolumn{9}{@{}l}{\textit{Barrier certificates}}\\
Stochastic barriers \citep{prajna2007framework,steinhardt2012finite} & \Y & --- & --- & \Y & ep & tail & \No & \Y \\
Neural synthesis \citep{abate2021fossil,anand2023formally}    & \Y & --- & ---      & \Y & ep & tail & \No & \Y \\
CRABS \citep{luo2021crabs}                                    & \Y & dyn & set      & \No & ep & wc   & \No & \Y \\
\addlinespace
\multicolumn{9}{@{}l}{\textit{Static robustness certification (parent framework)}}\\
Poisoning certificates \citep{taheri2026probabilistic}        & \Y & data & $\ell_p$& \Y & --- & tail & \Y & \No \\
\midrule
\textbf{FoSeRL} (this paper)                                  & \Y & \textbf{act} & $\mathbf{(K,b)}$ & \Y & ep & \textbf{tail} & \Y & \Y \\
\bottomrule

\end{tabular}
\caption{Related work, and where FoSeRL differs. \textbf{Formal}: a mathematical guarantee rather than an empirical rate (\textcolor{gray}{$\circ$}: formal bounds used internally, empirical robustness reported). \textbf{Channel}: what the adversary perturbs --- \emph{act}=executed actions, \emph{obs}=observations, \emph{data}=training data, \emph{dyn}=dynamics/disturbance, \emph{kern}=transition kernel. \textbf{Budget}: geometry of the attack set --- $(K,b)$=cardinality \emph{and} magnitude, $\ell_p$=norm ball at every step, $\ell_0$=count only, \emph{set}=uncertainty set. \textbf{Fixed $\pi$}: the guaranteed object is the deployed policy itself, not a smoothed, retrained, robust-argmax or aggregated surrogate. \textbf{Level}: \emph{ep}=whole episode, \emph{st}=per state. \textbf{Form}: \emph{tail}=$\Pr[\cdot\le\alpha]\ge\rho$, \emph{mean}=expected return, \emph{wc}=worst case. \textbf{Paired}: degradation measured against the same episode/model unattacked. \textbf{Resolved}: the output is a function of state (and time), not a scalar. To the best of our knowledge, the cardinality--magnitude budget appears in
the \emph{attack} literature but in no prior \emph{certificate}. For FoSeRL, this denotes a state--time witness and diagnostics,
not certified conditional reliability at each state. \vspace{-0.5cm}}
\label{tab:related}
\end{table}

\paragraph{Relation to offline reinforcement learning.}
FoSeRL shares surface features with offline RL \citep{levine2020offline}: the
policy is fixed and no improvement is performed. The statistical problem differs,
however. Offline RL evaluates or improves a target policy from trajectories
logged by a potentially different behaviour policy, which makes distribution
shift, importance weighting, and coverage its central concerns
\citep{precup2000eligibility,chen2019information}. We instead assume generative
access and collect fresh coupled executions of the certified policy under the
declared attack law, so no off-policy correction or concentrability assumption
enters the analysis. The gap is sharper for the paired objective. A
single-branch nominal log does not contain the attacked counterfactual under the
same realization of the episode randomness, and for deterministic policies
trajectory-wise importance sampling degenerates once the attacked execution
leaves the support of the nominal behaviour. Recovering the paired quantity from
logs alone would therefore require the realized randomness together with a
simulator, or a learned model able to generate the counterfactual branch. In this
respect FoSeRL is closer to high-confidence policy evaluation
\citep{thomas2015high,thomas2019preventing}, though it returns a state- and
time-dependent certificate for a degradation tail event rather than a scalar
bound on policy value. One connection does carry over. Because the barrier is
synthesized and calibrated from sampled coupled trajectories, pointwise readings
of it outside regions the operational data represent are extrapolation, echoing
the support concerns of offline RL \citep{fujimoto2019off}. This does not affect
the distributional certificate, which concerns fresh draws from the declared law,
but it bounds how the barrier should be read away from that coverage. Pessimistic
regularization \citep{kumar2020conservative,yu2020mopo}, learned-model routes to
certification from deployment logs \citep{yu2020mopo,kidambi2020morel}, and
certificate-aware policy improvement in the spirit of safe policy improvement
\citep{laroche2019safe} are natural directions, and lie outside the present
scope.
\subsection{Why policy smoothing is the only external baseline}
\label{app:why-smoothing}

A method qualifies as a quantitative baseline for FoSeRL only if it certifies the
\emph{same object under the same threat}: (i) a formal guarantee rather than an
empirical robustness rate; (ii) on the per-trajectory event
$\{D_{\mathrm{traj}}\le\alpha\}$ rather than on an expectation or a per-state
property; (iii) under perturbations of the executed action with a cardinality
budget; and (iv) for the policy as deployed. Applying these four criteria to the
literature of Table~\ref{tab:related} leaves exactly one method.

\paragraph{Gaussian policy smoothing (comparable).}
When the smoothing noise occupies the same channel the attack perturbs, the
attacked branch is the nominal closed loop driven by shifted noise, and a
Neyman--Pearson shift bound certifies the same terminal event we do. It is
therefore our external baseline. Its guarantee is in one respect
\emph{stronger} than ours: it holds worst-case over every $\delta$ with
$\|\delta\|_2 \le b\sqrt{n_aK}$, a set strictly containing our attack class,
whereas our data-driven certificate holds under the declared attack law. The
comparison is accordingly a budget--scope trade, and the certified budgets differ
by $140\times$--$1575\times$ in the comparable cells.

\paragraph{Direct terminal-event certification (internal reference).}
Direct applies a one-sided Clopper--Pearson bound to the paired terminal
event under the same operational specification as FoSeRL. It provides a
formal scalar reference without barrier synthesis or conditional successor
queries. Appendix~\ref{app:discussion-direct} explains the additional
state--time witness and condition-specific diagnostics supplied by FoSeRL.

\paragraph{Methods excluded, and why.}
\emph{Per-state smoothing} certifies action agreement at individual observations
and makes no statement about accumulated degradation, so no budget of ours maps
onto it. \emph{ReCePS} bounds $\mathbb{E}[J]$ rather than a per-trajectory event
and states its guarantee in observation space; a comparison would require
transplanting $B=b\sqrt{n_aK}$ onto the action channel, and our reimplementation
yields smoothed-return evaluations but no certified bound. \emph{COPA} certifies
against training-data poisoning, a different attack phase entirely.
\emph{Robust-training methods} (RARL, NR-/PR-MDP, ATLA, WocaR, RADIAL) harden a
policy but emit no certificate for a given one, and change the policy in the
process, violating (i) and (iv). \emph{Robust MDPs} synthesise a policy under
kernel ambiguity rather than auditing a fixed one. \emph{Stochastic and neural
barrier certificates} supply our machinery but bound state-safety events, not
adversarial performance loss. Reporting numbers from any of these against ours
would compare different certified quantities, not different certification methods.

\begin{table}[t]
\centering
\caption{Related formal robustness and certification frameworks, compared against FoSeRL.
All three provide formal (high-probability) guarantees; they differ in what is certified and
under which perturbation model. Methods with a different problem formulation are listed for
completeness but are not used as quantitative baselines.\vspace{-0.2cm}}
\label{tab:related_formal_methods}
\scriptsize
\setlength{\tabcolsep}{3pt}
\renewcommand{\arraystretch}{1.15}
\begin{tabular}{@{} p{2.5cm} p{4.5cm} p{4.5cm} p{1.3cm} @{}}
\toprule
Method & Problem formulation and certified property &
Attack model and main assumptions & Comparable to FoSeRL? \\
\midrule

Per-state policy smoothing &
Per-decision robustness of the policy map: certifies action agreement at individual
observations, with no statement about accumulated trajectory degradation &
$\ell_2$ ball applied per observation; each visited state certified independently, with
fresh sampling at every point and no coupling across timesteps &
No \\
\addlinespace

Gaussian policy smoothing &
Sequential robustness under actuator-channel perturbation: certifies the same event as
FoSeRL, $P(D_{\mathrm{traj}}\le\alpha)\ge\rho$ &
Worst case over all $\|\delta\|_2\le b\sqrt{n_aK}$ via a Neyman--Pearson shift bound, paired
by a union bound. Requires noise $\mathcal{N}(0,\sigma^2I)$ shared by both branches and
entering the \emph{same} channel as the attack; $\tau$ grid fixed on design data &
Yes \\
\addlinespace

ReCePS &
Robustness of expected return: certifies a lower bound on $\mathbb{E}[J]$ rather than a
per-trajectory event &
Observation-space $\ell_2$ ball of radius $B$; guarantee holds on the mean, not per
trajectory. Smoothed policy evaluated at inference &
No \\

\bottomrule
\end{tabular}
\end{table}

\paragraph{Fairness of the smoothing comparison.}
Two conditions bound where the baseline applies, and both are properties of the
environment rather than of our implementation. The noise must occupy the attack's
channel, which fails on ACC, where the process noise acts on the lead vehicle's
speed while the attack perturbs the ego control. And the nominal return
distribution must place sufficient mass in an $\alpha$-wide window, since at
$b=0$ the paired bound collapses to that mass; this is an infinite-data ceiling,
not a sample-size artefact, and it is why Unicycle ($S_{\mathrm{env}}/\sigma=6.45$,
against $217$ on the pendulum) cannot be certified by this baseline at any $n$.
We also verified that the baseline is not sample-starved: with $\beta$ split over
the $\tau$ grid, the Clopper--Pearson cap on $\rho(b{=}0)$ alone manufactures
failures below $n\approx 2048$, so every smoothing number reported here uses
$n=4096$, where that cap is $0.994$ against a $0.90$ target.

\begin{table*}[t]
\centering
\caption{Formal comparison with the one method addressing the same problem formulation as
FoSeRL. All quantities are formally certified rather than empirically estimated, at
$\rho_{\mathrm{target}}=0.90$ and confidence $1-\beta=0.999$ ($\beta=\beta_0=10^{-3}$).
FoSeRL gives a distributional guarantee; smoothing gives a worst-case guarantee over the
$\ell_2$ ball $\|\delta\|_2\le b\sqrt{n_aK}$. One representative cell per environment; the
last column is the certified budget ratio over the reported nonzero-PS rows in
Table~\ref{tab:app-foserl-rs-comparison}\vspace{-0.2cm}}
\label{tab:formal_baselines}
\footnotesize
\setlength{\tabcolsep}{7pt}
\begin{tabular}{@{} l c c c rr rr r @{}}
\toprule
& & & & \multicolumn{2}{c}{FoSeRL} & \multicolumn{2}{c}{Smoothing} & \\
\cmidrule(lr){5-6} \cmidrule(lr){7-8}
Environment & $K$ & $\alpha_{\%}$ & $T$ &
$b_{\mathrm{cert}}$ & $\rho_{\mathrm{cert}}$ &
$b_{\mathrm{cert}}$ & $\rho_{\mathrm{cert}}$ & Ratio \\
\midrule
Pendulum (SAC)    & 27 & 2.5\% & 200  & $1.5$  & $0.9174$ & $1.01\times10^{-2}$ & $0.9000$ & $148.5$--$301.2\times$  \\
Hopper (TD3)      & 50 & 2.5\% & 1000 & $1.64$ & $0.9724$ & $2.00\times10^{-3}$ & $0.9000$ & $512.5$--$1100\times$ \\
HalfCheetah (SAC) & 50 & 10\%  & 500  & $0.35$ & $0.9132$ & $1.70\times10^{-3}$ & $0.9000$ & $140$--$1575\times$ \\
\bottomrule
\end{tabular}
\end{table*}

\subsection{Evaluation regime: where smoothing has been tested, and where it has not}
\label{app:regime}

The smoothing line of work has been evaluated primarily on low-dimensional
classic-control and Atari benchmarks. \citet{kumar2022policy} report Cartpole,
Mountain Car, Pong, and Freeway, using DQN for the discrete tasks and DDPG for
Mountain Car; \citet{mu2024reward} follow a similar evaluation regime on
Cartpole, Mountain Car, and Freeway. The only continuous-action environment
in these evaluations is Mountain Car, with a single actuated dimension.
Neither work reports results on MuJoCo locomotion. This distinction becomes important as the action dimension and attack
cardinality increase. Under the aligned smoothing construction used here, the
trajectory-level perturbation radius scales as $b\sqrt{n_aK}$, so larger
$n_a$ and $K$ make the resulting certified per-step budget increasingly
restrictive. Our Hopper and HalfCheetah experiments therefore evaluate the
baseline in a substantially higher-dimensional and longer-horizon regime than
the settings considered in the original smoothing evaluations.
Representative quantitative and qualitative comparisons in these regimes are
reported in Appendix~\ref{app:rs-comparison}.

\end{document}